\newcommand{\M}{\mathcal{M}}
\newcommand{\R}{\mathbb{R}}
\newcommand{\vect}[1]{{\overset{\rightharpoonup}{#1}}}
\newcommand{\bvect}[1]{\boldsymbol{#1}}
\newtheorem{thm}{Theorem}[section]
\newtheorem{prop}[thm]{Proposition}
\newtheorem{cor}[thm]{Corollary}
\newtheorem*{remark}{Remark}
\newtheorem{lemma}[thm]{Lemma}
\DeclareMathOperator*{\argmax}{arg\,max}
\DeclareMathOperator*{\argmin}{arg\,min}
\DeclareMathOperator{\arccosh}{arccosh}
\DeclareMathOperator{\rank}{rank}
\DeclareMathOperator*{\gyr}{gyr}
\DeclareMathOperator{\proj}{proj}
\newcommand{\Ll}{\mathcal{L}}
\newcommand{\inn}[1]{\left\langle#1\right\rangle}
\newcommand{\paren}[1]{\left(#1\right)}
\newcommand{\brac}[1]{\left\{#1\right\}}
\newcommand{\grad}{\nabla}
\newcommand{\jacob}{\widetilde{\nabla}}
\newcommand{\parderiv}[2]{\frac{\partial #1}{\partial #2}}
\newcommand{\snorm}[1]{\lVert#1\rVert}
\icmltitlerunning{Differentiating through the Fréchet Mean}
\begin{document}

\twocolumn[
\icmltitle{Differentiating through the Fr\'echet Mean}



\icmlsetsymbol{equal}{*}

\begin{icmlauthorlist}
\icmlauthor{Aaron Lou}{equal,co}
\icmlauthor{Isay Katsman}{equal,co}
\icmlauthor{Qingxuan Jiang}{equal,co}
\icmlauthor{Serge Belongie}{co}
\icmlauthor{Ser-Nam Lim}{fb}
\icmlauthor{Christopher De Sa}{co}
\end{icmlauthorlist}

\icmlaffiliation{co}{Department of Computer Science, Cornell University, NY, Ithaca, USA}
\icmlaffiliation{fb}{Facebook AI, NY, New York, USA}

\icmlcorrespondingauthor{Aaron Lou}{al968@cornell.edu}
\icmlcorrespondingauthor{Isay Katsman}{isk22@cornell.edu}
\icmlcorrespondingauthor{Qingxuan Jiang}{qj46@cornell.edu}

\icmlkeywords{hyperbolic neural networks, batch norm}

\vskip 0.3in
]



\printAffiliationsAndNotice{\icmlEqualContribution} 

\begin{abstract}
Recent advances in deep representation learning on Riemannian manifolds extend classical deep learning operations to better capture the geometry of the manifold. One possible extension is the Fr\'echet mean, the generalization of the Euclidean mean; however, it has been difficult to apply because it lacks a closed form with an easily computable derivative. In this paper, we show how to differentiate through the Fr\'echet mean for arbitrary Riemannian manifolds. Then, focusing on hyperbolic space, we derive explicit gradient expressions and a fast, accurate, and hyperparameter-free Fr\'echet mean solver. This fully integrates the Fr\'echet mean into the hyperbolic neural network pipeline. To demonstrate this integration, we present two case studies. First, we apply our Fr\'echet mean to the existing Hyperbolic Graph Convolutional Network, replacing its projected aggregation to obtain state-of-the-art results on datasets with high hyperbolicity. Second, to demonstrate the Fr\'echet mean's capacity to generalize Euclidean neural network operations, we develop a hyperbolic batch normalization method that gives an improvement parallel to the one observed in the Euclidean setting\footnote{Our PyTorch implementation of the differentiable Fr\'echet mean can be found at \href{https://github.com/CUAI/Differentiable-Frechet-Mean}{https://github.com/CUAI/Differentiable-Frechet-Mean}.}.
\end{abstract}

\section{Introduction}
\label{intro}

\begin{figure}[htb!]
    \centering
    \includegraphics[scale=0.5]{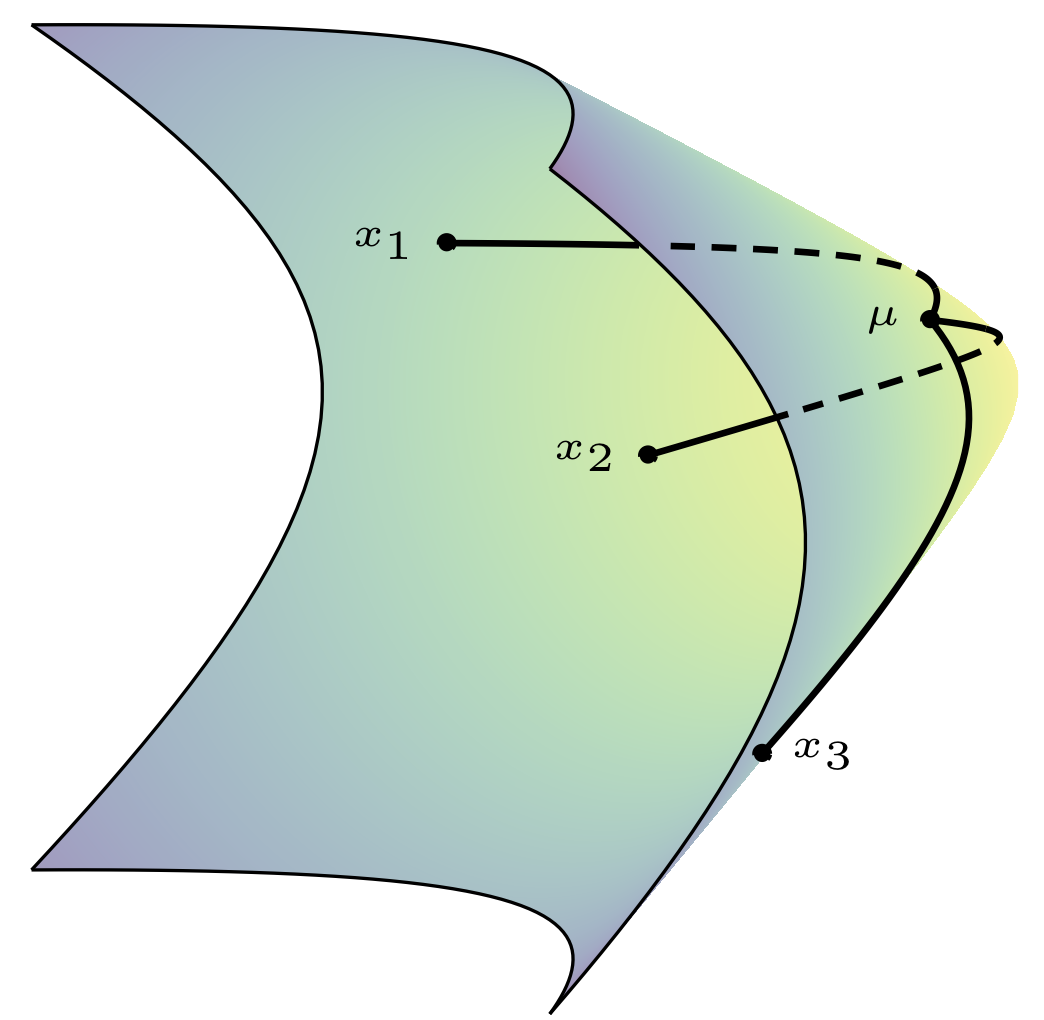}
    \caption{\label{fig:introfigure} Depicted above is the Fr\'echet mean, $\mu$, of three points, $x_1, x_2, x_3$ in the Lorentz model of hyperbolic space. As one can see, the Fr\'echet mean conforms with the geometry of the hyperboloid and is vastly different from the standard Euclidean mean.}
\end{figure}

Recent advancements in geometric representation learning have utilized hyperbolic space for tree embedding tasks \cite{Nickel2017PoincarEF, Nickel2018LearningCH, Yu2019NumericallyAH}. This is due to the natural non-Euclidean structure of hyperbolic space, in which distances grow exponentially as one moves away from the origin. Such a geometry is naturally equipped to embed trees, since if we embed the root of the tree near the origin and layers at successive radii, the geometry of hyperbolic space admits a natural hierarchical structure. More recent work has focused specifically on developing neural networks that exploit the structure of hyperbolic space \cite{Ganea2018HyperbolicNN,  Tifrea2018PoincarGH, Chami2019HyperbolicGC, liu2019hyperbolic}.



A useful structure that has thus far not been generalized to non-Euclidean neural networks is that of the Euclidean mean. The (trivially differentiable) Euclidean mean is necessary to perform aggregation operations such as attention \cite{Vaswani2017AttentionIA}, and stability-enhancing operations such as batch normalization \cite{Ioffe2015BatchNA}, in the context of Euclidean neural networks. The Euclidean mean extends naturally to the Fr\'echet mean in non-Euclidean geometries \cite{frechet1948elements}. However, unlike the Euclidean mean, the Fr\'echet mean does not have a closed-form solution, and its computation involves an argmin operation that cannot be easily differentiated. This makes the important operations we are able to perform in Euclidean space hard to generalize to their non-Euclidean counterparts. In this paper, we extend the methods in \citet{Gould2016OnDP} to differentiate through the Fr\'echet mean, and we apply our methods to downstream tasks. Concretely, our paper's contributions are that:



\begin{itemize}
    \item We derive closed-form gradient expressions for the Fr\'echet mean on Riemannian manifolds.

    \item For the case of hyperbolic space, we present a novel algorithm for quickly computing the Fr\'echet mean and a closed-form expression for its derivative.

    \item We use our Fr\'echet mean computation in place of the neighborhood aggregation step in Hyperbolic Graph Convolution Networks \cite{Chami2019HyperbolicGC} and achieve state-of-the-art results on graph datasets with high hyperbolicity.

    \item We introduce a fully differentiable Riemannian batch normalization method which mimics the procedure and benefit of standard Euclidean batch normalization.
\end{itemize}

\section{Related Work} 

\textbf{Uses of Hyperbolic Space in Machine Learning.} The usage of hyperbolic embeddings first appeared in \citet{Kleinberg2007GeographicRU}, in which the author uses them in a greedy embedding algorithm. Later analyses by \citet{SarkarLowDistortionDelauneyEmbeddings} and \citet{Sala2018RepresentationTF} demonstrate the empirical and theoretical improvement of this approach. However, only recently in \citet{Nickel2017PoincarEF, Nickel2018LearningCH} was this method extended to machine learning. Since then, models such as those in \citet{Ganea2018HyperbolicNN, Chami2019HyperbolicGC, liu2019hyperbolic} have leveraged hyperbolic space operations to obtain better embeddings using a hyperbolic version of deep neural networks.

\textbf{Fr\'echet Mean.} The Fr\'echet mean \cite{frechet1948elements}, as the generalization of the classical Euclidean mean, offers a plethora of applications in downstream tasks. As a mathematical construct, the Fr\'echet mean has been thoroughly studied in texts such as \citet{karcher1977riemannian, charlierbenjfrechet, bacak2014computing}.

However, the Fr\'echet mean is an operation not without complications; the general formulation requires an argmin operation and offers no closed-form solution. As a result, both computation and differentiation are problematic, although previous works have attempted to resolve such difficulties.

To address computation, \citet{gu2018learning} show that a Riemannian gradient descent algorithm recovers the Fr\'echet mean in linear time for products of Riemannian model spaces. However, without a tuned learning rate, it is too hard to ensure performance. \citet{Brooks2019RiemannianBN} instead use the Karcher Flow Algorithm \cite{karcher1977riemannian}; although this method is manifold-agnostic, it is slow in practice. We address such existing issues in the case of hyperbolic space by providing a fast, hyperparameter-free algorithm for computing the Fr\'echet mean.

Some works have addressed the differentiation issue by circumventing it, instead relying on pseudo-Fr\'echet means. In \citet{Law2019LorentzianDL}, the authors utilize a novel squared Lorentzian distance (as opposed to the canonical distance for hyperbolic space) to derive explicit formulas for the Fr\'echet mean in pseudo-hyperbolic space. In \citet{Chami2019HyperbolicGC}, the authors use an aggregation method in the tangent space as a substitute. Our work, to the best of our knowledge, is the first to provide explicit derivative expressions for the Fr\'echet mean on Riemannian manifolds.





\textbf{Differentiating through the argmin.} Theoretical foundations of differentiating through the argmin operator have been provided in \citet{Gould2016OnDP}. Similar methods have subsequently been used to develop differentiable optimization layers in neural networks \cite{amos2017optnet, agrawal2019differentiable}.

Given that the Fr\'echet mean is an argmin operation, one might consider utilizing the above differentiation techniques. However, a na\"{i}ve application fails, as the Fr\'echet mean's argmin domain is a manifold, and \citet{Gould2016OnDP} deals specifically with Euclidean space. Our paper extends this theory to the case of general Riemannian manifolds, thereby allowing the computation of derivatives for more general argmin problems, and, in particular, for computing the derivative of the Fr\'echet mean in hyperbolic space. 


\begin{table*}[!ht]
\centering
\caption{\label{tab:operations} Summary of operations in the Poincar\'e ball model and the hyperboloid model ($K<0$)}
\resizebox{\textwidth}{!}{%
\begin{tabular}{ccc}
\toprule
& \multicolumn{1}{c}{\textbf{Poincar\'e Ball}} & \multicolumn{1}{c}{\textbf{Hyperboloid}} \\
\rule{0pt}{2ex}
\textbf{Manifold} & 
    $\mathbb{D}_{K}^{n}=\{x\in \mathbb{R}^{n}: \langle x, x\rangle_{2}<-\frac{1}{K}\}$ & $\mathbb{H}_K^n = \{x \in \R^{n + 1} : \langle x, x\rangle_{\mathcal{L}}=\frac{1}{K}\}$ \\
\midrule
\textbf{Metric} & $g_{x}^{\mathbb{D}_{K}}=(\lambda_{x}^{K})^{2}g^{\mathbb{E}}$ where  $\lambda_{x}^{K}=\frac{2}{1+K\|x\|_{2}^{2}}$ and $g^{\mathbb{E}} = I$ & $g_x^{\mathbb{H}_K} = \eta$, where $\eta$ is $I$ except $\eta_{0,0} = -1$\\
\midrule
\textbf{Distance} & $d_{\mathbb{D}}^{K}(x, y)=\frac{1}{\sqrt{|K|}}\cosh^{-1}\left(1-\frac{2K\|x-y\|_{2}^{2}}{(1+K\|x\|_{2}^{2})(1+K\|y\|_{2}^{2})}\right)$ & $d_{\mathbb{H}}^{K}(x, y) =\frac{1}{\sqrt{|K|}} \cosh^{-1}(K\langle x, y\rangle_\Ll)$ \\
\midrule
\textbf{Exp map} & $\exp_{x}^{K}(v) = x\oplus_{K}\left(\tanh\left(\sqrt{|K|}\frac{\lambda_{x}^{K}\|v\|_{2}}{2}\right)\frac{v}{\sqrt{|K|}\|v\|_{2}}\right)$ & $\exp_{x}^{K}(v) = \cosh(\sqrt{|K|}||v||_{\Ll})x +  v \frac{\sinh(\sqrt{|K|}||v||_{\Ll})}{\sqrt{|K|}||v||_{\Ll}}$ \\
\midrule
\textbf{Log map} & $\log_{x}^{K}(y)=\frac{2}{\sqrt{|K|}\lambda_{x}^{K}}\tanh^{-1}(\sqrt{|K|}\|-x\oplus_{K} y\|_{2})\frac{-x\oplus_{K} y}{\|-x\oplus_{K} y\|_{2}}$ & $\log_{x}^{K}(y) = \frac{\cosh^{-1}(K\inn{x, y}_{\mathcal{L}})}{\sinh\left(\cosh^{-1}(K\langle x, y\rangle_{\Ll})\right)}(y-K\langle x, y\rangle_{\mathcal{L}}x)$ \\
\midrule
\textbf{Transport} & $PT_{x\rightarrow y}^{K}(v)=\frac{\lambda_{x}^{K}}{\lambda_{y}^{K}}\gyr[y, -x]v$  & $PT_{x\rightarrow y}^{K}(v)= v - \frac{K\langle y, v \rangle_{\Ll}}{1+K\langle x, y\rangle_{\Ll}} (x + y)$ \\
\bottomrule
\end{tabular}}
\end{table*}




\begin{table*}[!ht]
\centering
\caption{\label{table:hypnet} Summary of hyperbolic counterparts of Euclidean operations in neural networks}
\begin{tabular}{cc} 
    \toprule
    \textbf{Operation} & \textbf{Formula} \\
    \midrule
    Matrix-vector multiplication & $A \otimes^{K} x = \exp_{0}^{K}(A \log_{0}^{K}(x))$ \\
    Bias translation & $x\oplus^{K} b=\exp_{x}(PT_{0\rightarrow x}^{K}(b))$ \\
    Activation function & $\sigma^{K_{1}, K_{2}}(x) = \exp_{0}^{K_{1}}(\sigma(\log_{0}^{K_{2}}(x)))$ \\
    \bottomrule
\end{tabular}
\end{table*}

\section{Background}
\label{background}

In this section, we establish relevant definitions and formulas of Riemannian manifolds and hyperbolic spaces. We also briefly introduce neural network layers in hyperbolic space.

\subsection{Riemannian Geometry Background}

Here we provide some of the useful definitions from Riemannian geometry. For a more in-depth introduction, we refer the interested reader to our Appendix \ref{appendix:detailedDG} or texts such as \citet{lee2003introduction} and \citet{lee1997riemannian}.

\textbf{Manifold and tangent space:} An $n$-dimensional manifold $\mathcal{M}$ is a topological space that is locally homeomorphic to $\mathbb{R}^{n}$. The tangent space $T_x\M$ at $x$ is defined as the vector space of all tangent vectors at $x$ and is isomorphic to $\R^n$. We assume our manifolds are smooth, i.e. the maps are diffeomorphic. The manifold admits local coordinates $(x_1, \dots, x_n)$ which form a basis $(d x_1, \dots, d x_n)$ for the tangent space.

\textbf{Riemannian metric and Riemannian manifold:} For a manifold $\mathcal{M}$, a Riemannian metric $\rho=(\rho_{x})_{x\in \mathcal{M}}$ is a smooth collection of inner products $\rho_{x}:T_{x}\mathcal{M}\times T_{x}\mathcal{M} \to \R$ on the tangent space of every $x \in \M$. The resulting pair $(\mathcal{M}, \rho)$ is called a Riemannian manifold. Note that $\rho$ induces a norm in each tangent space $T_x\M$, given by $\norm{\vec{v}}_\rho = \sqrt{\rho_{x}(\vec{v}, \vec{v})}$ for any $\vec{v}\in T_{x}\mathcal{M}$. We oftentimes associate $\rho$ to its matrix form $(\rho_{ij})$ where $\rho_{ij}=\rho(dx_i, dx_j)$ when given local coordinates.

\textbf{Geodesics and induced distance function:} For a curve $\gamma: [a, b] \to \M$, we define the length of $\gamma$ to be $L(\gamma) = \int_a^b \norm{\gamma'(t)}_\rho dt$. For $x, y \in \M$, the distance $d(x, y) = \inf L(\gamma)$ where $\gamma$ is any curve such that $\gamma(a) = x, \gamma(b) = y$. A geodesic $\gamma_{xy}$ from $x$ to $y$, in our context, should be thought of as a curve that minimizes this length\footnote{Formally, geodesics are curves with 0 acceleration w.r.t. the Levi-Civita connection. There are geodesics which are not minimizing curves, such as the larger arc between two points on a great circle of a sphere; hence this clarification is important.}.

\textbf{Exponential and logarithmic map:} For each point $x\in \mathcal{M}$ and vector $\vec{v} \in T_x\M$, there exists a unique geodesic $\gamma: [0, 1] \to \M$ where $\gamma(0) = x, \gamma'(0) = \vec{v}$. The exponential map $\exp_x : T_x \M \to \M$ is defined as $\exp_x(\vec{v}) = \gamma(1)$. Note that this is an isometry, i.e. $\norm{\vec{v}}_\rho=d(x, \exp_x(\vec{v}))$. The logarithmic map $\log_{x}: \mathcal{M}\rightarrow T_{x}\mathcal{M}$ is defined as the inverse of $\exp_{x}$, although this can only be defined locally\footnote{Problems in definition arise in the case of conjugate points \cite{lee1997riemannian}. However, $\exp$ is a local diffeomorphism by the inverse function theorem.}. 

\textbf{Parallel transport:} For $x, y\in \mathcal{M}$, the parallel transport $PT_{x\rightarrow y}: T_{x}\mathcal{M}\rightarrow T_{y}\mathcal{M}$ defines a way of transporting the local geometry from $x$ to $y$ along the unique geodesic that preserves the metric tensors.
 
\subsection{Hyperbolic Geometry Background}

We now examine hyperbolic space, which has constant curvature $K < 0$, and provide concrete formulas for computation. The two equivalent models of hyperbolic space frequently used are the Poincar\'e ball model and the hyperboloid model. We denote $\mathbb{D}_{K}^{n}$ and $\mathbb{H}_{K}^{n}$ as the $n$-dimensional Poincar\'e ball and hyperboloid models with curvature $K<0$, respectively.

\subsubsection{Basic Operations}

\textbf{Inner products:} We define $\inn{x, y}_2$ to be the standard Euclidean inner product and $\inn{x, y}_\mathcal{L}$ to be the Lorentzian inner product $-x_0y_0 + x_1y_1 + \dots + x_ny_n$.

\textbf{Gyrovector operations:} For $x, y\in \mathbb{D}_K^n$, the M\"{o}bius addition \cite{Ungar2009AGS} is
\begin{equation}
    x\oplus_{K}y=\frac{(1-2K\langle x, y\rangle_{2}-K\norm{y}_{2}^{2})x+(1+K\norm{x}_{2}^{2})y}{1-2K\langle x, y\rangle_{2}+K^{2}\norm{x}_{2}^{2}\norm{y}_{2}^{2}}
\end{equation}

This induces M\"{o}bius subtraction $\ominus_K$ which is defined as $x \ominus_K y = x \oplus_K -y$. In the theory of gyrogroups, the notion of the gyration operator \cite{Ungar2009AGS} is given by
\begin{equation}
    \gyr[x, y]v = \ominus_K(x \oplus_K y) \oplus_K (x \oplus_K (y \oplus_K v))
\end{equation}

\noindent \textbf{Riemannian operations on hyperbolic space:} We summarize computations for the Poincar\'e ball model and the hyperboloid model in Table \ref{tab:operations}.




\subsection{Hyperbolic Neural Networks}

Introduced in \citet{Ganea2018HyperbolicNN}, hyperbolic neural networks provide a natural generalization of standard neural networks.

\noindent \textbf{Hyperbolic linear layer:} Recall that a Euclidean linear layer is defined as $f: \R^m \to \R^n$, $f = \sigma(Ax + b)$ where $A \in \R^{n \times m}$, $x \in \mathbb{R}^m$, $b \in \R^n$ and $\sigma$ is some activation function.

With analogy to Euclidean layers, a hyperbolic linear layer $g:\mathbb{H}^{m}\rightarrow\mathbb{H}^{n}$ is defined by  $g=\sigma^{K, K}(A\otimes^{K} x\oplus^{K}b)$, where $A\in \mathbb{R}^{n\times m}$,  $x \in \mathbb{H}^m$, $b\in \mathbb{H}^{n}$, and we replace the operations by hyperbolic counterparts outlined in Table \ref{table:hypnet}.

Hyperbolic neural networks are defined as compositions of these layers, similar to how conventional neural networks are defined as compositions of Euclidean layers.



\section{A Differentiable Fr\'echet Mean Operation for General Riemannian Manifolds}
\label{sec:generalriemann}

In this section, we provide a few theorems that summarize our method of differentiating through the Fr\'echet mean.

\subsection{Background on the Fr\'echet Mean}

\textbf{Fr\'echet mean and variance:} On a Riemannian manifold $(\M, \rho)$, the Fr\'echet mean $\mu_{fr}\in \mathcal{M}$ and Fr\'echet variance $\sigma_{fr}^{2}\in \mathbb{R}$ of a set of points $\mathcal{B}=\{x^{(1)}, \cdots, x^{(t)}\}$ with each $x^{(l)}\in \mathcal{M}$ are defined as the solution and optimal values of the following optimization problem \cite{bacak2014computing}:

\begin{equation}\label{frechetMean}
    \mu_{fr}=\argmin\limits_{\mu\in \mathcal{M}}\frac{1}{t}\sum\limits_{l=1}^{t}d(x^{(l)}, \mu)^{2}
\end{equation}
\begin{equation}\label{frechetVar}
    \sigma_{fr}^{2}=\min\limits_{\mu\in \mathcal{M}}\frac{1}{t}\sum\limits_{l=1}^{t}d(x^{(l)}, \mu)^{2}
\end{equation}


In Appendix \ref{appendix:Gen}, we provide proofs to illustrate that this definition is a natural generalization of Euclidean mean and variance.

The Fr\'echet mean can be further generalized with an arbitrary re-weighting. In particular, for positive weights $\brac{w_{l}}_{l \in [t]}$, we can define the weighted Fr\'echet mean as:
\begin{equation}\label{frechetMeanW}
    \mu_{fr}=\argmin\limits_{\mu\in \mathcal{M}}\sum\limits_{l=1}^{t}w_{l}\cdot d(x^{(l)}, \mu)^{2}
\end{equation}

This generalizes the weighted Euclidean mean to Riemannian manifolds.

\begin{figure}[htb!]
    \centering
    \includegraphics[scale=0.5]{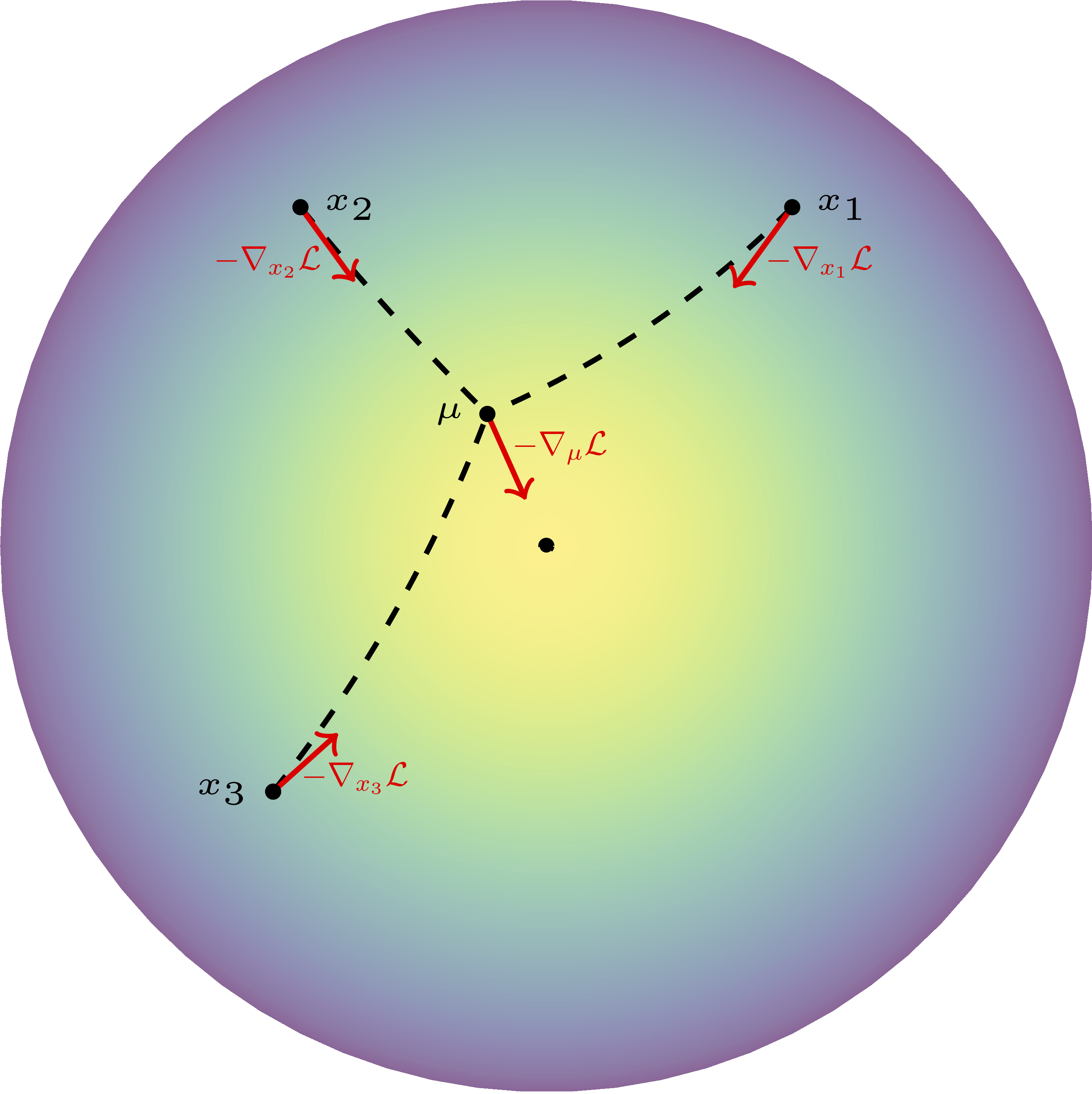}
    \caption{\label{fig:diffexample} Depicted above is the Fr\'echet mean, $\mu$, of three points, $x_1, x_2, x_3$ in the Poincar\'e ball model of hyperbolic space, $\mathbb{D}^2_{-1}$, as well as the negative gradients (shown in red) with respect to the loss function $\mathcal{L} = \norm{\mu}^2$.}
\end{figure}

\subsection{Differentiating Through the Fr\'echet Mean}


All known methods for computing the Fr\'echet mean rely on some sort of iterative solver \cite{gu2018learning}. While backpropagating through such a solver is possible, it is computationally inefficient and suffers from numerical instabilities akin to those found in RNNs \cite{Pascanu2012OnTD}. To circumvent these issues, recent works compute gradients at the solved value instead of differentiating directly, allowing for full neural network integration \cite{Chen2018NeuralOD, pogan2020differentiation}. However, to the best of our knowledge, no paper has investigated backpropagation on a manifold-based convex optimization solver.  Hence, in this section, we construct the gradient, relying on the fact that the Fr\'echet mean is an argmin operation.

\subsubsection{Differentiating through the argmin Operation}

Motivated by previous works on differentiating argmin problems \cite{Gould2016OnDP}, we propose a generalization which allows us to differentiate the argmin operation on the manifold. The full theory is presented in Appendix \ref{appendix:diffManifold}.

\subsubsection{Construction of the Fr\'echet mean derivative}

Since the Fr\'echet mean is an argmin operation, we can apply the theorems in Appendix \ref{appendix:diffManifold} to obtain gradients with respect to the input points. This operation (as well the resulting gradients) are visualized in Figure \ref{fig:diffexample}.

For the following theorems, we denote $\widetilde{\nabla}$ as the total derivative (or Jacobian) for notational convenience.

\begin{thm}\label{thm:diffFrechetMean}
     Let $\M$ be an $n$-dimensional Riemannian manifold, and let $\{x\} = (x^{(1)}, \dots, x^{(t)}) \in (\M)^t$ be a set of data points with weights $w_1, \dots, w_t \in \R^{+}$. Let $f : (\M)^t \times \M \to \M$ be given by $f(\brac{x}, y) = \sum\limits_{l = 1}^t w_{l}\cdot d(x^{(l)}, y)^2$ and $\overline{x} = \mu_{fr}(\brac{x}) = \argmin_{y \in \M} f(\{x\}, y)$ be the Fr\'echet mean. Then with respect to local coordinates we have
    \begin{equation}\label{eqn:manifoldgradient}
        \widetilde{\nabla}_{x^{(i)}}\mu_{fr}(\brac{x}) = - f_{YY}(\brac{x}, \overline{x})^{-1} f_{X^{(i)}Y}(\brac{x}, \overline{x})
    \end{equation}
    
    where the functions $f_{X^{(i)}Y}(\brac{x}, y) = \widetilde{\nabla}_{x^{(i)}} \nabla_y f(\brac{x}, y)$ and $f_{YY}(\brac{x}, y) = \nabla_{yy}^2 f(\brac{x}, \overline{x})$ are defined in terms of local coordinates.
\end{thm}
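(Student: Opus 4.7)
The plan is to prove this by applying the implicit function theorem, in local coordinates, to the first-order optimality condition satisfied by the Fr\'echet mean. Because $\overline{x}$ minimizes the smooth function $f(\brac{x}, \cdot)$ on $\M$, in any coordinate chart around $\overline{x}$ we have $\nabla_y f(\brac{x}, \overline{x}) = 0$. This gives a system of $n$ equations in the $n$ unknown coordinates of $y$ that implicitly defines $\overline{x}$ as a function of the input data $\brac{x}$, and differentiating this identity with respect to $x^{(i)}$ should produce the stated formula.

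Concretely, I would first fix smooth charts around each input point $x^{(i)}$ and around $\overline{x}$, so that $f$ becomes a smooth real-valued function of the coordinates and $\mu_{fr}$ becomes a smooth map between the corresponding coordinate domains. Set $F(\brac{x}, y) := \nabla_y f(\brac{x}, y)$; the optimality condition then reads $F(\brac{x}, \overline{x}(\brac{x})) = 0$. The Jacobian of $F$ in its second argument at the solution is exactly the Hessian $f_{YY}(\brac{x}, \overline{x})$. Assuming this Hessian is invertible, the implicit function theorem guarantees $\overline{x}$ depends smoothly on $\brac{x}$ locally. Differentiating $F(\brac{x}, \overline{x}(\brac{x})) = 0$ via the chain rule gives
\begin{equation*}
    f_{X^{(i)}Y}(\brac{x}, \overline{x}) + f_{YY}(\brac{x}, \overline{x})\, \widetilde{\nabla}_{x^{(i)}} \overline{x} = 0,
\end{equation*}
and solving for $\widetilde{\nabla}_{x^{(i)}} \overline{x}$ yields the target identity. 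Since the same argument is already invoked in the general argmin framework promised in Appendix \ref{appendix:diffManifold}, the bulk of the work is really just specializing that framework to $f(\brac{x}, y) = \sum_l w_l\, d(x^{(l)}, y)^2$.

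The main obstacle is verifying that $f_{YY}(\brac{x}, \overline{x})$ is invertible, which is what makes the implicit function step legitimate. Since $\overline{x}$ is a (locally) unique minimizer of the strictly convex Fr\'echet functional, e.g. when all the $x^{(l)}$ lie in a convex geodesic ball, one expects the Hessian to be strictly positive definite there; this is the Riemannian analogue of the second-order sufficient condition used by \citet{Gould2016OnDP} and should be stated as a hypothesis (or shown) before invoking the implicit function theorem. A secondary, more cosmetic issue is chart-independence: while the formula is written in local coordinates, $\widetilde{\nabla}_{x^{(i)}} \mu_{fr}$ is really a linear map between tangent spaces, so one should note that $-f_{YY}^{-1} f_{X^{(i)}Y}$ transforms correctly under change of coordinates. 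This follows from the fact that at a critical point both the Hessian and the mixed partial transform as tensors, so their composition descends to a well-defined intrinsic map $T_{x^{(i)}}\M \to T_{\overline{x}}\M$.
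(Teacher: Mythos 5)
Your proposal matches the paper's own route: the paper also differentiates the first-order condition $\nabla_y f(\brac{x}, y)=0$ in local coordinates around $\overline{x}$ and rearranges via the chain rule, packaged as its manifold generalization of the argmin-differentiation lemma of \citet{Gould2016OnDP} (Theorems \ref{thm:diffthroughunconstrained} and \ref{thm:diffthroughmanifoldgeneral}), so this is essentially the same argument. Your explicit flagging of the invertibility of $f_{YY}$ and of chart-independence is a reasonable addition, since the paper assumes the former implicitly and only addresses smoothness of $d(x,y)^2$ (conjugate points, Hadamard case) in Remark \ref{remark:remarkfrechetmean}.
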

\begin{proof}
This is a special case of Theorem \ref{thm:diffthroughmanifoldgeneral} in the appendix. This is because the Fr\'echet objective function $f$ is a twice differentiable real-valued function for specific $x^{(i)}$ and $y$ (under our geodesic formulation); thus we obtain the desired formulation. The full explanation can be found in Remark \ref{remark:remarkfrechetmean}.
\end{proof}

While the above theorem gives a nice theoretical framework with minimal assumptions, it is in practice too unwieldy to apply. In particular, the requirement of local coordinates renders most computations difficult. We now present a version of the above theorem which assumes that the manifold is embedded in Euclidean space\footnote{We also present a more general way to take the derivative that drops this restriction via an exponential map-based parameterization in Appendix \ref{appendix:diffManifold}.}.

\begin{thm}\label{thm:diffFrechetMeanEmbedded}
    Assume the conditions and values in Theorem \ref{thm:diffFrechetMean}. Furthermore, assume $\M$ is embedded (as a Riemannian manifold) in $\R^m$ with $m \ge \dim \M$, then we can write
    \begin{equation}\label{eqn:embeddedmanifoldgrad}
        \begin{gathered}
        \widetilde{\nabla}_{x^{(i)}}\mu_{fr}(\brac{x}) = - f_{YY}^p(\brac{x}, \overline{x})^{-1} f_{X^{(i)}Y}^p(\brac{x}, \overline{x})
        \end{gathered}
    \end{equation}
    
    where $f_{YY}^p(\brac{x}, y) = \widetilde{\nabla}_y (\proj_{T_{\overline{x}}\M} \circ \nabla_y f)(\brac{x}, y)$, $f_{X^{(i)}Y}^p(\brac{x}, y) = \widetilde{\nabla}_{x^{(i)}}(\proj_{T_{\overline{x}}\M} \circ \nabla_y f)(\brac{x}, y)$, and $\proj_{T_{\overline{x}}\M} : \R^m \to T_{\overline{x}}\M \cong \R^n$ is the linear subspace projection operator. 
\end{thm}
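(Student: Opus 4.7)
The plan is to reduce Theorem~\ref{thm:diffFrechetMeanEmbedded} to Theorem~\ref{thm:diffFrechetMean} by pulling the problem back to local coordinates via a judiciously chosen chart on $\M$ near $\overline{x}$. Since $\M$ is smoothly embedded in $\R^m$, the orthogonal projection $\pi : \R^m \to T_{\overline{x}}\M \cong \R^n$ restricts to a diffeomorphism from a neighborhood $U \subset \M$ of $\overline{x}$ onto a neighborhood of $0 \in T_{\overline{x}}\M$. Let $\phi := (\pi|_U)^{-1}$ be the resulting parametrization; by the inverse function theorem $\phi(0) = \overline{x}$ and $\widetilde{\nabla}\phi(0)$ equals the canonical inclusion $\iota : T_{\overline{x}}\M \incl \R^m$, whose transpose is precisely $\proj_{T_{\overline{x}}\M}$.

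With this chart in hand, I would set $\tilde f(\brac{x}, \tilde y) := f(\brac{x}, \phi(\tilde y))$ and apply the chain rule to obtain $\nabla_{\tilde y}\tilde f(\brac{x}, \tilde y) = \widetilde{\nabla}\phi(\tilde y)^T\, \nabla_y f(\brac{x}, \phi(\tilde y))$; evaluating at $\tilde y = 0$ reduces this to $\proj_{T_{\overline{x}}\M}(\nabla_y f(\brac{x}, \overline{x}))$, which vanishes by the constrained first-order optimality condition for the Fr\'echet mean. Differentiating this identity once more in $\tilde y$ (respectively in $x^{(i)}$) would identify the chart-coordinate quantities $f_{YY}(\brac{x}, \overline{x})$ and $f_{X^{(i)}Y}(\brac{x}, \overline{x})$ from Theorem~\ref{thm:diffFrechetMean} with the ambient objects $f^p_{YY}(\brac{x}, \overline{x})$ and $f^p_{X^{(i)}Y}(\brac{x}, \overline{x})$ of the present statement. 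Substituting into~(\ref{eqn:manifoldgradient}) and pushing the resulting tangent vector forward through $\widetilde{\nabla}\phi(0) = \iota$ would then yield~(\ref{eqn:embeddedmanifoldgrad}) in the ambient coordinates of $\R^m$.

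The main technical difficulty I anticipate is in the chain-rule computation of the second derivative $\nabla_{\tilde y}\nabla_{\tilde y}\tilde f(\brac{x}, 0)$, where differentiating the factor $\widetilde{\nabla}\phi(\tilde y)^T$ introduces a correction of the form $\langle \nabla_y f(\brac{x}, \overline{x}),\, \phi''(0)(\cdot,\cdot)\rangle$ --- an inner product of the ambient gradient against the Hessian of $\phi$ at $0$. Reconciling this with the statement requires two observations: first, the image of $\phi''(0)$ lies entirely in $\ker\pi = N_{\overline{x}}\M$, which is forced by the identity $\pi \circ \phi = \mathrm{id}$ that makes the second-order deviation of $\phi$ from its linearization $\iota$ normal to $\M$; second, by the constrained optimality condition, $\nabla_y f(\brac{x}, \overline{x})$ is itself normal to $\M$. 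A careful bookkeeping of how this normal--normal pairing interacts with the projection defining $f^p_{YY}$ establishes the required agreement. The mixed derivative $f^p_{X^{(i)}Y}$ is simpler, since differentiating once in $x^{(i)}$ differentiates neither $\phi$ nor $\widetilde{\nabla}\phi$. Once the bookkeeping on the $YY$ block is resolved, the theorem follows by direct substitution into~(\ref{eqn:manifoldgradient}).
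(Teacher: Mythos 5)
There is a genuine gap, and it sits exactly at the step you wave off as ``careful bookkeeping.'' In your chart $\phi=(\proj_{T_{\overline{x}}\M}|_U)^{-1}$, the pulled-back Hessian that Theorem \ref{thm:diffFrechetMean} requires is
\begin{equation*}
  f_{YY}(\brac{x},\overline{x}) \;=\; \proj_{T_{\overline{x}}\M}\,\nabla^2_{yy}f(\brac{x},\overline{x})\,\widetilde{\nabla}\phi(0)\;+\;\inn{\nabla_y f(\brac{x},\overline{x}),\,\phi''(0)(\cdot,\cdot)},
\end{equation*}
whereas the quantity $f^p_{YY}$ in the statement, with $\proj_{T_{\overline{x}}\M}$ read (as you read it) as a fixed linear map, is only the first summand: since the projection is constant, no derivative of it can ever enter $f^p_{YY}$. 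The second summand is the second fundamental form of $\M\subseteq\R^m$ contracted with the ambient gradient; at the constrained minimizer that gradient is purely normal but generically nonzero, and the second fundamental form does not vanish for curved embeddings such as the hyperboloid, so this normal--normal pairing $C$ is not zero and cannot be absorbed into $f^p_{YY}$. Your mixed block does match ($f_{X^{(i)}Y}=f^p_{X^{(i)}Y}$, as you note), but then pushing Theorem \ref{thm:diffFrechetMean} through your chart yields $-\,\widetilde{\nabla}\phi(0)\,(f^p_{YY}+C)^{-1}f^p_{X^{(i)}Y}$ rather than the claimed $-\,(f^p_{YY})^{-1}f^p_{X^{(i)}Y}$, and these differ whenever $C\neq 0$. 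That the correction is real, not removable by bookkeeping, is visible in the paper's own explicit hyperboloid derivation: the Hessian in Theorem \ref{thm:lorentzbackpropmain}, equation (\ref{eqn:hyperboloidbackwardhessian}), contains exactly such an extra term (the one proportional to $g'((x^{(l)})^\top M\overline{x})\cdot((x^{(l)})^\top M\overline{x})\cdot M$), produced by the second-order part of the reparametrization $u\mapsto u+\overline{x}\sqrt{1-Ku^\top Mu}$ --- i.e., by precisely the $\phi''(0)$ term you propose to argue away.

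Your overall strategy (reduce to the local-coordinate theorem via a parametrization) is legitimate and is in fact closer to the paper's exponential-map version, Theorem \ref{thm:diffthroughmanifoldexp}, and to its concrete hyperboloid computation; but it is not how the paper proves this statement. The paper cites Theorem \ref{thm:diffthroughmanifoldembedded}, whose proof never builds a chart: it differentiates the projected stationarity identity $\proj_{T_{g(x)}\M}\circ\nabla_y f(x,g(x))=0$ with respect to $x$ directly in the ambient space and rearranges, mirroring the unconstrained argument of Theorem \ref{thm:diffthroughunconstrained}, so the second fundamental form never explicitly surfaces there. To make your route work you must either carry the term $C$ and explain under what interpretation of $f^p_{YY}$ (e.g., a projection taken at the moving base point before differentiating) it is already contained in the statement's object, or abandon the chart reduction and differentiate the projected first-order condition along the solution map as the paper does. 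As written, the identification of the chart quantities with $f^p_{YY}$, $f^p_{X^{(i)}Y}$ is the entire content of the theorem when $m>\dim\M$, and it is asserted rather than proven --- and under your stated reading it is false.
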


\begin{proof}
    Similar to the relationship between Theorem \ref{thm:diffFrechetMean} and Theorem \ref{thm:diffthroughmanifoldgeneral}, this is a special case of Theorem \ref{thm:diffthroughmanifoldembedded} in the appendix.
\end{proof}

\section{Hyperbolic Fr\'echet Mean}

Although we have provided a formulation for differentiating through the Fr\'echet mean on general Riemannian manifolds, to properly integrate it in the hyperbolic setting we need to address two major difficulties:

\begin{enumerate}
    \item The lack of a fast forward computation.
    \item The lack of an explicit derivation of a backpropagation formula.
\end{enumerate}

Resolving these difficulties will allow us to define a Fr\'echet mean neural network layer for geometric, and specifically hyperbolic, machine learning tasks.

\subsection{Forward Computation of the Hyperbolic Fr\'echet Mean}

Previous forward computations fall into one of two categories: (1) fast, inaccurate computations which aim to approximate the true mean with a pseudo-Fr\'echet mean, or (2) slow, exact computations. In this section we focus on outperforming methods in the latter category, since we strive to compute the exact Fr\'echet mean (pseudo-means warp geometry).

\subsubsection{Former Attempts at Computing the Fr\'echet Mean}


The two existing algorithms for Fr\'echet mean computation are (1) Riemannian gradient-based optimization \cite{gu2018learning} and (2) iterative averaging \cite{karcher1977riemannian}. However, in practice both algorithms are slow to converge even for simple synthetic examples of points in hyperbolic space. To overcome this difficulty, which can cripple neural networks, we propose the following algorithm that is much faster in practice.


\begin{algorithm}[!htb]
\caption{\label{alg:poincareforward_mainpaper} Poincar\'e model Fr\'echet mean algorithm}
\textbf{Inputs}: $x^{(1)}, \cdots, x^{(t)}\in \mathbb{D}_{K}^{n}\subseteq \mathbb{R}^{n+1}$ and weights $w_1, \dots, w_t \in \R^{+}$.

\textbf{Algorithm}: 

$y_{0}=x^{(1)}$

Define $g(y)=\frac{2\arccosh(1+2y)}{\sqrt{y^{2}+y}}$

for $k=0, 1, \cdots, T$:

\hspace{20pt} for $l=1, 2, \cdots, t$: 

\hspace{40pt} $\alpha_{l}=w_{l} g\left(\frac{|K| \cdot \|x^{(l)}-y_{k}\|^{2}}{(1- |K| \cdot \|x^{(l)}\|^{2})(1-|K| \cdot \|y_{k}\|^{2})}\right) \frac{1}{1-|K| \cdot \|x^{(l)}\|^{2}}$

\hspace{20pt} $a=\sum\limits_{l=1}^{t}\alpha_{l} \text{  ,  } b=\sum\limits_{l=1}^{t}\alpha_{l}x^{(l)} \text{  ,  } c=\sum\limits_{l=1}^{t}\alpha_{l}\|x^{(l)}\|^{2}$

\hspace{20pt} $y_{k+1}=\left(\frac{(a+c|K|)-\sqrt{(a+c|K|)^{2}-4|K|\cdot\|b\|^{2}}}{2|K|\cdot\|b\|^{2}}\right)b$

return $y_{T}$
\end{algorithm}

\subsubsection{Algorithm for Fr\'echet Mean Computation via First-order Bound}

The core idea of our algorithm relies on the fact that the square of distance metric is a concave function for both the Poincar\'e ball and hyperboloid model. Intuitively, we select an initial ``guess" and use a first-order bound to minimize the Fr\'echet mean objective. The concrete algorithm for the Poincar\'e ball model is given as Algorithm \ref{alg:poincareforward_mainpaper} above. Note that the algorithm is entirely hyperparameter-free and does not require setting a step-size. Additionally we introduce three different initializations:
\begin{enumerate}
    \item Setting $y_0 = x^{(1)}$.
    \item Setting $y_0 = x^{(\argmax_i w_i)}$.
    \item Setting $y_0$ to be the output of the first step of the Karcher flow algorithm \cite{karcher1977riemannian}.
\end{enumerate}

We tried these initializations for our test tasks (in which weights were equal, tasks described in Section \ref{sec:casestudies}), and found little difference between them in terms of performance. Even for toy tasks with varying weights, these three methods produced nearly the same results. However, we give them here for completeness.

Moreover, we can prove that the algorithm is guaranteed to converge.

\begin{thm}
Let $x^{(1)}, \cdots, x^{(t)}\in \mathbb{D}_{K}^{n}$ be $t$ points\footnote{Here we present the version for $K=-1$ for cleaner presentation. The generalization to arbitrary $K<0$ is easy to compute, but clutters presentation.} in the Poincar\'e ball, $w_1, \dots, w_t \in \R^{+}$ be their weights, and let their weighted Fr\'echet mean be the solution to the following optimization problem.
\begin{equation}
\begin{gathered}
    \mu_{fr}=\argmin_{y \in \mathbb{D}^{n}_K}f(y)
\end{gathered}
\end{equation}
\begin{equation}
\begin{gathered}
    \text{where } f(y) =\sum_{l=1}^{t}w_{l}\cdot d_{\mathbb{D}^{n}_K}(x^{(l)}, y)^2 \\ 
    =\sum_{l=1}^{t} \frac{w_{l}}{|K|}  \arccosh^2\left(1-\frac{2K\|x^{(l)}-y\|^{2}}{(1+K \|x^{(l)}\|^{2})(1+K\|y\|^{2})}\right)
\end{gathered}
\end{equation}
Then Algorithm \ref{alg:poincareforward_mainpaper} gives a sequence of points $\{y_{k}\}$ such that their limit $\lim\limits_{k\rightarrow\infty}y_{k}=\mu_{fr}$ converges to the Fr\'echet mean solution.
\end{thm}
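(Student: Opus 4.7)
The plan is to interpret Algorithm \ref{alg:poincareforward_mainpaper} as a majorization--minimization (MM) scheme: at each iterate $y_k$, construct a tight upper bound $\tilde f_{y_k} \ge f$ with equality at $y_k$, minimize this surrogate in closed form to obtain $y_{k+1}$, and conclude monotone descent of $f$ toward $f(\mu_{fr})$.

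The analytic engine is that $h(u) := \arccosh^2(1+2u)$ is concave on $[0,\infty)$: a direct computation gives $h'(u) = g(u)$, the function appearing in the algorithm, and one checks that $h'$ is non-increasing. Writing each squared Poincar\'e distance as $d_{\mathbb{D}^n_K}(x^{(l)}, y)^2 = h(u_l(y))/|K|$ with $u_l(y) = \|x^{(l)} - y\|^2/\bigl((1+K\|x^{(l)}\|^2)(1+K\|y\|^2)\bigr)$, the tangent-line inequality for a concave function yields
\begin{equation*}
f(y) \le \tilde f_{y_k}(y) := f(y_k) + \sum_{l=1}^t \frac{w_l}{|K|} g(u_l(y_k))\bigl(u_l(y) - u_l(y_k)\bigr),
\end{equation*}
tight at $y = y_k$.

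Next I would solve $\min_y \tilde f_{y_k}(y)$ in closed form. Dropping $y$-independent terms reduces this to minimizing $\bigl(c - 2\langle b, y\rangle + a\|y\|^2\bigr)/(1 + K\|y\|^2)$ with $a, b, c$ as in the algorithm. Setting the Euclidean gradient to zero shows that any critical point satisfies $y \parallel b$; substituting $y = \lambda b$ collapses the stationarity condition to the scalar quadratic $K\|b\|^2 \lambda^2 + (a - cK)\lambda - 1 = 0$, whose root corresponding to a point inside the ball is precisely the update in Algorithm \ref{alg:poincareforward_mainpaper}. This simultaneously verifies that the iterates remain in $\mathbb{D}^n_K$. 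The MM chain $f(y_{k+1}) \le \tilde f_{y_k}(y_{k+1}) \le \tilde f_{y_k}(y_k) = f(y_k)$ then gives monotone descent, and boundedness below yields $f(y_k) \downarrow f^* \ge 0$.

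To promote monotone descent of function values to convergence of iterates, I would invoke that $\mathbb{D}^n_K$ is a Hadamard manifold, so $f$ is strictly geodesically convex with a unique minimizer $\mu_{fr}$, and $f(y) \to \infty$ as $y \to \partial \mathbb{D}^n_K$ ensures sublevel sets are relatively compact in $\mathbb{D}^n_K$. Hence $\{y_k\}$ has an accumulation point $y^*$, which must be a fixed point of the MM map. Because $\tilde f_{y^*}$ agrees with $f$ to first order at $y^*$, stationarity of $\tilde f_{y^*}$ at $y^*$ forces $\nabla f(y^*) = 0$, and strict convexity then gives $y^* = \mu_{fr}$; uniqueness of the accumulation point upgrades this to full convergence. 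The main obstacle is precisely this final step --- verifying that fixed points of the surrogate update are critical points of $f$ and that the iterates do not escape toward the ideal boundary --- whereas monotone descent and the closed-form update are straightforward consequences of concavity and the quadratic first-order analysis.
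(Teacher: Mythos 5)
Your proposal is correct and follows essentially the same route as the paper's proof: the tangent-line (concavity) bound on $\arccosh^2(1+2u)$, closed-form minimization of the resulting surrogate along the direction of $b$ via a scalar quadratic, and monotone descent of the objective value. If anything, your final convergence step (compact sublevel sets, accumulation points as fixed points of the MM map, first-order tightness forcing $\nabla f(y^*)=0$, strict geodesic convexity giving uniqueness) is spelled out more carefully than the paper's brief ``only fixed point we can converge to'' argument, and the only cosmetic difference is that you pick the quadratic root by in-ball feasibility where the paper uses a sign analysis of $f'(\eta)$.
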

\begin{proof}
    See Theorem \ref{thm:poincareforwardconvergence} in the appendix.
\end{proof}

The algorithm and proof of convergence for the hyperboloid model are given in Appendix \ref{sec:hyperboloidforwardderiv} and are omitted here for brevity.

\subsubsection{Empirical Comparison to Previous Fr\'echet Mean Computation Algorithms}
To demonstrate the efficacy of our algorithm, we compare it to previous approaches on randomly generated data. Namely, we compare against a na\"{i}ve Riemannian Gradient Descent (RGD) approach \cite{Udriste1997} and against the Karcher Flow algorithm \cite{karcher1977riemannian}. We test our Fr\'echet mean algorithm against these methods on synthetic datasets of ten on-manifold randomly generated $16$-dimensional points. We run all algorithms until they are within $\epsilon=10^{-12}$ of the true Fr\'echet mean in norm, and report the number of iterations this takes in Table \ref{tab:empiricalfrechetmean} for both hyperboloid (H) and Poincar\'e (P) models of hyperbolic space. Note that we significantly outperform the other algorithms. We also observe that by allowing 200x more computation, a grid search on the learning hyperparameter\footnote{The grid search starts from $lr=0.2$ and goes to $lr=0.4$ in increments of $0.01$ for the Poincar\'e ball model, and from $lr=0.2$ to $0.28$ for the hyperboloid model (same increment).} in RGD obtains nearly comparable or better results (last row of Table \ref{tab:empiricalfrechetmean} for both models). However, we stress that this requires much more computation, and note that our algorithm produces nearly the same result while being \textbf{hyperparameter-free}.

\begin{table}[!htb]
\centering
\caption{\label{tab:empiricalfrechetmean} Empirical computation of the Fr\'echet mean; the average number of iterations, as well as runtime, required to become accurate within $\epsilon=10^{-12}$ of the true Fr\'echet mean are reported. $10$ trials are conducted, and standard deviation is reported. The primary baselines are the RGD \cite{Udriste1997} and Karcher Flow \cite{karcher1977riemannian} algorithms. (H) refers to hyperboloid and (P) refers to Poincar\'e.}. 
\vspace{0.2em}
\small
\begin{tabular}{cccc}
    \toprule
    & & Iterations & Time (ms)\footnotemark \\
    \midrule
    \parbox[t]{2mm}{\multirow{4}{*}{\rotatebox[origin=c]{90}{\small{H}}}} & RGD ($lr=0.01$) & $801.0$\tiny$\pm 21.0$ & $932.9$\tiny$\pm 130.0$ \\
    & Karcher Flow & $62.5$\tiny$\pm 6.0$ & $50.9$\tiny$\pm 8.9$ \\
    & Ours & $\mathbf{13.7}$\tiny$\pm 0.9$ & $\mathbf{6.1}$\tiny$\pm 1.9$ \\
    \cline{2-4}
    \rule{0pt}{3ex} & RGD + Grid Search on $lr$ & $27.7$\tiny$\pm 0.8$ & $5333.5$\tiny$\pm 770.7$ \\
    \midrule
    \parbox[t]{2mm}{\multirow{4}{*}{\rotatebox[origin=c]{90}{\small{P}}}} & RGD ($lr=0.01$) & $773.8$\tiny$\pm 22.1$ & $1157.3$\tiny$\pm 74.8$ \\
    & Karcher Flow & $57.5$\tiny$\pm 9.1$ & $59.8$\tiny$\pm 10.4$ \\
    & Ours & $\mathbf{13.4}$\tiny$\pm 0.5$ & $\mathbf{9.1}$\tiny$\pm 1.3$ \\
    \cline{2-4}
    \rule{0pt}{3ex} & RGD + Grid Search on $lr$  & $10.5$\tiny$\pm 0.5$ & $6050.6$\tiny$\pm 235.2$ \\
    \bottomrule
\end{tabular}
\end{table}
\footnotetext{Experiments were run with an Intel Skylake Core i7-6700HQ 2.6 GHz Quad core CPU.}

We also find that this convergence improvement translates to real world applications. Specifically, we find that for the graph link prediction experimental setting in Section \ref{sec:experimentalresults}, our forward pass takes anywhere from $\approx 15-25$ iterations, significantly outperforming the $1000+$ needed with RGD and $\approx 120$ needed with Karcher Flow.

\subsection{Backward Computation of the Hyperbolic Fr\'echet Mean}

For the backward computation, we re-apply the general Riemannian theory for differentiating through the Fr\'echet mean in Section \ref{sec:generalriemann} to hyperbolic space. Since most autodifferentiation packages do not support manifold-aware higher order differentiation, we derive the gradients explicitly. We begin with the Poincar\'e ball model by setting $\mathcal{M} = \mathbb{D}^n_K$ and applying Theorem \ref{thm:diffFrechetMeanEmbedded}.

\begin{thm}\label{thm:ballbackpropmain_mainpaper}
Let $x^{(1)}, \cdots, x^{(t)}\in \mathbb{D}_{K}^{n}\subseteq \mathbb{R}^{n}$ be $t$ points in the Poincar\'e ball and $w_1, \dots, w_t \in \R^{+}$ be the weights. Let their weighted Fr\'echet mean $\mu_{fr}$ be solution to the following optimization problem
\begin{equation}
    \mu_{fr}(x^{(1)}, \cdots, x^{(t)})=\argmin_{y \in \mathbb{D}_K^{n}}f(\{x\}, y)
\end{equation}
\begin{equation}
\begin{gathered}
    \text{where } f(\{x\}, y)= \sum_{l = 1}^t w_l \cdot d_{\mathbb{D}^{n}_K}(x^{(l)}, y)^2 = \\ \sum^{t}_{l=1} \frac{w_l}{|K|} \arccosh^2 \left(1 - \frac{2K ||x^{(l)}-y||^2_2}{(1+K||x^{(l)}||^2_2)(1+K||y||^2_2)} \right)
\end{gathered}
\end{equation}
Then the derivative of $\mu_{fr}$ with respect to $x^{(i)}$ is given by
\begin{equation}
 \widetilde{\nabla}_{x^{(i)}}\mu_{fr}(\{x\}) = -f_{YY}f(\brac{x}, \overline{x})^{-1} f_{X^{(i)}Y}(\brac{x}, \overline{x})
\end{equation}

where $\overline{x} = \mu_{fr}(\brac{x})$ and $f_{YY}$, $f_{X^{(i)} Y}$ are defined in Theorem \ref{thm:diffFrechetMeanEmbedded} \footnote{The projection operation is trivial since $\dim \mathbb{R}^n = \dim \mathbb{D}_K^n$.}.

The full concrete derivation of the above terms for the geometry induced by this manifold choice is given in Appendix Theorem \ref{thm:ballbackpropmain}.

\end{thm}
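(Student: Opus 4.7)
The plan is to recognize Theorem \ref{thm:ballbackpropmain_mainpaper} as an almost immediate specialization of Theorem \ref{thm:diffFrechetMeanEmbedded}, and then to peel off the explicit derivative expressions for the Poincaré distance by careful chain-rule bookkeeping. The main theorem statement itself is essentially a verification that the general embedded-manifold differentiation result applies cleanly to $\mathcal{M}=\mathbb{D}_K^n$; the substantive content lives in the concrete formulas for $f_{YY}$ and $f_{X^{(i)}Y}$ that are deferred to the appendix.

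First I would confirm the hypotheses of Theorem \ref{thm:diffFrechetMeanEmbedded}. The Poincaré ball $\mathbb{D}_K^n$ is an open subset of $\mathbb{R}^n$ equipped with the conformal metric $(\lambda_x^K)^2 g^{\mathbb{E}}$ from Table \ref{tab:operations}, so it is a Riemannian manifold embedded in $\mathbb{R}^n$ with $\dim \mathbb{D}_K^n = n = \dim \mathbb{R}^n$. Consequently $T_{\overline{x}}\mathbb{D}_K^n$ is canonically identified with $\mathbb{R}^n$ itself, and the projection operator $\proj_{T_{\overline{x}}\mathcal{M}}$ appearing in Theorem \ref{thm:diffFrechetMeanEmbedded} reduces to the identity. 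This is exactly the footnote the statement alludes to, and it lets $f_{YY}^p$ and $f_{X^{(i)}Y}^p$ collapse to the ordinary Euclidean Hessian and cross-Hessian $f_{YY}$ and $f_{X^{(i)}Y}$.

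Next I would check the remaining analytic hypotheses. The squared Poincaré distance $d_{\mathbb{D}_K^n}(x^{(l)},y)^2$ is smooth away from $x^{(l)}=y$ and admits a $C^2$ extension across the diagonal, so $f(\{x\},y)$ is $C^2$ on an open neighborhood of any minimizer. Strict local minimality of $\overline{x}$ in this geodesically convex setting gives positive definiteness of $f_{YY}(\{x\},\overline{x})$, and hence its invertibility — which is precisely the regularity needed to invoke the implicit-function-theorem-flavored argument behind Theorem \ref{thm:diffFrechetMeanEmbedded}. Plugging everything in yields $\widetilde{\nabla}_{x^{(i)}}\mu_{fr}(\{x\}) = -f_{YY}(\{x\},\overline{x})^{-1}f_{X^{(i)}Y}(\{x\},\overline{x})$, which is the stated formula.

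The hard part will not be the invocation above, but producing the concrete matrix expressions for $f_{YY}$ and $f_{X^{(i)}Y}$. My plan there is to introduce the intermediate scalar
\[
u^{(l)} \;=\; 1 - \frac{2K\,\|x^{(l)}-y\|_2^2}{(1+K\|x^{(l)}\|_2^2)(1+K\|y\|_2^2)},
\]
so that $f = \sum_l \tfrac{w_l}{|K|}\arccosh^2(u^{(l)})$, and then to apply the chain rule twice: $\nabla_y f = \sum_l \tfrac{w_l}{|K|}\cdot 2\arccosh(u^{(l)})(u^{(l)2}-1)^{-1/2}\,\nabla_y u^{(l)}$, after which $f_{YY}$ and $f_{X^{(i)}Y}$ fall out by differentiating $\nabla_y u^{(l)}$ with respect to $y$ and $x^{(i)}$ respectively, combined with a product-rule contribution from the scalar prefactor. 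The combinatorial bookkeeping — distinguishing terms where differentiation hits the prefactor versus the inner gradient, and tracking rank-one outer products arising from $\nabla u^{(l)}\cdot(\nabla u^{(l)})^\top$ — is the main nuisance, but each individual piece is a routine calculation. I would organize the final expressions in terms of $\lambda_y^K$, $\lambda_{x^{(l)}}^K$, and the scalar multipliers $g(\cdot)$ from Algorithm \ref{alg:poincareforward_mainpaper}, mirroring the forward pass notation so that the appendix statement (Appendix Theorem \ref{thm:ballbackpropmain}) reads cleanly.
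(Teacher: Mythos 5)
Your proposal matches the paper's proof: the main-paper statement is proved exactly by invoking Theorem \ref{thm:diffFrechetMeanEmbedded} with the projection trivial since $\dim \mathbb{D}_K^n = n$, and the concrete $f_{YY}$, $f_{X^{(i)}Y}$ expressions are obtained in Appendix Theorem \ref{thm:ballbackpropmain} by the same chain-rule computation you outline (your $u^{(l)}$ is the paper's $v^{(l)}$, and the rank-one outer products and prefactor terms appear there as $T_{ij}$, $M_{ik}$). Your plan is correct and essentially identical in structure, differing only in cosmetic notation for organizing the final expressions.
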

\begin{proof}
    This is a concrete application of Theorem \ref{thm:diffFrechetMeanEmbedded}. In particular since our manifold is embedded in $\mathbb{R}^n$ ($\mathbb{D}_{K}^{n}\subseteq \mathbb{R}^{n}$). Note that this is the total derivative in the ambient Euclidean space\footnote{To transform Euclidean gradients into Riemannian ones, simply multiply by inverse of the matrix of the metric.}. For the full proof see Theorem \ref{thm:ballbackpropmain} in the Appendix.
\end{proof}

The derivation for the hyperboloid model is given in Appendix \ref{thm:lorentzbackpropmain}.


\section{Case Studies}
\label{sec:casestudies}
To demonstrate the efficacy of our developed theory, we investigate the following test settings. In the first setting, we directly modify the hyperbolic aggregation strategy in Hyperbolic GCNs \cite{Chami2019HyperbolicGC} to use our differentiable Fr\'echet mean layer. This was the original intent\footnote{We quote directly from the paper \citet{Chami2019HyperbolicGC}: ``An analog of mean aggregation in hyperbolic space is the Fr\'echet mean, which, however, has no closed form solution. Instead, we propose to..."} but was not feasible without our formulation. In the second setting, we introduce Hyperbolic Batch Normalization (HBN) as an extension of the regular Euclidean Batch Normalization (EBN). When combined with hyperbolic neural networks \cite{Ganea2018HyperbolicNN}, HBN exhibits benefits similar to those of EBN with Euclidean networks.

\subsection{Hyperbolic Graph Convolutional Neural Networks (HGCNs)}

\subsubsection{Original Framework}
Introduced in \citet{Chami2019HyperbolicGC}, Hyperbolic Graph Convolutional Networks (GCNs) provide generalizations of Euclidean GCNs to hyperbolic space. The proposed network architecture is based on three different layer types: feature transformation, activation, and attention-based aggregation. 

\textbf{Feature transformation:} The hyperbolic feature transformation consists of a gyrovector matrix multiplication followed by a gyrovector addition. 
\begin{equation}
    h_{i}^{l}=(W^{l}\otimes^{K_{l-1}} x_{i}^{l-1})\oplus^{K_{l-1}} b^{l}
\end{equation}

\textbf{Attention-based aggregation:} Neighborhood aggregation combines local data at a node. It does so by projecting the neighbors using the logarithmic map at the node, averaging in the tangent space, and projecting back with the exponential map at the node. Note that the weights $w_{ij}$ are positive and can be trained or defined by the graph adjacency matrix.
\begin{equation}
    AGG^K(x_i)=\exp_{x_i}^K\paren{\sum_{j \in \mathcal{N}(i)} w_{ij} \log_{x_i}^K x_j}
\end{equation}


\textbf{Activation:} The activation layer applies a hyperbolic activation function.
\begin{equation}
    x_{i}^{l}=\sigma^{\otimes^{K_{l-1}, K_{l}}}(y_{i}^{l})
\end{equation}

\subsubsection{Proposed Changes}

The usage of tangent space aggregation in the HGCN framework stemmed from the lack of a differentiable Fr\'echet mean operation. As a natural extension, we substitute our Fr\'echet mean in place of the aggregation layer.

\subsubsection{Experimental Results}
\label{sec:experimentalresults}
We use precisely the same architecture as in \citet{Chami2019HyperbolicGC}, except we substitute all hyperbolic aggregation layers with our differentiable Fr\'echet mean layer. Furthermore, we test with precisely the same hyperparameters (learning rate, test/val split, and the like) as \citet{Chami2019HyperbolicGC} for a fair comparison. Our new aggregation allows us to achieve new state-of-the-art results on the Disease and Disease-M graph datasets \cite{Chami2019HyperbolicGC}. These datasets induce ideal test tasks for hyperbolic learning since they have very low Gromov $\delta$-hyperbolicity \cite{Adcock2013TreeLikeSI}, which indicates the structure is highly tree-like. Our results and comparison to the baseline are given in Table \ref{tab:hgcnmaintab}. We run experiments for 5 trials and report the mean and standard deviation. Due to practical considerations, we only test with the Poincar\'e model\footnote{The code for HGCN included only the Poincar\'e model implementation at the time this paper was submitted. Hence we use the Poincar\'e model for our experiments, although our contributions include derivations for both hyperboloid and Poincar\'e models.}. For reference, the strongest baseline results with the hyperboloid model are reported from \citet{Chami2019HyperbolicGC} (note that we outperform these results as well). On the rather non-hyperbolic CoRA \cite{Sen2008CollectiveCI} dataset, our performance is comparable to that of the best baseline. Note that this is similar to the performance exhibited by the vanilla HGCN. Hence we conjecture that when the underlying dataset is not hyperbolic in nature, we do not observe improvements over the best Euclidean baseline methods.

\begin{table}[!htb]
\centering
\caption{\label{tab:hgcnmaintab} ROC AUC results for Link Prediction (LP) on
various graph datasets, averaged over 5 trials (with standard deviations). Graph hyperbolicity values are also reported (lower $\delta$ is more hyperbolic). Results are given for models learning in Euclidean (E), Hyperboloid (H), and Poincar\'e (P) spaces. Note that the best Euclidean method is GAT \cite{velickovic2018graph} and is shown below for fair comparison on CoRA. We highlight the best result only if our result gives a p-value $< 0.01$ after running a paired-significance t-test.}
\begin{tabular}{ccccc}
    \toprule
    & & Disease & Disease-M & CoRA \\
    & & $\delta=0$ & $\delta=0$ & $\delta=11$ \\
    \midrule
    \parbox[t]{2mm}{\multirow{2}{*}{\rotatebox[origin=c]{90}{\large{E}}}} & MLP & $72.6$\tiny$\pm 0.6$  & $55.3$\tiny$\pm 0.5$ & $83.1$\tiny$\pm 0.5$ \\
    & GAT & $69.8$\tiny$\pm 0.3$ & $69.5$\tiny$\pm 0.4$ & $\mathbf{93.7}$\tiny$\pm 0.1$ \\
    \midrule
    \parbox[t]{2mm}{\multirow{2}{*}{\rotatebox[origin=c]{90}{\large{H}}}} & HNN & $75.1$\tiny$\pm 0.3$ & $60.9$\tiny$\pm 0.4$ & $89.0$\tiny$\pm 0.1$ \\
    & HGCN & $90.8$\tiny$\pm 0.3$ & $78.1$\tiny$\pm 0.4$ & $92.9$\tiny$\pm 0.1$ \\
    \midrule
    \parbox[t]{2mm}{\multirow{2}{*}{\rotatebox[origin=c]{90}{\large{P}}}} & HGCN & $76.4$\tiny$\pm 8.9$ & $81.4$\tiny$\pm 3.4$ & $93.4$\tiny$\pm 0.4$ \\
    & Ours & $\mathbf{93.7}$\tiny$\pm 0.4$ & $\mathbf{91.0}$\tiny$\pm 0.6$ & $92.9$\tiny$\pm 0.4$ \\
    \bottomrule
\end{tabular}
\end{table}


\begin{figure*}[htb!]
\centering

\begin{subfigure}{0.32\textwidth}
    \centering

    \begin{subfigure}{\textwidth}
        \centering
        \includegraphics[scale=0.40]{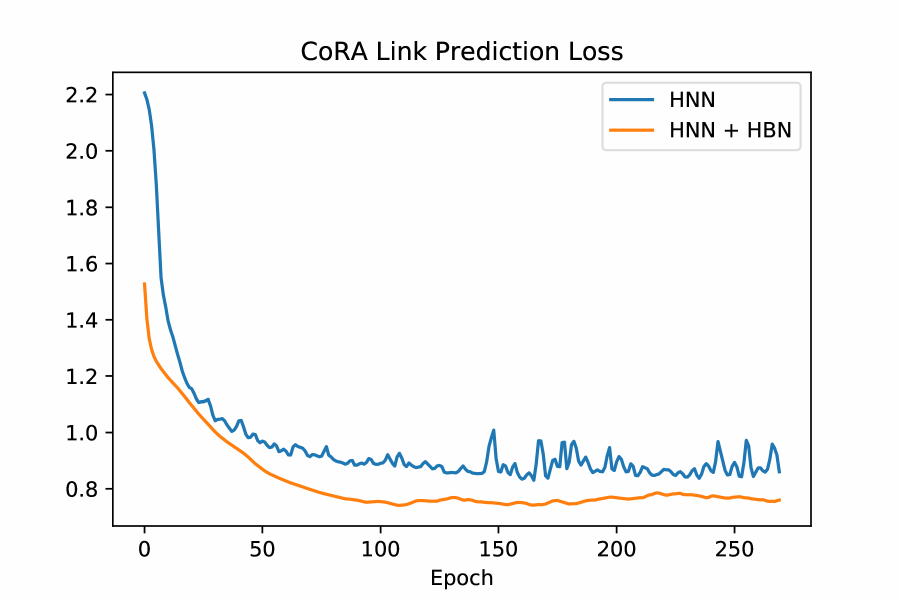}
    \end{subfigure}
    \begin{subfigure}{\textwidth}
        \centering
        \includegraphics[scale=0.40]{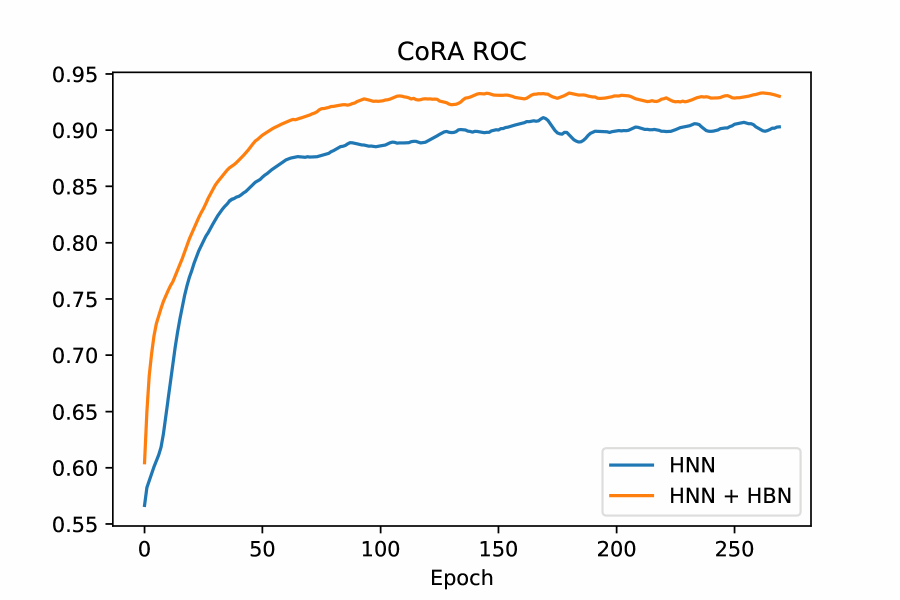}
    \end{subfigure}
\end{subfigure}%
\begin{subfigure}{0.32\textwidth}
    \centering

    \begin{subfigure}{\textwidth}
        \centering
        \includegraphics[scale=0.40]{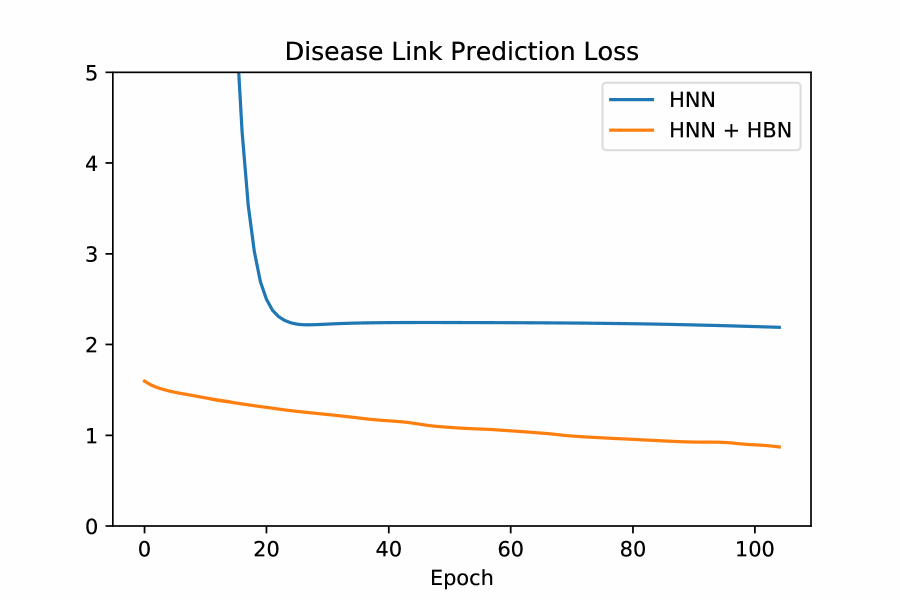}
    \end{subfigure}
    \begin{subfigure}{\textwidth}
        \centering
        \includegraphics[scale=0.40]{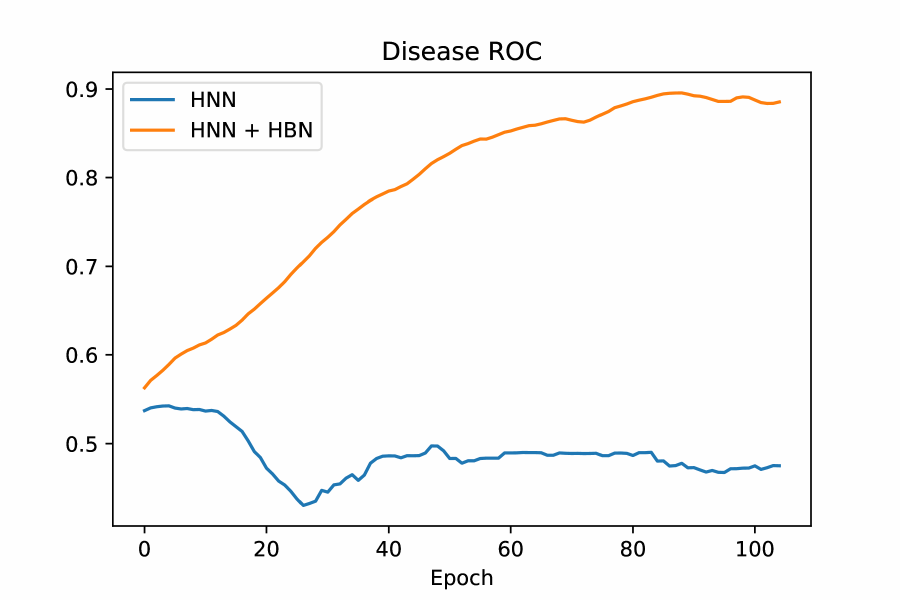}
    \end{subfigure}
\end{subfigure}%
\begin{subfigure}{0.32\textwidth}
    \centering
    
    \begin{subfigure}{\textwidth}
        \centering
        \includegraphics[scale=0.40]{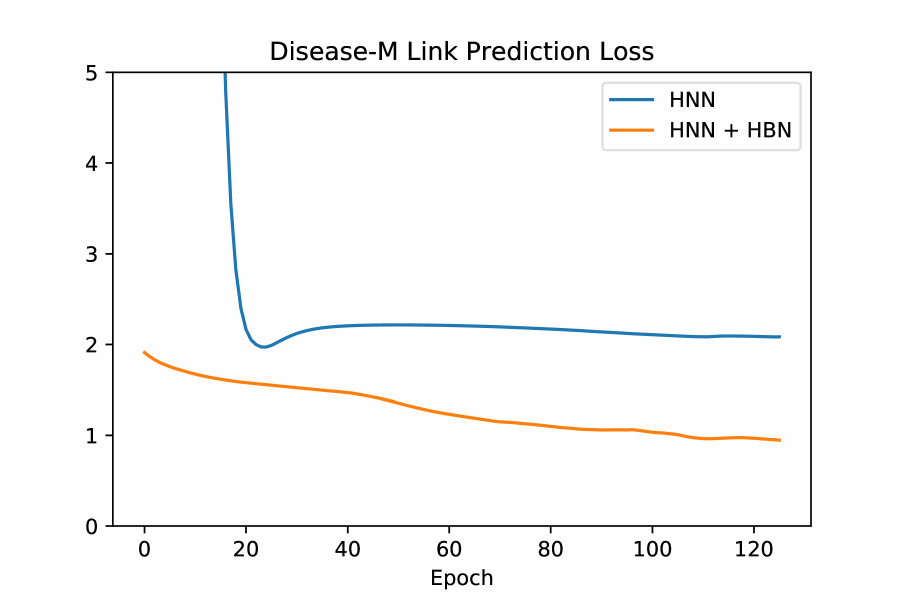}
    \end{subfigure}
    \begin{subfigure}{\textwidth}
        \centering
        \includegraphics[scale=0.40]{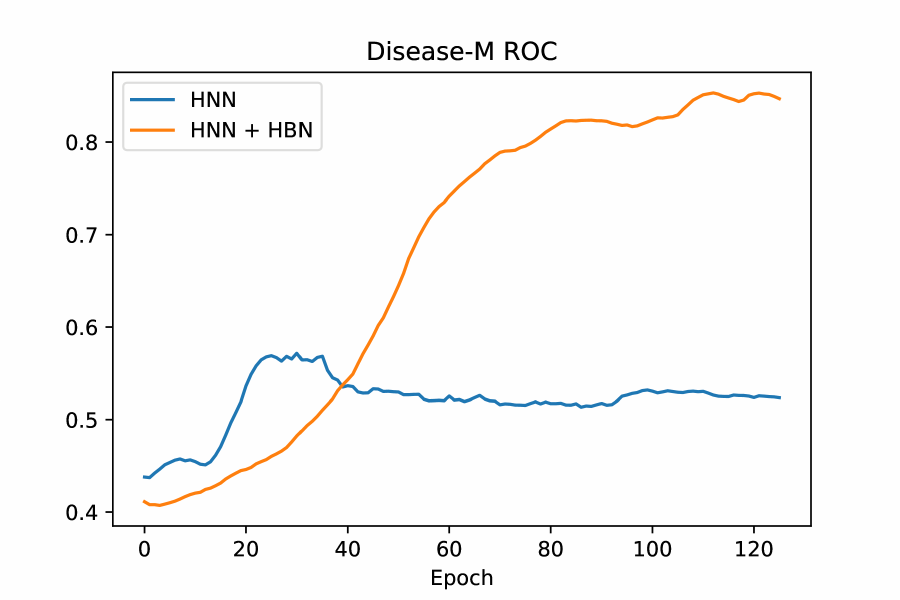}
    \end{subfigure}
\end{subfigure}
\caption{The graphs above correspond to a comparison of the HNN baseline, which uses a two-layer hyperbolic neural network encoder, and the baseline augmented with hyperbolic batch normalization after each layer. The columns correspond to the CoRA \cite{Sen2008CollectiveCI}, Disease \cite{Chami2019HyperbolicGC}, and Disease-M \cite{Chami2019HyperbolicGC} datasets, respectively. The top row shows the comparison in terms of validation loss, and the bottom row shows the comparison in terms of validation ROC AUC. The figures show that we converge faster and attain better performance in terms of both loss and ROC. Note that although CoRA is not hyperbolic (as previously mentioned), we find it encouraging that introducing hyperbolic batch normalization produces an improvement regardless of dataset hyperbolicity.}
\label{fig:allgraphs}
\end{figure*}

\subsection{Hyperbolic Batch Normalization}
Euclidean batch normalization \cite{Ioffe2015BatchNA} is one of the most widely used neural network operations that has, in many cases, obviated the need for explicit regularization such as dropout \cite{srivastava14a}. In particular, analysis demonstrates that batch normalization induces a smoother loss surface which facilitates convergence and yields better final results \cite{Santurkar2018HowDB}. Generalizing this for Riemannian manifolds is a natural extension, and such a computation would involve a differentiable Fr\'echet mean.

\subsubsection{Theoretical Formulation and Algorithm}

In this section we formulate Riemannian Batch Normalization as a natural extension of standard Euclidean Batch Normalization. This concept is, to the best of our knowledge, only touched upon by \citet{Brooks2019RiemannianBN} in the specific instance of the manifold of positive semidefinite matrices. However, we argue in Appendix \ref{appendix:rbn} that, unlike our method, their formulation is incomplete and lacks sufficient generality to be considered a true extension.

\begin{algorithm}[!htb]

\caption{\label{alg:RiemannianBN} Riemannian Batch Normalization}

\textbf{Training Input}: Batches of data points $\{x_1^{(t)}, \cdots, x_m^{(t)}\}\subseteq \mathcal{M}$ for $t \in [1, \dots, T]$, testing momentum $\eta \in [0, 1]$

\textbf{Learned Parameters}: Target mean $\mu'\in\mathcal{M}$, target variance $(\sigma')^{2}\in \mathbb{R}$ 

\textbf{Training Algorithm}:

$\mu_{test} \leftarrow \text{FrechetMean}(\{x_1^{(1)}, \dots, x_m^{(1)}\})$

$\sigma_{test} \leftarrow 0$

for $t = 1, \dots, T$:

\hspace{20pt} $\mu=\text{FrechetMean}(\{x_1^{(t)}, \dots, x_m^{(t)}\})$

\hspace{20pt} $\sigma^{2}=\frac{1}{m}\sum\limits_{i=1}^{m}d(x_i^{(t)},\mu)^{2}$

\hspace{20pt} $\mu_{test}=\text{FrechetMean}(\{\mu_{test}, \mu\}, \{\eta, 1-\eta\})$

\hspace{20pt} $\sigma_{test}= \frac{(t - 1)\sigma_{test}+ \sigma}{t}$

\hspace{20pt} for $i=1, \cdots, m$:

\hspace{40pt} $\tilde{x_i}^{(t)} \leftarrow \exp_{\mu'}\left(\frac{\sigma'}{\sigma} PT_{\mu \to \mu'} (\log_{\mu} x_i^{(t)})\right)$

\hspace{20pt} return normalized batch $\tilde{x_1}^{(t)}, \cdots, \tilde{x_m}^{(t)}$

\vspace{10pt}

\textbf{Testing Input}: Test data points $\{\overline{x_1}, \cdots, \overline{x_s}\}\subseteq \mathcal{M}$, final running mean $\mu_{test}$ and running variance $\sigma_{test}$

\textbf{Testing Algorithm}: 

$\overline{\mu} =\text{FrechetMean}(\{\overline{x_1}, \cdots, \overline{x_s}\})$

$\overline{\sigma}^{2}=\frac{1}{m}\sum\limits_{i=1}^{m}d(\overline{x_i}, \overline{\mu})^{2}$

for $i=1, \cdots, s$:

\hspace{20pt} $\tilde{\overline{x_i}} \leftarrow \exp_{\mu_{test}}\left(\frac{\sigma_{test}}{\overline{\sigma}} PT_{\overline{\mu} \to \mu_{test}} (\log_{\overline{\mu}} \overline{x_i})\right)$

return normalized batch $\tilde{\overline{x_1}}, \cdots, \tilde{\overline{x_s}}$

\end{algorithm}

Our full algorithm is given in Algorithm \ref{alg:RiemannianBN}. Note that in practice we use $\sqrt{\sigma^2 + \epsilon}$ in place of $\sigma$ as in the original formulation to avoid division by zero.



\subsubsection{Experimental Results}
We apply Riemannian Batch Normalization (specifically for hyperbolic space) to the encoding Hyperbolic Neural Network (HNN) \cite{Ganea2018HyperbolicNN} in the framework of \citet{Chami2019HyperbolicGC}. We run on the CoRA \cite{Sen2008CollectiveCI}, Disease \cite{Chami2019HyperbolicGC}, and Disease-M \cite{Chami2019HyperbolicGC} datasets and present the validation loss and ROC AUC diagrams in Figure \ref{fig:allgraphs}. 

In terms of both loss and ROC, our method results in both faster convergence and a better final result. These improvements are expected as they appear when applying standard batch normalization to Euclidean neural networks. So, our manifold generalization does seem to replicate the useful properties of standard batch normalization. Additionally, it is encouraging to see that, regardless of the hyperbolic nature of the underlying dataset, hyperbolic batch normalization produces an improvement when paired with a hyperbolic neural network.




\section{Conclusion and Future Work}

We have presented a fully differentiable Fr\'echet mean operation for use in any differentiable programming setting. Concretely, we introduced differentiation theory for the general Riemannian case, and for the demonstrably useful case of hyperbolic space, we provided a fast forward pass algorithm and explicit derivative computations. We demonstrated that using the Fr\'echet mean in place of tangent space aggregation yields state-of-the-art performance on link prediction tasks in graphs with tree-like structure. Additionally, we extended batch normalization (a standard Euclidean operation) to the realm of hyperbolic space. On a graph link prediction test task, we showed that hyperbolic batch normalization gives benefits similar to those experienced in the Euclidean setting.

We hope our work paves the way for future developments in geometric representation learning. Potential future work can focus on speeding up our computation of the Fr\'echet mean gradient, finding applications of our theory on manifolds beyond hyperbolic space, and applying the Fr\'echet mean to generalize more standard neural network operations.

\section{Acknowledgements}

We would like to acknowledge Horace He for his helpful comments regarding implementation. In addition, we would like to thank Facebook AI for funding equipment that made this work possible.

\bibliography{frechet_mean}
\bibliographystyle{icml2020}
\newpage

\onecolumn
\appendix








\section{Proof of Correctness of Fr\'echet Mean as Generalization of Euclidean Mean}\label{appendix:Gen}

In this section, we show that the Fr\'echet mean and variance are natural generalizations of Euclidean mean and variance. 

\begin{prop}\label{prop:appendixFrechetEquiv}
    On the manifold $\mathcal{M}=\mathbb{R}^{n}$, equations (\ref{frechetMean}) and (\ref{frechetVar}) are equivalent to the Euclidean mean and variance.
\end{prop}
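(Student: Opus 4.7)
The plan is to reduce this to the well-known bias-variance decomposition for sums of squared Euclidean distances. The first step is to note that on $\mathcal{M}=\mathbb{R}^n$ with its standard Riemannian metric $g^{\mathbb{E}}=I$, geodesics are straight lines and the induced distance function is simply $d(x,y)=\|x-y\|_2$. This converts the Fréchet objective into the familiar quadratic form $f(\mu)=\tfrac{1}{t}\sum_{l=1}^t \|x^{(l)}-\mu\|_2^2$.

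Next, I would invoke the identity
\begin{equation*}
    \sum_{l=1}^t \|x^{(l)} - \mu\|_2^2 \;=\; \sum_{l=1}^t \|x^{(l)} - \bar{x}\|_2^2 \;+\; t\,\|\bar{x}-\mu\|_2^2,
\end{equation*}
where $\bar{x} = \tfrac{1}{t}\sum_{l=1}^t x^{(l)}$ is the Euclidean mean. This follows by expanding $\|x^{(l)}-\mu\|_2^2 = \|(x^{(l)}-\bar{x}) + (\bar{x}-\mu)\|_2^2$ and observing that the cross term $2\sum_l \langle x^{(l)}-\bar{x}, \bar{x}-\mu\rangle$ vanishes because $\sum_l(x^{(l)} - \bar{x})=0$ by definition of $\bar{x}$.

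Once this identity is in place, the conclusion is immediate: the first term on the right is independent of $\mu$, while the second is a nonnegative quantity minimized uniquely at $\mu=\bar{x}$. Hence $\argmin_\mu f(\mu) = \bar{x}$, matching the Euclidean mean in (\ref{frechetMean}), and substituting $\mu=\bar{x}$ yields $\min_\mu f(\mu)=\tfrac{1}{t}\sum_{l=1}^t \|x^{(l)}-\bar{x}\|_2^2$, which is exactly the Euclidean (population) variance in (\ref{frechetVar}).

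There is essentially no hard step in this argument — the main subtlety is just making sure the reader sees that on $\mathbb{R}^n$ the Riemannian distance coincides with the Euclidean norm so that the Fréchet objective really is a quadratic. After that, the decomposition identity does all the work, simultaneously establishing both equalities (mean and variance) in one stroke, so I would not split the proof into separate cases for the two equations.
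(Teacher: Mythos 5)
Your proposal is correct and follows essentially the same route as the paper: both arguments expand the quadratic objective $\frac{1}{t}\sum_l \norm{x^{(l)}-\mu}_2^2$ and isolate a term minimized at the Euclidean mean, the paper doing so coordinate-wise by completing the square while you package the same computation as the vector-form identity $\sum_l \norm{x^{(l)}-\mu}_2^2 = \sum_l \norm{x^{(l)}-\bar{x}}_2^2 + t\norm{\bar{x}-\mu}_2^2$. The resulting minimizer and minimal value agree with the paper's expressions (the paper's per-coordinate variance sum equals your $\frac{1}{t}\sum_l \norm{x^{(l)}-\bar{x}}_2^2$), so no gap remains.
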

\begin{proof}
Expanding the optimization function gives:
\begin{align*}
    \frac{1}{t}\sum\limits_{l=1}^{t}d(x^{(l)},\mu)^{2} &= \frac{1}{t}\sum\limits_{l=1}^{t}\norm{\mu - x^{(l)}}_{2}^{2} = \frac{1}{t}\sum\limits_{l=1}^{t}\sum\limits_{i=1}^{n}(\mu_{i}-x_{i}^{(l)})^{2} \\
    &=\frac{1}{t}\sum\limits_{l=1}^{t}\left[\sum\limits_{i=1}^{n}\mu_{i}^{2}-2\sum\limits_{i=1}^{n}\mu_{i}x_{i}^{(l)}+\sum\limits_{i=1}^{n}(x_{i}^{(l)})^{2}\right] \\
    &=\sum\limits_{i=1}^{n}\mu_{i}^{2}-\sum\limits_{i=1}^{n}\frac{2}{t}\left(\sum\limits_{l=1}^{t}x_{i}^{(l)}\right)\mu_{i}+\frac{1}{t}\sum\limits_{i=1}^{n}\sum\limits_{l=1}^{t}(x_{i}^{(l)})^{2} \\
    &=\sum\limits_{i=1}^{n}\left(\mu_{i}-\left(\frac{1}{t}\sum\limits_{l=1}^{t}x_{i}^{(l)}\right)\right)^{2}+\sum\limits_{i=1}^{n}\left(\frac{1}{t}\sum\limits_{l=1}^{t}(x_{i}^{(l)})^{2}-\left(\frac{1}{t}\sum\limits_{l=1}^{t}x_{i}^{(l)}\right)^{2}\right)
\end{align*}
Thus, optimizing the above quadratic function in $\mu$ gives (by a simple gradient computation):
\begin{equation*}
    \argmin_{\mu\in \mathbb{R}^{n}}\frac{1}{t}\sum\limits_{l=1}^{t}d(x^{(l)},\mu)^{2}=\frac{1}{t}\sum\limits_{l=1}^{t}x^{(l)}
\end{equation*}
\begin{equation*}
    \min_{\mu\in \mathbb{R}^{n}}\frac{1}{t}\sum\limits_{l=1}^{t}d(x^{(l)},\mu)^{2}=\sum_{i = 1}^n \paren{\frac{1}{t}\sum\limits_{l=1}^{t} (x_{i}^{(l)})^{2}-\left(\frac{1}{t}\sum\limits_{l=1}^{t}x_{i}^{(l)}\right)^{2}}
\end{equation*}
We note that these are the mean and variance function in the standard Euclidean sense (where the total variance is the sum of variances on each coordinate).
\end{proof}

\section{General Theorems on Differentiating through the Argmin}\label{appendix:diffargmin}

In this section, we provide generalizations of theorems in \citet{Gould2016OnDP} that will be useful in our gradient derivations when differentiating through the Fr\'echet mean.

We first consider the case of differentiating through an unconstrained optimization problem. This is a generalization of Lemma 3.2 in \citet{Gould2016OnDP}. Note that again we use $\widetilde{\nabla}_{x^{(i)}}$ to represent the total derivative.

\begin{thm}\label{thm:diffthroughunconstrained}
    Let $f: \R^n \times \R^m \to \R$ be a twice differentiable function. Let $g : \R^n \to \R^m$ be given by $g(x) = \argmin_{y \in \R^m} f(x, y)$. Then
    
    \begin{equation}\label{eqn:diffthroughunconstrained}
        \widetilde{\nabla}_x g(x) = -\nabla_{yy}^2f(x, g(x))^{-1} \widetilde{\nabla}_x \nabla_y f(x, g(x))
    \end{equation}
\end{thm}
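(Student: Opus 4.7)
The plan is to apply the implicit function theorem to the first-order optimality condition. Since $g(x) = \argmin_{y \in \R^m} f(x, y)$ is an unconstrained minimizer of a twice-differentiable function, stationarity forces the identity
$$\nabla_y f(x, g(x)) = 0 \quad \text{for all } x$$
in an open set around any point of interest. This single identity, regarded as a function of $x$, is the entire source of the formula.

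Next I would differentiate both sides with respect to $x$, treating the left-hand side as the composition of $\nabla_y f : \R^n \times \R^m \to \R^m$ with the curve $x \mapsto (x, g(x))$. The chain rule splits the total derivative into a contribution from varying the first slot (with the second held fixed at $g(x)$) and a contribution from varying $g(x)$ in the second slot. The first piece is exactly $\widetilde{\nabla}_x \nabla_y f(x, g(x))$ in the theorem's notation, while the second piece is $\nabla_{yy}^2 f(x, g(x))$ composed with $\widetilde{\nabla}_x g(x)$. Setting the sum to zero gives
$$\nabla_{yy}^2 f(x, g(x))\, \widetilde{\nabla}_x g(x) + \widetilde{\nabla}_x \nabla_y f(x, g(x)) = 0,$$
and left-multiplying by $\nabla_{yy}^2 f(x, g(x))^{-1}$ yields the target expression in one line.

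The main subtlety is not the calculation but justifying the manipulation: one needs both invertibility of $\nabla_{yy}^2 f(x, g(x))$ and differentiability of $g$ itself, neither of which is automatic from being an argmin. I would handle both at once by invoking the implicit function theorem on $F(x, y) := \nabla_y f(x, y)$. Whenever the $y$-Jacobian of $F$, which is precisely the Hessian in question, is invertible at $(x, g(x))$, the level set $\{F = 0\}$ is locally the graph of a smooth function in $x$; by uniqueness of the minimizer this local graph must coincide with $g$, so $g$ is differentiable in a neighborhood and the above chain rule application is legitimate. This is the hidden non-degeneracy hypothesis, automatically satisfied whenever $f(x, \cdot)$ is strictly convex at the minimum. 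The main obstacle is thus articulating this regularity cleanly (or acknowledging that it is tacitly assumed in the hypothesis that $g$ is well-defined and differentiable) rather than anything in the algebra itself.
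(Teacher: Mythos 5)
Your proposal is correct and follows essentially the same route as the paper: differentiate the stationarity identity $\nabla_y f(x, g(x)) = 0$ with respect to $x$ via the chain rule and rearrange. Your additional remark invoking the implicit function theorem to justify invertibility of the Hessian and differentiability of $g$ makes explicit regularity assumptions that the paper's proof leaves tacit, which is a welcome refinement but not a different argument.
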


\begin{proof}
    From our definition of the optimization problem, we know that
    
    \begin{equation*}
        \nabla_y f(x, y)\Big|_{y=g(x)} = 0
    \end{equation*}
    
    Taking the derivative with respect to $x$ gives 
    \begin{align*}
        0 &= \jacob_{x}\left(\nabla_{y}f(x, g(x))\right)\nonumber \\
        &= \jacob_{x}\nabla_{y}f(x, g(x))\cdot \jacob_{x}(x)+ \nabla_{yy}^{2}f(x, g(x))\cdot \jacob_{x}g(x) \nonumber\\
        &=\jacob_{x}\nabla_{y}f(x, g(x))+ \nabla_{yy}^{2}f(x, g(x))\cdot \jacob_{x}g(x)
    \end{align*}
    
    and rearranging gives the desired result
    \begin{equation*}
        \jacob_x g(x) = -\nabla_{yy}^2f(x, g(x))^{-1} \jacob_x \grad_y f(x, g(x))
    \end{equation*}
\end{proof}

We then consider the case of differentiating through a constrained optimization problem. This is a generalization of Lemma 4.2 in \citet{Gould2016OnDP}.

\begin{thm}\label{thm:diffthroughconstrained}
Let $f:\mathbb{R}^{n}\times \mathbb{R}^{n}\rightarrow \mathbb{R}$ be continuous with second derivatives. Let $A\in \mathbb{R}^{m\times n}$, $b\in \mathbb{R}^{m}$ with $\rank(A)=m$. Let $g:\mathbb{R}^{n}\rightarrow \mathbb{R}^{n}$ be defined by $g(x)=\argmin\limits_{y\in \mathbb{R}^{n}:Ay=b}f(x, y)$, and let $H=\nabla_{YY}^{2}f(x, g(x))\in \mathbb{R}^{n\times n}$, then we have 
\begin{align}
    \jacob_{x} g(x)=\left(H^{-1}A^{\top}(AH^{-1}A^{\top})^{-1}AH^{-1}-H^{-1}\right)\nabla_{XY}^{2}f(x, g(x)) 
\end{align}
where $\nabla_{XY}^{2}f(x, y)=\jacob_{x}\nabla_{y}f(x, y)$, and $\nabla_{YY}^{2}f(x, y)=\nabla_{yy}^{2}f(x, y)$.
\end{thm}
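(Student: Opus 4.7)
The plan is to proceed by implicit differentiation of the KKT optimality conditions for the constrained problem defining $g$. Because $\rank(A) = m$ the linear-independence constraint qualification is satisfied, so at the minimizer $g(x)$ there is a unique Lagrange multiplier $\lambda(x) \in \mathbb{R}^{m}$ for which $\nabla_y f(x, g(x)) + A^{\top}\lambda(x) = 0$ and $A\,g(x) = b$. First I would treat these two identities as functions of $x$ and differentiate each with respect to $x$, obtaining the coupled linear system
\begin{align*}
    \nabla_{XY}^{2} f(x, g(x)) + H \, \jacob_{x} g(x) + A^{\top} \jacob_{x} \lambda(x) &= 0, \\
    A \, \jacob_{x} g(x) &= 0.
\end{align*}

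Next, invertibility of $H$ (implicit in the theorem statement, where $H^{-1}$ appears) lets me solve the first equation for $\jacob_{x} g(x) = -H^{-1}\bigl(\nabla_{XY}^{2}f(x, g(x)) + A^{\top}\jacob_{x} \lambda(x)\bigr)$. Left-multiplying this by $A$ and using the constraint-derivative relation $A\jacob_{x} g(x) = 0$ produces $AH^{-1}\nabla_{XY}^{2}f(x, g(x)) + AH^{-1}A^{\top}\jacob_{x} \lambda(x) = 0$. The matrix $AH^{-1}A^{\top}$ is invertible because $A$ has full row rank and $H^{-1}$ is nonsingular, so I can isolate the multiplier derivative as $\jacob_{x} \lambda(x) = -(AH^{-1}A^{\top})^{-1}AH^{-1}\nabla_{XY}^{2}f(x, g(x))$. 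Substituting this expression back into the formula for $\jacob_{x} g(x)$ and collecting terms yields exactly the stated identity.

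The main obstacle is justifying that $g$ (and therefore $\lambda$) depends smoothly on $x$ in a neighborhood of the point in question, so that the formal differentiation of the KKT identities used above is legitimate. For this I would apply the implicit function theorem to the map $F : (x, y, \lambda) \mapsto \bigl(\nabla_y f(x, y) + A^{\top}\lambda,\; Ay - b\bigr)$, whose Jacobian with respect to $(y, \lambda)$ is the saddle-point matrix $\bigl(\begin{smallmatrix} H & A^{\top} \\ A & 0 \end{smallmatrix}\bigr)$. A standard block-matrix calculation shows that this matrix is nonsingular precisely when $H$ is invertible on $\ker A$ and $A$ has full row rank, both of which are already encoded in the hypotheses of the theorem. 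Once smoothness of $g$ and $\lambda$ is established, the remainder of the argument reduces to the linear-algebraic manipulation above, mirroring the Euclidean template from Lemma 4.2 of \citet{Gould2016OnDP}.
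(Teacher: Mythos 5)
Your proposal is correct and follows essentially the same route as the paper: differentiate the stationarity and feasibility conditions of the Lagrangian with respect to $x$, then solve the resulting linear system for $\jacob_x g(x)$ by eliminating the multiplier derivative. Your additional implicit-function-theorem argument for the smoothness of $g$ and $\lambda$ is a welcome justification that the paper (following \citet{Gould2016OnDP}) leaves implicit, but it does not change the substance of the argument.
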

\begin{proof}
The proof is essentially the same as Gould's proof, and the only thing we need to be careful about is to ensure that the dimension of all quantities still make sense when we pass the partial derivative with respect to $x\in \mathbb{R}$ into the gradient with respect to $x\in \mathbb{R}^{n}$. We will carefully reproduce the steps in Gould's proof and make modifications if needed:

\vspace{10pt}

(1) \textbf{Formulating the Lagrangian}: The optimization problem that we are trying to solve is $g(x)=\argmin\limits_{y\in \mathbb{R}^{n}: Ay=b}f(x, y)$. 
We formulate its Lagrangian to be $L(x, y, \lambda)=f(x, y)+\lambda^{\top}(Ay-b)$. 

\vspace{10pt}

Let $\Tilde{g}(x)=(y^{*}(x), \lambda^{*}(x))$ be the optimal primal-dual pair, and write $\jacob_{x}\Tilde{g}(x)=(\jacob_{x}y^{*}(x), \nabla_{x}\lambda^{*}(x))=(\Tilde{g}_{Y}(x), \Tilde{g}_{\Lambda}(x))$. Note that we have $\Tilde{g}_{Y}(x)\in \mathbb{R}^{n\times n}$ and $\Tilde{g}_{\Lambda}(x)\in \mathbb{R}^{m\times n}$.

\vspace{10pt}

(2) \textbf{Derivative conditions from the Lagrangian}: From choice of optimal points $(y^{*}(x), \lambda^{*}(x))$, we have 
\begin{equation*}
    \begin{cases}
        \nabla_{y}L(x, y, \lambda)=0 \\
        \nabla_{\lambda}L(x, y, \lambda)=0
    \end{cases}\Rightarrow  \begin{cases}
        \nabla_{Y}f(x, y^{*}(x))+A^{\top}\lambda^{*}(x)=0 \\
        Ay^{*}(x)-b=0
    \end{cases}
\end{equation*}
We note that the first equation has both sides in $\mathbb{R}^{n}$, and the second equation has both sides in $\mathbb{R}^{m}$. 

\vspace{10pt}

Now we take the Jacobian $\jacob_{x}$ for both equations\footnote{Note that $\nabla_{x}$ (taking gradient over variable $x$) is different from $\nabla_{X}$ (taking the gradient over the first variable of the function).}. For the first equation, applying the chain rule will result in 
\begin{align*}
    0 &= \jacob_{x}\left(\nabla_{Y}f(x, y^{*}(x))+A^{\top}\lambda^{*}(x)\right)\nonumber \\
    &= \nabla_{XY}^{2}f(x, y^{*}(x))\cdot \jacob_{x}(x)+\nabla_{YY}^{2}f(x, y^{*}(x))\cdot \jacob_{x}(y^{*}(x))+A^{\top}\jacob_{x}(\lambda^{*}(x))\nonumber \\
    &= \nabla_{XY}^{2}f(x, y^{*}(x))+\nabla_{YY}^{2}f(x, y^{*}(x))\cdot \Tilde{g}_{Y}(x)+A^{\top}\Tilde{g}_{\Lambda}(x)
\end{align*}
For the second equation, this will result in \begin{equation*}
    0=\jacob_{x}\left(Ay^{*}(x)-b\right)=A\cdot \jacob_{x}(y^{*}(x))=A\Tilde{g}_{Y}(x)
\end{equation*}
The above two equations give the following system:
\begin{equation*}
    \begin{cases}
        \nabla_{XY}^{2}f(x, g(x))+\nabla_{YY}^{2}f(x, g(x))\cdot \Tilde{g}_{Y}(x)+A^{\top}\Tilde{g}_{\Lambda}(x)=0 \\
        A\Tilde{g}_{Y}(x)=0
    \end{cases}
\end{equation*}
where the first equation has both sides in $\mathbb{R}^{n\times n}$, and the second equation has both sides in $\mathbb{R}^{m\times n}$.

\vspace{10pt}

(3) \textbf{Computing the Jacobian matrix}: Now we will solve for $\Tilde{g}_{Y}(x)$ based on the above equations. We will denote $H=\nabla_{YY}^{2}f(x, g(x))$.

\vspace{10pt}

We first solve for $\Tilde{g}_{Y}(x)$ in the first equation:
\begin{equation*}
    \Tilde{g}_{Y}(x)=-H^{-1}\left(\nabla_{XY}^{2}f(x, g(x))+A^{\top}\Tilde{g}_{\Lambda}(x)\right)
\end{equation*}
We then substitute this value in the second equation to get 
\begin{equation*}
    0=A\left(-H^{-1}\left(\nabla_{XY}^{2}f(x, g(x))+A^{\top}\Tilde{g}_{\Lambda}(x)\right)\right)=-AH^{-1}\nabla_{XY}^{2}f(x, g(x))-AH^{-1}A^{\top}\Tilde{g}_{\Lambda}(x)
\end{equation*}
So we can solve for $\Tilde{g}_{\Lambda}(x)$:
\begin{equation*}
    \Tilde{g}_{\Lambda}(x)=-(AH^{-1}A^{\top})^{-1}AH^{-1}\nabla_{XY}^{2}f(x, g(x))
\end{equation*}
We finally plug this into the first equation again:
\begin{align*}
    \Tilde{g}_{Y}(x) &= -H^{-1}\left(\nabla_{XY}^{2}f(x, g(x))+A^{\top}\left(-(AH^{-1}A^{\top})^{-1}AH^{-1}\nabla_{XY}^{2}f(x, g(x))\right)\right) \\ 
    &=-H^{-1}\nabla_{XY}^{2}f(x, g(x))+H^{-1}A^{\top}(AH^{-1}A^{\top})^{-1}AH^{-1}\nabla_{XY}^{2}f(x, g(x)) \\
    &=\left(H^{-1}A^{\top}(AH^{-1}A^{\top})^{-1}AH^{-1}-H^{-1}\right)\nabla_{XY}^{2}f(x, g(x))
\end{align*}
From our definition of $\Tilde{g}_{Y}$, we know that $\jacob_{x}y^{*}(x)=\tilde{g}_{Y}(x)$, where $y^{*}(x)=g(x)$ is the optimal solution for the original optimization problem. Thus, we have 
\begin{equation*}
    \jacob_{x}g(x)=\left(H^{-1}A^{\top}(AH^{-1}A^{\top})^{-1}AH^{-1}-H^{-1}\right)\nabla_{XY}^{2}f(x, g(x))
\end{equation*}
which is what we want to show.
\end{proof}

\section{In-depth Differential Geometry Background}\label{appendix:detailedDG}

In this section we give a more formal introduction to differential geometry, which will be critical in the proof of and understanding of our differentiation theorem. Most of these definitions originate from \cite{lee2003introduction} or \cite{lee1997riemannian}.

\subsection{Additional Manifold Definitions}

\textbf{Manifolds:} An $n$-dimensional manifold $\M$ is a second countable Hausdorff space that is locally homeomorphic to $\R^n$. On open subsets of $\M$, we define a coordinate chart $(U, \varphi)$ where $\varphi: U \to \tilde{U} \subseteq \R^n$ is the homeomorphism.

\textbf{Smooth manifolds:} A manifold $\M$ with dimension $n$ is smooth if it has a smooth atlas, i.e. a collection of charts $(U, \varphi)$ such that for any two charts $(U, \varphi)$ and $(V, \psi)$, either $\psi \circ \varphi^{-1}$ is a diffeomorphism or $U \cap V = \emptyset$.

\textbf{Tangent spaces:} A tangent vector $v$ at a point $p \in \M$ is a linear map $v: C^\infty(M) \to \R$ which satisfies $v(fg) = f(p)vg + g(p)vf$. This linear map is also commonly called a derivation at $p$. The tangent space $T_pM$ is the collection of these derivations. The tangent space is isomorphic to $\R^n$ as a vector space and there exist bases $\parderiv{}{x^i}\rvert_p$.

\textbf{Coordinate systems:} Instead of writing out charts $(U, \varphi)$, we use local coordinates $(x^i)$ (which constitute maps from $U \to \R^n$ but provide a cleaner notation). We associate local coordinates $x^i$ with their induced tangent vectors $\parderiv{}{x^i}\rvert_p$, which form a basis for the tangent space.

\textbf{Pushforward:} A smooth map between manifolds $F: \M \to \mathcal{N}$ admits a push-forward $F_*: T_p\M \to T_{F(p)}\mathcal{N}$ given by $(F_*v)(f) = v(f \circ F)$. When the bases for $T_p\M$ and $T_{f(p)}\mathcal{N}$ are constructed by local coordinates, we can represent $F$ by $\tilde{F}$, which computes in these coordinates. Then $F_*$ corresponds to $\jacob \tilde{F}$. Note that since the pushfoward is defined without respect to local coordinates, when given local coordinates, we oftentimes use the above two formulations interchangeably as $\jacob F$.

\textbf{Differential:} The exterior derivative of a function $f: \M \to \R$ in terms of local coordinates $(x_1, \dots, x_n)$ is given by $df = \begin{bmatrix} \partial_1 f \\ \vdots \\ \partial_n f \end{bmatrix}$ where $\partial_i f = \parderiv{f}{x^i}$. This can be generalized to differential forms (in this case we only consider $0$-forms), but that is outside the scope of our paper. We will oftentimes just write this as $\nabla f$ to stress the matrix value version.

\subsection{Additional Riemannian Geometry Definitions}

\textbf{Riemannian gradient:} On a Riemannian manifold $(\M, \rho)$, the Riemannian gradient $\nabla^r$ of a function $f: \M \to \R$ is defined as $\nabla^{r}f=\rho^{-1} \circ \nabla f$, where $\rho^{-1}$ is taken as a matrix and the gradient is taken with respect to local coordinates.

\textbf{Geodesics:} Geodesics are formally given as curves which have zero acceleration w.r.t. the Levi-Civita Connection. This material is outside the scope of this paper, but it is important to note that geodesics can be non-minimizing.

\textbf{Conjugate points:} Conjugate points $(p, q)$ can be thought of as places where the geodesic between $p$ and $q$ is non-minimizing. The more formal definition involves Jacobi fields, where conjugate points are points where the nontrivial Joacbia field vanishes. However, this is outside the scope of this paper.

\textbf{Hadamard manifolds:} A Hadamard manifold is a manifold with everywhere non-positive sectional curvature. This space enables convex analysis/optimization since it is topologically trivial (similar to $\R^n$). Some canonical examples of Hadamard manifolds include Euclidean space $\mathbb{R}^{n}$, hyperbolic space $\mathbb{H}^{n}$, and the space of symmetric positive definite (SPD) matrices $\mathbb{S}_{+}^{n}$. A Hadamard manifold is geodesically complete and has no conjugate points.

\subsection{A Few Useful Lemmas}

\begin{lemma}
    Around each point $x \in \M$, $\exp_x$ is a local diffeomorphism.
\end{lemma}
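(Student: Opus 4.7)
The plan is to apply the inverse function theorem at $0 \in T_x\M$. Recall that $T_x\M$ is itself an $n$-dimensional vector space, hence a smooth manifold, and so $\exp_x : T_x\M \to \M$ is a smooth map between smooth manifolds of the same dimension. By the inverse function theorem, it suffices to exhibit a single point in $T_x\M$ at which the pushforward of $\exp_x$ is a linear isomorphism; the natural candidate is $0 \in T_x\M$, with the canonical identification $T_0(T_x\M) \cong T_x\M$ that identifies the tangent vector to the curve $t \mapsto tv$ at $t=0$ with $v$ itself.

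First I would compute the pushforward $(\exp_x)_* : T_0(T_x\M) \to T_x\M$ at the origin. Given $v \in T_x\M$, consider the smooth curve $\alpha(t) = tv$ in $T_x\M$, whose velocity at $t=0$ is (under the canonical identification) $v$. Then $\exp_x \circ \alpha$ is by definition the geodesic $\gamma_v$ with $\gamma_v(0) = x$ and $\gamma_v'(0) = v$; hence
\begin{equation*}
    (\exp_x)_*(v) \;=\; \frac{d}{dt}\Big|_{t=0}\exp_x(tv) \;=\; \gamma_v'(0) \;=\; v.
\end{equation*}
So $(\exp_x)_*$ at $0$ is the identity map on $T_x\M$, which is in particular a linear isomorphism.

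Having established that the differential is invertible at $0$, the inverse function theorem yields an open neighborhood $U \subseteq T_x\M$ of $0$ and an open neighborhood $V \subseteq \M$ of $\exp_x(0) = x$ such that $\exp_x\big|_U : U \to V$ is a diffeomorphism. This gives a diffeomorphism on a neighborhood of $0$; to obtain the statement of the lemma, namely that $\exp_x$ is a local diffeomorphism \emph{around each point} $y$ at which $\exp_x$ is defined, one simply observes that a smooth map whose pushforward is invertible at every point is a local diffeomorphism, and invoke the inverse function theorem at the relevant point of $T_x\M$; however, the usual reading of this lemma in the present paper is the version at $0$, which is what licenses the definition of $\log_x$ on a neighborhood of $x$.

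The argument is essentially routine, so there is no real obstacle; the one subtle point to get right is the canonical identification $T_0(T_x\M) \cong T_x\M$, since $\exp_x$ has a vector space as its domain and one must be careful to treat directions in $T_x\M$ simultaneously as points (to plug into $\exp_x$) and as tangent vectors (to differentiate along). Making this identification explicit via the curve $t \mapsto tv$, as above, is the cleanest way to avoid confusion and makes the computation $(\exp_x)_*|_0 = \mathrm{Id}$ transparent.
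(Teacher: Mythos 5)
Your proof is correct and follows the same route the paper takes: the paper simply cites Lemma 5.10 of Lee's \emph{Riemannian Manifolds}, which is exactly this argument — compute $d(\exp_x)_0 = \mathrm{Id}$ under the canonical identification $T_0(T_x\M) \cong T_x\M$ and invoke the inverse function theorem. You have just written out in full the standard argument that the paper invokes by reference, including the correct reading of the lemma as a statement about a neighborhood of $0 \in T_x\M$ (equivalently, of $x \in \M$), which is what justifies the local definition of $\log_x$.
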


\begin{proof}
    This is given in Lemma 5.10 of \cite{lee1997riemannian} as a consequence of the inverse function theorem.
\end{proof}

\begin{lemma}[Chain Rule]\label{lem:chainrulejacobian}
    Suppose $g: \M \to \mathcal{N}$ and $f: \mathcal{N} \to \mathcal{L}$ be smooth maps between manifolds. The $\widetilde{\nabla}(f \circ g)(x) = \widetilde{\nabla}f(g(x)) \circ \widetilde{\nabla} g(x)$.
\end{lemma}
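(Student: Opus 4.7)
The plan is to unwind $\widetilde{\nabla}$ back to the pushforward $F_*$ as identified in the background section, then verify the chain rule directly on derivations. Recall from Appendix \ref{appendix:detailedDG} that for a smooth map $F : \M \to \mathcal{N}$ and point $p \in \M$, the pushforward $F_{*,p} : T_p \M \to T_{F(p)}\mathcal{N}$ is defined by $(F_{*,p} v)(h) = v(h \circ F)$ for every $h \in C^\infty(\mathcal{N})$, and in local coordinates this pushforward is represented by the Jacobian matrix of $F$, which is exactly what $\widetilde{\nabla}F$ denotes. So the lemma really says $(f \circ g)_{*,x} = f_{*,g(x)} \circ g_{*,x}$.

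With this identification in hand, I would fix $x \in \M$ and an arbitrary derivation $v \in T_x \M$, then evaluate both sides on an arbitrary test function $h \in C^\infty(\mathcal{L})$. Using only the definition of the pushforward and associativity of function composition, the computation
\[
((f \circ g)_{*,x} v)(h) = v(h \circ (f \circ g)) = v((h \circ f) \circ g) = (g_{*,x} v)(h \circ f) = (f_{*,g(x)}(g_{*,x} v))(h)
\]
collapses in one line. Since $h$ and $v$ were arbitrary, this yields the equality of linear maps $T_x \M \to T_{f(g(x))}\mathcal{L}$, which is precisely the stated chain rule after translating back to the $\widetilde{\nabla}$ notation.

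As a sanity check tying this to the matrix form used later in the Fréchet mean derivations, I would also note that if one picks coordinate charts around $x$, $g(x)$, and $f(g(x))$, the pushforward of each map is represented by the Jacobian matrix of its coordinate expression, and the identity above reduces to the ordinary multivariable chain rule $\jacob(\tilde f \circ \tilde g)(x) = \jacob \tilde f(\tilde g(x)) \cdot \jacob \tilde g(x)$. This coordinate perspective is the one actually invoked when applying the lemma inside proofs of Theorems \ref{thm:diffFrechetMean} and \ref{thm:diffFrechetMeanEmbedded}.

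The main obstacle is purely notational: ensuring that the two roles played by $\widetilde{\nabla}$ — as the coordinate-free pushforward between tangent spaces, and as the Jacobian of a coordinate expression — are being treated as the same object via the basis $\partial/\partial x^i$, so that composition of linear maps matches matrix multiplication. Once that identification is made explicit, there is no analytic subtlety: smoothness of $f$ and $g$ guarantees smoothness of $f\circ g$ and hence well-definedness of all three pushforwards, and no global issues (conjugate points, completeness) arise because the statement is purely pointwise at $x$.
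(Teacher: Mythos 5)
Your proposal is correct and matches the paper's argument: the paper's proof is a one-line appeal to the standard chain rule, and your derivation-based computation $((f\circ g)_{*}v)(h)=v(h\circ f\circ g)=(f_{*}(g_{*}v))(h)$, together with the coordinate identification of the pushforward with the Jacobian, is exactly the standard argument the paper leaves implicit. You simply supply the details (and the sanity check on the matrix form) that the paper omits.
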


\begin{proof}
    This follows directly from application of the standard chain rule.
\end{proof}

\begin{cor}
    We note that if $g: \M \to \mathcal{N}$ and $f: \mathcal{N} \to \R$ be smooth maps between Riemannian manifolds, with $\M$ having metric $\rho$, then
    
    \begin{equation}\label{eqn:riemannianGradientChainRule}
       \nabla_x^r (f \circ g)(x) = \rho_x^{-1} \circ (\widetilde{\nabla} g(x))^\top \circ \nabla f(g(x))
    \end{equation}
\end{cor}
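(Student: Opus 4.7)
The plan is short: chain together the definition of the Riemannian gradient from Appendix \ref{appendix:detailedDG} with the Jacobian chain rule (Lemma \ref{lem:chainrulejacobian}), being careful about the transpose that relates the column-vector gradient $\nabla h$ to the row-vector Jacobian $\widetilde{\nabla} h$ for a scalar-valued function $h$.

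First I would apply the definition of the Riemannian gradient to the composition $h = f \circ g : \M \to \R$, obtaining
\begin{equation*}
    \nabla_x^r (f \circ g)(x) = \rho_x^{-1} \, \nabla_x (f \circ g)(x).
\end{equation*}
This reduces the problem to computing the ordinary column-vector gradient of the composition in local coordinates, and the metric factor $\rho_x^{-1}$ on the left simply tags along unchanged.

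Next I would invoke Lemma \ref{lem:chainrulejacobian} to write $\widetilde{\nabla}(f \circ g)(x) = \widetilde{\nabla} f(g(x)) \circ \widetilde{\nabla} g(x)$. Because $f$ is scalar-valued, $\widetilde{\nabla} f$ is a $1 \times \dim \mathcal{N}$ row vector whose transpose equals the column gradient $\nabla f$, and similarly $\widetilde{\nabla}(f \circ g)$ is a row vector whose transpose is $\nabla(f \circ g)$. Transposing both sides of the chain rule identity and using $(AB)^\top = B^\top A^\top$ yields $\nabla(f \circ g)(x) = (\widetilde{\nabla} g(x))^\top \, \nabla f(g(x))$. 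Substituting this back into the displayed equation above gives exactly the claimed identity \eqref{eqn:riemannianGradientChainRule}.

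The main — and really only — obstacle is the bookkeeping of this transpose, which arises from the paper's convention that $\nabla$ denotes the column-vector differential while $\widetilde{\nabla}$ denotes the Jacobian matrix of possibly vector-valued maps between manifolds. Once the convention is pinned down, no additional manifold-theoretic machinery beyond the chain rule and the definition of $\nabla^r$ is required; this is precisely why the statement is presented as a corollary of Lemma \ref{lem:chainrulejacobian} rather than as a standalone theorem.
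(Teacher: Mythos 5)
Your proposal is correct and follows exactly the route the paper intends: the paper's proof is a one-line appeal to Lemma \ref{lem:chainrulejacobian} together with the definition of the Riemannian gradient, and your argument simply spells out that appeal, including the transpose bookkeeping relating $\widetilde{\nabla}$ of a scalar-valued map to $\nabla$. No gaps.
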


\begin{proof}
    This is a direct consequence of Lemma \ref{lem:chainrulejacobian} above and definition of Riemannian gradient.
\end{proof}

\section{Differentiating Argmin on Manifolds}\label{appendix:diffManifold}

We now extend the results in Theorem \ref{thm:diffthroughunconstrained} to product of Riemannian manifolds.

\begin{thm}\label{thm:diffthroughmanifoldgeneral}
    Let $(\M, \rho)$ be an $m$-dimensional Riemannian manifold, $(\mathcal{N}, \phi)$ be an $n$-dimensional Riemannian manifold, and $f : \M \times \mathcal{N} \to \R$ be twice differentiable. Let $g(x) = \argmin_{y \in \mathcal{N}} f(x, y)$. With respect to local coordinates on $\M$ and $\mathcal{N}$, we have 
    \begin{equation}\label{eqn:manifoldgradientgeneral}
        \widetilde{\nabla} g(x) = -\nabla_{yy} f(x, g(x))^{-1} \circ \widetilde{\nabla}_x \nabla_y f(x, g(x))
    \end{equation}
    In particular, if $\mathcal{L}: \M \to \R$ is some differentiable function, then we can calculate the Riemannian gradient as
    \begin{equation}\label{eqn:manifoldrgradient}
        \nabla_x^r (\mathcal{L} \circ g)(x) = - \rho_x^{-1} \circ (\widetilde{\nabla}_x \nabla_y f(x, g(x)))^\top \circ \nabla_{yy} f(x, g(x))^{-1} \circ \nabla \mathcal{L}(g(x))
    \end{equation}
\end{thm}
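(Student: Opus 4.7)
The plan is to reduce the statement to the Euclidean case already established in Theorem \ref{thm:diffthroughunconstrained} by working inside local coordinate charts on both $\M$ and $\mathcal{N}$. First I would fix a chart $(U,\varphi)$ around $x \in \M$ and a chart $(V,\psi)$ around $g(x) \in \mathcal{N}$; since $\mathcal{N}$ is locally homeomorphic to $\R^n$ (and, when invoking exponential-map coordinates, $\exp$ is a local diffeomorphism by the lemma in Appendix \ref{appendix:detailedDG}), such charts exist and the argmin $g(x)$ lies in the interior of $\psi(V) \subseteq \R^n$. In these charts $f$ pulls back to a twice-differentiable function $\tilde f : \varphi(U) \times \psi(V) \to \R$, and $g$ pulls back to $\tilde g(\tilde x) = \psi \circ g \circ \varphi^{-1}(\tilde x)$, which by construction satisfies $\tilde g(\tilde x) = \argmin_{\tilde y \in \psi(V)} \tilde f(\tilde x, \tilde y)$ (at least locally near $\tilde x = \varphi(x)$).

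Next I would apply Theorem \ref{thm:diffthroughunconstrained} directly to $\tilde f$ and $\tilde g$ in this Euclidean coordinate setting, obtaining
\begin{equation*}
\widetilde{\nabla}_{\tilde x} \tilde g(\tilde x) = -\nabla_{\tilde y \tilde y}^2 \tilde f(\tilde x, \tilde g(\tilde x))^{-1} \circ \widetilde{\nabla}_{\tilde x} \nabla_{\tilde y} \tilde f(\tilde x, \tilde g(\tilde x)).
\end{equation*}
Because the theorem is stated ``with respect to local coordinates,'' the left-hand side is by definition $\widetilde{\nabla} g(x)$, the gradient $\nabla_y f$ is the coordinate gradient $\nabla_{\tilde y} \tilde f$, and the Hessian $\nabla_{yy} f$ is the coordinate Hessian $\nabla_{\tilde y \tilde y}^2 \tilde f$. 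Rewriting yields equation (\ref{eqn:manifoldgradientgeneral}).

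For the Riemannian gradient formula (\ref{eqn:manifoldrgradient}), I would compose with $\mathcal{L} : \M \to \R$ and invoke the chain rule (Lemma \ref{lem:chainrulejacobian}), giving $\widetilde{\nabla}(\mathcal{L} \circ g)(x) = \widetilde{\nabla}\mathcal{L}(g(x)) \circ \widetilde{\nabla} g(x)$. Substituting the expression just derived for $\widetilde{\nabla} g(x)$ and then converting the resulting coordinate differential into a Riemannian gradient via equation (\ref{eqn:riemannianGradientChainRule}), which multiplies by $\rho_x^{-1}$ and takes a transpose, produces precisely the stated formula.

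The main obstacle is conceptual rather than computational: I need to justify that the coordinate-based derivation matches the intrinsic objects in the statement. Two points require care. First, the Hessian $\nabla_{yy} f(x,g(x))$ must be invertible at the minimizer; this is the standard nondegeneracy hypothesis implicit in differentiating through an argmin, and it is what makes the pullback $\tilde f$ satisfy the assumptions of Theorem \ref{thm:diffthroughunconstrained}. Second, I should verify that the final Jacobian $\widetilde{\nabla} g(x)$ transforms correctly as a pushforward between tangent spaces, even though the intermediate factors $\nabla_{yy} f$ and $\widetilde{\nabla}_x \nabla_y f$ are individually chart-dependent. A brief change-of-coordinates check shows that the transformation matrices from the Hessian inverse and from the mixed derivative cancel in the composition, so the displayed formula yields a well-defined map $T_x\M \to T_{g(x)}\mathcal{N}$ independent of the chart choice; this observation is what legitimizes passing between ``coordinate'' and ``manifold'' language throughout the proof.
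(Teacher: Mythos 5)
Your proposal is correct and follows essentially the same route as the paper: pass to local charts around $x$ and $g(x)$, observe that the global minimizer is a local (interior) minimizer so Theorem \ref{thm:diffthroughunconstrained} applies to the coordinate representation, and then obtain the Riemannian gradient formula from the chain rule together with equation (\ref{eqn:riemannianGradientChainRule}) and the symmetry of the Hessian. The added remark on chart-independence of the composed Jacobian is a nice (though not strictly required) supplement that the paper leaves implicit.
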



\begin{proof}
    (1) \textbf{We first show the validity of equation (\ref{eqn:manifoldgradientgeneral}).} Note that this is the same expression as equation (\ref{eqn:diffthroughunconstrained}), so all that remains to be proven is that we can apply Theorem \ref{thm:diffthroughunconstrained}. 
    
    The proof for Theorem \ref{thm:diffthroughunconstrained} only depends on the gradient of $f$ being $0$. We know that $g(x)$ is a global minimum, so for any chart on $\mathcal{N}$ around $g(x)$, $g(x)$ is a local minimum. Therefore, we can take local coordinates and apply Theorem \ref{thm:diffthroughunconstrained} on these coordinates to obtain our result.
    
    \vspace{10pt}
    
    (2) \textbf{We then show the validity of equation (\ref{eqn:manifoldrgradient}) from the definition of Riemannian gradient.} This follows immediately from the definition of Riemannian gradient:
    \begin{align*}
        \nabla_x^r (\mathcal{L} \circ g)(x) &= \rho_x^{-1} \circ \widetilde{\nabla}_x g(x)^\top \circ \nabla \mathcal{L}(g(x))\\
        &= \rho_{x}^{-1} \circ (-\nabla_{yy} f(x, g(x))^{-1} \circ \widetilde{\nabla}_x \nabla_y f(x, g(x)))^\top \circ \nabla \mathcal{L}(g(x))\\
        &= - \rho_{x}^{-1} \circ (\widetilde{\nabla}_x \nabla_y f(x, g(x)))^\top \circ (\nabla_{yy} f(x, g(x))^{-1})^\top \circ \nabla \mathcal{L}(g(x))\\
        &= -\rho_{x}^{-1} \circ (\widetilde{\nabla}_x \nabla_y f(x, g(x)))^\top \circ \nabla_{yy} f(x, g(x))^{-1} \circ \nabla \mathcal{L}(g(x))
    \end{align*}
    where we note that $\nabla_{yy}f(x, g(x))^{-1}$ is symmetric (since the Jacobian is symmetric).
\end{proof}

\begin{remark}
    While the above theorem is sufficient for establishing the existence of a local computation of gradient of an argmin on a manifold, it is not conducive for actual computation due to difficulty in obtaining a local coordinate chart. For practical purposes, we present two more computationally tractable versions below, one of which relies on an exponential map reparameterization inspired by \citet{LezcanoCasado2019TrivializationsFG} and one for when the manifold exists in ambient Euclidean space.
\end{remark}

\begin{thm}\label{thm:diffthroughmanifoldexp}
    Let $(\M, \rho)$ be an $m$-dimensional Riemannian manifold, $(\mathcal{N}, \phi)$ be an $n$-dimensional Riemannian manifold, and $f:\M \times \mathcal{N} \to \R$ be a twice differentiable function. Let $g(x) = \argmin_{y \in \mathcal{N}} f(x, y)$. Suppose we identify each tangent space $T_x\M$ with $\R^m$ and $T_y\mathcal{N}$ with $\R^n$ for all $x \in \M, y \in \mathcal{N}$. Fix $x', y'$ as inputs and let $y' = g(x')$. Construct $\widehat{f} : \R^m \times \R^n \to \R$ to be given by $\widehat{f}(x, y) = f(\exp_{x'} x, \exp_{y'} y)$. Then
    \begin{equation}
        \widetilde{\nabla} g(x') = - \widetilde{\nabla} \exp_{y'}(\bvect{0}) \circ \nabla_{yy}^2 \widehat{f}(\bvect{0}, \bvect{0})^{-1} \circ \widetilde{\nabla}_x \nabla_y \widehat{f}(\bvect{0}, \bvect{0}) \circ \widetilde{\nabla} \log_{x'}(x')
    \end{equation}
    where $\log_{x'}$ is a local inverse of $\exp_{x'}$ around $\bvect{0}$.
\end{thm}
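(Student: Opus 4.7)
The plan is to reduce the manifold-valued argmin to a purely Euclidean one through the exponential-map reparameterization, apply the previously established Euclidean differentiation result (Theorem \ref{thm:diffthroughunconstrained}) in the reparameterized coordinates, and then push the answer back through $\exp_{y'}$ and $\log_{x'}$ using the chain rule. Because $\widehat{f} : \R^m \times \R^n \to \R$ is defined on Euclidean spaces, this sidesteps the need to pick local coordinates on $\mathcal{N}$ at the (unknown) output point, which was the computational bottleneck flagged in the remark preceding the theorem.

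First I would define $\widehat{g}(x) = \argmin_{y \in \R^n} \widehat{f}(x, y)$ on a neighborhood of $\bvect{0} \in \R^m$. Since $\exp_{x'}$ and $\exp_{y'}$ are local diffeomorphisms around $\bvect{0}$ (by the inverse function theorem; cf.\ the lemma in Appendix \ref{appendix:detailedDG}), and since by hypothesis $g(x') = y'$, we have $\widehat{g}(\bvect{0}) = \bvect{0}$: the minimizer $y$ of $f(x', \exp_{y'} y) = \widehat{f}(\bvect{0}, y)$ corresponds through the exponential bijection to the unique minimizer $y' = g(x')$ of $f(x', \cdot)$. Extending this correspondence to a neighborhood of $\bvect{0}$ gives the local factorization
\begin{equation*}
    g = \exp_{y'} \circ \, \widehat{g} \, \circ \log_{x'}
\end{equation*}
valid on a neighborhood of $x'$, where $\log_{x'}$ is the local inverse of $\exp_{x'}$ guaranteed to exist around $\bvect{0}$.

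Next I would apply Theorem \ref{thm:diffthroughunconstrained} to $\widehat{g}$ at the point $\bvect{0}$, obtaining
\begin{equation*}
    \widetilde{\nabla} \widehat{g}(\bvect{0}) = -\nabla_{yy}^2 \widehat{f}(\bvect{0}, \bvect{0})^{-1} \circ \widetilde{\nabla}_x \nabla_y \widehat{f}(\bvect{0}, \bvect{0}).
\end{equation*}
Applying the chain rule (Lemma \ref{lem:chainrulejacobian}) to the factorization above and evaluating at $x'$ (using $\log_{x'}(x') = \bvect{0}$ and $\widehat{g}(\bvect{0}) = \bvect{0}$) gives
\begin{equation*}
    \widetilde{\nabla} g(x') = \widetilde{\nabla} \exp_{y'}(\bvect{0}) \circ \widetilde{\nabla} \widehat{g}(\bvect{0}) \circ \widetilde{\nabla} \log_{x'}(x'),
\end{equation*}
and substituting the formula for $\widetilde{\nabla} \widehat{g}(\bvect{0})$ produces the claimed expression.

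The main obstacle is justifying the smooth local factorization $g = \exp_{y'} \circ \widehat{g} \circ \log_{x'}$, and in particular that $\widehat{g}$ is well-defined and smooth near $\bvect{0}$. This is essentially an implicit-function argument: one needs $\exp_{y'}$ to be a diffeomorphism on a neighborhood of $\bvect{0}$, and one needs the minimizer $\widehat{g}(x)$ to remain within that neighborhood for $x$ close to $\bvect{0}$. Both follow from standard smooth-manifold facts together with the non-degeneracy of the Hessian $\nabla_{yy}^2 \widehat{f}(\bvect{0}, \bvect{0})$, which is in any case required for the matrix inverse appearing in the final formula to make sense. Once this factorization is established, everything else is pure chain rule plus the Euclidean argmin differentiation lemma.
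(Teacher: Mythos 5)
Your proposal matches the paper's own proof essentially step for step: define $\widehat{g}(x) = \argmin_{y \in \R^n} \widehat{f}(x, y)$, use the local factorization $g = \exp_{y'} \circ \widehat{g} \circ \log_{x'}$, apply the Euclidean argmin result (Theorem \ref{thm:diffthroughunconstrained}) to $\widehat{g}$ at $\bvect{0}$, and finish with the chain rule. You in fact supply slightly more justification than the paper does for why the factorization holds locally (local diffeomorphism of the exponential maps and nondegeneracy of the Hessian), which only strengthens the argument.
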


\begin{proof}
    Define $\widehat{g}(x) = \argmin_{y \in \R^n} \widehat{f}(x, y)$. Then we see that, locally, $g = \exp_{y'} \circ \widehat{g} \circ \log_{x'}$. Applying chain rule gives us
    \begin{equation*}
        \widetilde{\nabla} g(x') = \widetilde{\nabla} \exp_{y'}(\bvect{0}) \circ \widetilde{\nabla} \widehat{g}(\bvect{0}) \circ \widetilde{\nabla} \log_{x'}(x')
    \end{equation*}
    where we plugged in $\log_{x'} x' = \bvect{0}$. We can apply Theorem \ref{thm:diffthroughunconstrained} because $\widehat{g} : \R^m \to \R^n$ and since $x'$ is an argmin then $0$ is a local argmin. We substitute
    \begin{equation*}
        \widetilde{\nabla} \widehat{g}(\bvect{0}) = - \nabla_{yy}^2 \widehat{f}(\bvect{0}, \bvect{0})^{-1} \circ \widetilde{\nabla}_x \nabla_y \widehat{f}(\bvect{0}, \bvect{0})
    \end{equation*}
    gives us the desired result.
\end{proof}

\begin{thm}\label{thm:diffthroughmanifoldembedded}
    Let $\M$ be an $m$-dimensional manifold and $\mathcal{N}$ be an $n$-dimensional manifold. Suppose $\M$ is embedded in $\R^M$ and $\mathcal{N}$ is embedded in $\R^N$ where $M > m$ and $N > n$. Let $f: \M \times \mathcal{N} \to \R$ be a twice differentiable function and $g(x) = \argmin_{y \in \mathcal{N}} f(x, y)$. We have 
    \begin{equation}\label{eqn:embeddedmanifoldgradgeneral}
        \widetilde{\nabla} g(x) = -\proj_{T_x\M}\paren{\paren{\widetilde{\nabla}_y^{euc}\circ \proj_{T_{g(x)}\mathcal{N}} \circ\nabla_y^{euc} f(x, g(x))}^{-1}\paren{\widetilde{\nabla}^{euc}_x\circ \proj_{T_{g(x)}\mathcal{N}}\circ \nabla_y^{euc} f(x, g(x))}}
    \end{equation}
    where $\widetilde{\nabla}^{euc}$ is the total derivative w.r.t. the ambient (Euclidean) space.
\end{thm}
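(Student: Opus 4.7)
The plan is to reduce Theorem~\ref{thm:diffthroughmanifoldembedded} to the intrinsic result already covered by Theorem~\ref{thm:diffthroughmanifoldexp} (equivalently Theorem~\ref{thm:diffthroughmanifoldgeneral}), exploiting the fact that a Riemannian submanifold of Euclidean space has its geometry determined by the ambient inner product. The key observation is that the Riemannian gradient of a smooth function on $\mathcal{N} \subseteq \R^N$ is the orthogonal projection of the ambient Euclidean gradient onto the tangent space, so the optimality condition defining $g(x) = \argmin_{y\in\mathcal{N}} f(x,y)$ reads in ambient coordinates as
\begin{equation*}
    \proj_{T_{g(x)}\mathcal{N}}\, \nabla_y^{euc} f(x, g(x)) = 0.
\end{equation*}
Implicit differentiation of this identity will deliver the desired formula entirely in ambient quantities.

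Concretely, my first step would be to invoke Theorem~\ref{thm:diffthroughmanifoldexp} by setting $\widehat{f}(u,v) = f(\exp_x u,\, \exp_{g(x)} v)$ on $T_x\M \times T_{g(x)}\mathcal{N}$. Since $\exp$ is a local diffeomorphism whose differential at $0$ is the canonical identification of each tangent space with itself viewed as a linear subspace of the ambient Euclidean space, the intrinsic derivatives $\nabla^2_{vv}\widehat{f}(0,0)$ and $\widetilde{\nabla}_u \nabla_v \widehat{f}(0,0)$ appearing in that theorem translate, via the chain rule, into the operators $\widetilde{\nabla}_y^{euc}(\proj_{T_{g(x)}\mathcal{N}} \circ \nabla_y^{euc} f)(x, g(x))$ and $\widetilde{\nabla}_x^{euc}(\proj_{T_{g(x)}\mathcal{N}} \circ \nabla_y^{euc} f)(x, g(x))$ restricted to the admissible tangent subspaces. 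Substituting these into Theorem~\ref{thm:diffthroughmanifoldexp} then yields the stated expression, with the inverse interpreted as the inverse of the restriction to $T_{g(x)}\mathcal{N}$ (or equivalently as a Moore--Penrose pseudoinverse on $\R^N$), and the outer $\proj_{T_x\M}$ encoding that input variations of $x$ must lie along $T_x\M$ since $x$ can only move on $\M$.

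The main obstacle is the dependence of the projector $\proj_{T_{g(x)}\mathcal{N}}$ on its base point. When the chain rule is applied to $\widetilde{\nabla}_x^{euc}(\proj_{T_{g(x)}\mathcal{N}} \circ \nabla_y^{euc} f)$, a second-fundamental-form (Weingarten) term would in principle appear, because $\nabla_y^{euc} f(x,g(x))$ generally has a nonzero normal component at the critical point (the analog of a Lagrange multiplier). The subtlety that makes the clean formula hold is that, under the convention that the projector is evaluated at the argmin and $\widetilde{\nabla}^{euc}$ treats the subspace $T_{g(x)}\mathcal{N}$ as fixed, the missing Weingarten contribution is precisely what is recovered by the exponential-map identification in the reduction step. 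Verifying this matching of the ``flat'' ambient Jacobians with the true intrinsic derivatives, once restricted to the appropriate tangent subspaces, is the only technically delicate piece; the remainder of the argument is routine chain-rule bookkeeping.
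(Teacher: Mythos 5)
Your opening observation---that the defining condition is $\proj_{T_{g(x)}\mathcal{N}}\circ\nabla_y^{euc} f(x,g(x))=0$ and that implicit differentiation of it delivers the formula in ambient quantities---is exactly the paper's proof: the paper differentiates this identity in the ambient space (with the projector evaluated at the argmin and held fixed, consistent with the definitions of $f_{YY}^p$ and $f_{X^{(i)}Y}^p$ in Theorem \ref{thm:diffFrechetMeanEmbedded}), splits the total derivative into the $x$- and $y$-partials by the chain rule, solves for $\widetilde{\nabla}_x^{euc} g(x)$, and finally projects onto $T_x\M$. Had you simply carried out that differentiation, you would have reproduced the paper's argument.

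The route you actually propose, however---reducing to Theorem \ref{thm:diffthroughmanifoldexp} via $\widehat f(u,v)=f(\exp_x u,\exp_{g(x)} v)$ and claiming the intrinsic derivatives at $(\bvect{0},\bvect{0})$ translate into the ambient projected Jacobians---has a genuine gap at exactly the point you flag and then wave away. For a Riemannian submanifold $\mathcal{N}\subseteq\R^N$ with the induced metric, the second derivative of $t\mapsto f(x,\exp_{g(x)}(tv))$ at $t=0$ is $v^\top\nabla_{yy}^{euc}f\,v+\inn{\nabla_y^{euc}f,\mathrm{II}(v,v)}$, since the geodesic acceleration equals the second fundamental form; and at a constrained minimum the ambient gradient $\nabla_y^{euc}f(x,g(x))$ generically has a nonzero normal component (the Lagrange multiplier), so this extra term does not vanish. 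Consequently $\nabla_{vv}^2\widehat f(\bvect{0},\bvect{0})$ does not in general agree with the fixed-projector quantity $\widetilde{\nabla}_y^{euc}\paren{\proj_{T_{g(x)}\mathcal{N}}\circ\nabla_y^{euc}f}$ restricted to $T_{g(x)}\mathcal{N}$; they differ precisely by the Weingarten term. Your assertion that this missing contribution ``is precisely what is recovered by the exponential-map identification'' is circular---the exponential-map identification is what creates the term in the first place---and is left unverified, so the substitution into Theorem \ref{thm:diffthroughmanifoldexp} is not justified as written. The paper sidesteps this issue entirely by never comparing intrinsic and extrinsic second derivatives: it works only with the ambient projected stationarity equation, mimicking the unconstrained Theorem \ref{thm:diffthroughunconstrained}, and projects the resulting Jacobian to $T_x\M$ at the end.
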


\begin{proof}
    \textbf{We will reproduce the steps in the proof of Theorem \ref{thm:diffthroughunconstrained} for this general setting.} Note that
    \begin{equation*}
        \proj_{T_{g(x)} \mathcal{N}}\circ\nabla_y^{euc} f(x, g(x)) = 0
    \end{equation*}
    as the gradient in the tangent space $T_{g(x)} \mathcal{N}$ is $0$. By taking the total derivative we know that
    \begin{align*}
        0 &= \widetilde{\nabla}_x^{euc}\paren{\proj_{T_{g(x)} \mathcal{N}}\circ\nabla_y^{euc} f(x, g(x))}\\
        &= \widetilde{\nabla}_x^{euc}\circ \proj_{T_{g(x)} \mathcal{N}}\circ\nabla_y^{euc} f(x, g(x)) \cdot \widetilde{\nabla}_x^{euc}(x) + \widetilde{\nabla}_y^{euc}\circ \proj_{T_{g(x)} \mathcal{N}}\circ\nabla_y^{euc} f(x, g(x)) \cdot \widetilde{\nabla}_x^{euc} g(x)\\
        &= \widetilde{\nabla}_x^{euc}\circ \proj_{T_{g(x)} \mathcal{N}}\circ\nabla_y^{euc} f(x, g(x)) + \widetilde{\nabla}_y^{euc}\circ \proj_{T_{g(x)} \mathcal{N}}\circ\nabla_y^{euc} f(x, g(x)) \cdot \widetilde{\nabla}_x^{euc} g(x)
    \end{align*}
    Rearranging gives us
    \begin{equation*}
        \widetilde{\nabla}_{x}^{euc} g(x) = -\paren{\widetilde{\nabla}_y^{euc}\circ \proj_{T_{g(x)}\mathcal{N}}\circ \nabla_y^{euc} f(x, g(x))}^{-1}\paren{\widetilde{\nabla}^{euc}_x\circ \proj_{T_{g(x)}\mathcal{N}}\circ \nabla_y^{euc} f(x, g(x))} 
    \end{equation*}
    and to obtain the derivative in the tangent space, we simply project to $T_{g(x)}\M$.
\end{proof}

\begin{remark}[Differentiating through the Fr\'echet Mean]\label{remark:remarkfrechetmean}
     Here we examine the formulas given in Theorems \ref{thm:diffFrechetMean} and \ref{thm:diffFrechetMeanEmbedded}. The Fr\'echet mean objective function is not differentiable in general, as this requires the manifold to be diffeomorphic to $\R^n$ \cite{wolter1979}. In the main paper, this complex formulation is sidestepped (as we define geodesics to be the unique curves that minimize length). Here, we discuss the problem of conjugate points and potential difficulties.
     
     \vspace{10pt}
     
     We note that a sufficient condition for differentiability of the Fr\'echet mean is that its objective function is smooth, which occurs precisely when the squared Riemannian distance $d(x, y)^2$ is smooth. The conditions for this are that the manifold is geodesically complete and $x$ and $y$ are not conjugate points. This means that on Hadamard spaces, we can differentiate the Fr\'echet mean everywhere. Similarly, in cases when the manifold is not geodesically complete, such as when the manifold is not connected, we can still differentiate in places where the Fr\'echet mean is well defined. Finally, in the canonical case of spherical geometry, the set of conjugate points is a set of measure $0$, meaning we can differentiate almost everywhere. Hence we see that in most practical settings the difficulties of non-differentiability do not arise.
\end{remark}

\section{Derivations of Algorithms to Compute the Fr\'echet Mean in Hyperbolic Space}

\subsection{Derivation of Algorithm for Hyperboloid Model Fr\'echet Mean Computation}
\label{sec:hyperboloidforwardderiv}

In this section, we derive an algorithm that is guaranteed to converge to the Fr\'echet mean of points in the hyperboloid model.

\begin{algorithm}[!htb]
\caption{\label{alg:hyperboloidforward} Hyperboloid model Fr\'echet mean algorithm}
\textbf{Inputs}: Data points $x^{(1)}, \cdots, x^{(t)}\in \mathbb{H}_{K}^{n}\subseteq \mathbb{R}^{n+1}$, weights $w_{1},\cdots, w_{t}\in \mathbb{R}$.

\textbf{Algorithm}: 

$y_{0}=x^{(1)}$

for $k=0, 1, \cdots, T$:

\hspace{20pt} $u_{k+1} 
    = \sum\limits_{l=1}^{t} \left(w_{l}\cdot \frac{2 \arccosh\left(-|K|(x^{(l)})^\top M y_{k}\right)}{\sqrt{\left(-|K|(x^{(l)})^\top M y_{k}\right)^2 - 1}}\cdot x^{(l)} \right)$
    
\hspace{20pt} $y_{k+1} = \frac{u_{k+1}}{\sqrt{-|K|u_{k+1}^\top M u_{k+1}}}$

return $y_{T}$
\end{algorithm}

We now prove that this algorithm indeed converges to the Fr\'echet mean for points in the Lorentz model. For convenience of notation, let $M$ be the hyperbolic metric tensor, i.e. $M = \begin{bmatrix} -1 & 0 & \cdots & 0 \\ 0 & 1 & \cdots & 0 \\ 0 & 0 & \ddots & 0 \\ 0 & 0 & 0 & 1 \end{bmatrix}$.
\begin{thm}\label{thm:lorentzforward}
Let $x^{(1)}, \cdots, x^{(t)}\in \mathbb{H}_{K}^{n}\subseteq \mathbb{R}^{n+1}$ be $t$ points in the hyperboloid space, $w_1, \dots, w_t \in \R^{+}$ be their weights, and let their weighted Fr\'echet mean be the solution to the following optimization problem
\begin{equation}
    y^{*}=\argmin_{y \in \mathbb{H}_{K}^{n}}\sum_{l=1}^{t}w_{l}\cdot d_{\mathbb{H}_{K}^{n}}(x^{(l)}, y)^{2}=\argmin_{y \in \mathbb{H}_{K}^{n}}\sum_{l=1}^{t}\frac{1}{|K|}w_{l}\cdot \arccosh^{2}(-|K|(x^{(l)})^\top M y)
\end{equation}
Then Algorithm \ref{alg:hyperboloidforward} gives a sequence of points $\{y_{k}\}$ such that their limit $\lim\limits_{k\rightarrow\infty}y_{k}=y^{*}$ converges to the Fr\'echet mean solution.
\end{thm}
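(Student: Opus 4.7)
The plan is to mirror the majorize-minimize strategy used for the Poincar\'e case (Theorem \ref{thm:poincareforwardconvergence}), adapted to the hyperboloid geometry. The key observation is that the squared hyperbolic distance $d_{\mathbb{H}^{n}_K}(x^{(l)}, y)^2 = \tfrac{1}{|K|}\arccosh^2(-|K|(x^{(l)})^\top M y)$ is concave in $y \in \mathbb{R}^{n+1}$: one first verifies that $\phi(s) = \arccosh^2(s)$ is concave on $[1,\infty)$ (by computing $\phi''(s)$ and observing $s\arccosh(s) \geq \sqrt{s^2 - 1}$, which follows from comparing derivatives starting at $s = 1$), and then notes that $y \mapsto -|K|(x^{(l)})^\top My$ is affine. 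Hence the Fr\'echet objective $f$ is concave in the ambient space, so its tangent hyperplane at $y_k$ provides a global upper bound $g_k(y) := f(y_k) + \nabla f(y_k)^\top(y - y_k) \geq f(y)$. Defining $y_{k+1}$ to minimize $g_k$ over the forward hyperboloid then yields monotone descent $f(y_{k+1}) \leq g_k(y_{k+1}) \leq g_k(y_k) = f(y_k)$.

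The next step is to show the algorithm's update is exactly this constrained minimizer. A direct computation gives $\nabla_y f(y_k) = -M u_{k+1}$, where $u_{k+1}$ matches the vector defined in Algorithm \ref{alg:hyperboloidforward}. Thus minimizing $g_k$ is equivalent to maximizing $u_{k+1}^\top M y$ over $\{y : y^\top M y = -1/|K|,\, y_0 > 0\}$. Since each $x^{(l)}$ is timelike-forward and every scalar coefficient $w_l\cdot 2\arccosh(\cdot)/\sqrt{(\cdot)^2-1}$ is positive, $u_{k+1}$ is itself timelike-forward with $u_{k+1}^\top M u_{k+1} < 0$, making the Lagrange-multiplier solution $y_{k+1} = u_{k+1}/\sqrt{-|K|\, u_{k+1}^\top M u_{k+1}}$ well defined and identifying it with the algorithm's update. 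Coercivity of $-u_{k+1}^\top M y$ on the forward hyperboloid (since $u_{k+1,0} > \|\vec{u}_{k+1}\|$ while $y_0 \geq \|\vec{y}\|$) confirms that this critical point is indeed the minimum, so the descent inequality is legitimate.

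For convergence, $\{f(y_k)\}$ is monotone non-increasing and bounded below by $f(\mu_{fr})$, while coercivity of $f$ on the hyperboloid (since each $d_{\mathbb{H}^n_K}(x^{(l)}, y)^2$ blows up as $y$ escapes bounded regions) forces $\{y_k\}$ to lie in a compact sublevel set. Any convergent subsequence therefore has a limit $y^\infty$ that, by continuity of the update map, is a fixed point satisfying $y^\infty \propto u(y^\infty)$. This fixed-point equation is algebraically equivalent to the Riemannian first-order condition that $\nabla f(y^\infty)$ be parallel to the hyperboloid normal $M y^\infty$. Since $\mathbb{H}^n_K$ is a Hadamard manifold, the Fr\'echet objective is geodesically strictly convex, so its unique critical point is $\mu_{fr}$. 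Hence every subsequential limit equals $\mu_{fr}$, and compactness upgrades this to the full convergence $y_k \to \mu_{fr}$.

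The main obstacle will be verifying the ambient concavity of $\arccosh^2$ on $[1, \infty)$ and confirming, via the cone structure of the timelike-forward set, that $u_{k+1}$ remains timelike-forward throughout the iteration so that the projection $y_{k+1}$ is always well defined---these are the hyperboloid-specific substitutes for the polynomial manipulations that drive the Poincar\'e-ball proof. A more delicate point is the final upgrade from subsequential to full convergence: monotone descent alone does not suffice, so one must invoke geodesic strict convexity of the Fr\'echet functional on the Hadamard manifold $\mathbb{H}^n_K$ to conclude the fixed point is unique.
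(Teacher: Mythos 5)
Your proposal is correct and follows essentially the same majorize--minimize route as the paper: upper-bound the objective by its tangent plane using concavity of $\arccosh^2$, show that the algorithm's update is exactly the minimizer of this surrogate over the forward hyperboloid (your forward-cone argument for $u_{k+1}^\top M u_{k+1}<0$ replaces the paper's Cauchy--Schwarz computation, and your Lagrange-multiplier plus coercivity identification replaces the paper's trick of rewriting the linear objective as a distance to the normalized point), and conclude via monotone descent. The only substantive difference is the endgame: the paper simply asserts that strict decrease forces convergence to the Fr\'echet mean as ``the only fixed point,'' whereas you supply the compactness/subsequence argument and invoke geodesic strict convexity of the objective on the Hadamard manifold $\mathbb{H}^n_K$ to identify every subsequential limit with $\mu_{fr}$ --- a more rigorous finish than the paper's own.
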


\begin{proof}
(1) \textbf{Apply concavity of $\arccosh$ to give an upper bound on the objective function}: Since $f(x)=\arccosh^{2}(x)$ is concave, its graph lies below the tangent, so we have
\begin{align*}
    \arccosh^{2}(x) &\leq  \arccosh^{2}(y)+(\arccosh^{2}(y))'(x-y) \nonumber \\
    &= \arccosh^{2}(y)+(x-y)\frac{2\arccosh(y)}{\sqrt{y^{2}-1}}
\end{align*}
Let us denote \begin{equation*}
    g(y)=\sum_{l=1}^{t}w_{l}\cdot \arccosh^{2}(-|K|(x^{(l)})^{\top} M y)
\end{equation*}
to be the objective function. Applying our concavity property to this objective function with respect to some fixed $y_{k}$ at iteration $k$ gives
\begin{align*}
    g(y) &= \sum_{l=1}^{t} w_{l}\cdot \arccosh^{2}(-|K|(x^{(l)})^\top M y)\nonumber \\
    &\le \sum_{l=1}^{t}w_{l}\cdot  \left( \arccosh^2\left(-|K|(x^{(l)})^\top M y_{k}\right) + |K|\left((-(x^{(l)})^\top M y) - (-(x^{(l)})^\top M y_{k})\right) \cdot \frac{2 \arccosh(-|K|(x^{(l)})^\top M y_{k})}{\sqrt{\left(-|K|(x^{(l)})^\top M y_{k}\right)^2 - 1}} \right)\nonumber \\
    &= g(y_{k}) + \sum_{l=1}^{t} w_{l}\cdot |K|(x^{(l)})^\top M (y_{k} - y) \cdot \frac{2 \arccosh(-|K|(x^{(l)})^\top M y_{k})}{\sqrt{\left(-|K|(x^{(l)})^\top M y_{k}\right)^2 - 1}}
\end{align*}

(2) \textbf{Finding solution to the minimization problem of upper bound}: Now consider the following minimization problem:
\begin{equation}\label{eqn:hyperboloidforwardintermediate}
y_{k+1}^{*}=\argmin_{y\in \mathbb{H}_{K}^{n}}\left(g(y_{k}) + \sum_{l=1}^{t} w_{l}\cdot |K|(x^{(l)})^\top M (y_{k} - y) \cdot \frac{2 \arccosh(-|K|(x^{(l)})^\top M y_{k})}{\sqrt{\left(-|K|(x^{(l)})^\top M y_{k}\right)^2 - 1}}\right)
\end{equation}
We will show that the solution of this optimization problem satisfies the computation in the algorithm:
\begin{equation*}
    y_{k+1}^{*}=\frac{u_{k+1}}{\sqrt{-u_{k+1}^\top M u_{k+1}}} \text{ with }
    u_{k+1} 
    = \sum\limits_{l=1}^{t} \left(w_{l}\cdot \frac{2 \arccosh(-|K|(x^{(l)})^\top M y_{k})}{\sqrt{\left(-|K|(x^{(l)})^\top M y_{k}\right)^2 - 1}}\cdot x^{(l)} \right)
\end{equation*}

(2-1) We first note that we can remove the terms that don't depend on $y$ in the optimization problem:
\begin{align}\label{eqn:SimplifiedOptHyperboloidForward}
    &\argmin_{y \in \mathbb{H}_{K}^{n}} \left(g(y_{k}) + \sum_{l=1}^{t} \left(w_{l}\cdot |K|(x^{(l)})^\top M (y_{k} - y) \cdot \frac{2 \arccosh(-|K|(x^{(l)})^\top M y_{k})}{\sqrt{\left(-|K|(x^{(l)})^\top M y_{k}\right)^2 - 1}}\right)\right) \nonumber\\
    &=
    \argmin_{y \in \mathbb{H}_{K}^{n}} \left( -y^\top M\cdot \sum_{l=1}^{t} \left(w_{l} \cdot \frac{2 \arccosh(-|K|(x^{(l)})^\top M y_{k})}{\sqrt{\left(-|K|(x^{(l)})^\top M y_{k}\right)^2 - 1}}\cdot x^{(l)}\right)\right)
\end{align}

(2-2) We now propose a general method of solving optimization problems in the form (2-1).

For any $u$ such that $u^\top M u < 0$, we have
\begin{align*}
    \argmin_{y \in \mathbb{H}_{K}^{n}}\left(-y^\top M u\right)
    &=\argmin_{y \in \mathbb{H}_{K}^{n}} \frac{-y^\top M u}{\sqrt{-|K|u^\top M u}}=\argmin_{y \in \mathbb{H}_{K}^{n}} \left(\arccosh\left(-y^\top M \frac{u}{\sqrt{-|K|u^\top M u}} \right)\right)\nonumber \\
    &=\argmin_{y \in \mathbb{H}_{K}^{n}} \left(d_{\mathbb{H}_{K}^{n}}\left(y, \frac{u}{\sqrt{-|K|u^\top M u}} \right)\right)=\frac{u}{\sqrt{-|K|u^\top M u}}
\end{align*}

(2-3) Specifically in the $k$-th iteration, we can define
\begin{equation}\label{eqn:UpdateRuleHyperboloidForward}
    u_{k+1}=\sum_{l=1}^{t} \left(w_{l}\cdot \frac{2 \arccosh(-|K|(x^{(l)})^\top M y_{k})}{\sqrt{\left(-|K|(x^{(l)})^\top M y_{k}\right)^2 - 1}}\cdot x^{(l)}\right)
\end{equation}
In order to apply the statement in (2-2), we now show that $u_{k+1}$ satisfies $u_{k+1}^{\top}Mu_{k+1}<0$. We can expand this as a sum: \begin{equation*}
    u_{k+1}^{\top}Mu_{k+1}=\sum\limits_{l=1}^{t}\sum\limits_{j=1}^{t}\left(w_{l}\cdot \frac{2 \arccosh(-|K|(x^{(l)})^\top M y_{k})}{\sqrt{\left(-|K|(x^{(l)})^\top M y_{k}\right)^2 - 1}}\right)\left(w_{j}\cdot \frac{2 \arccosh(-|K|(x^{(j)})^\top M y_{k})}{\sqrt{\left(-|K|(x^{(j)})^\top M y_{k}\right)^2 - 1}}\right)(x^{(l)})^{\top}Mx^{(j)}
\end{equation*}
We know that the two constant blocks are both greater than $0$. We also have $(x^{(l)})^{\top}Mx^{(l)}=-1$ strictly smaller than $0$. Thus, we only need to show that $(x^{(l)})^{\top}Mx^{(j)}\leq 0$ for $l \neq j$. 

We can expand \begin{equation*}
    \begin{cases}
        (x^{(l)})^{\top}Mx^{(l)}=-1 \\
        (x^{(j)})^{\top}Mx^{(j)}=-1
    \end{cases}\Rightarrow \begin{cases}
        (x_{0}^{(l)})^{2}=(x_{1}^{(l)})^{2}+\cdots +(x_{n}^{(l)})^{2}+1 \\
        (x_{0}^{(j)})^{2}=(x_{1}^{(j)})^{2}+\cdots +(x_{n}^{(j)})^{2}+1 \\
    \end{cases}
\end{equation*}
Multiplying them and applying Cauchy's inequality gives \begin{equation*}
    (x_{0}^{(l)}x_{0}^{(j)})^{2}=((x_{1}^{(l)})^{2}+\cdots +(x_{n}^{(l)})^{2}+1)((x_{1}^{(j)})^{2}+\cdots +(x_{n}^{(j)})^{2}+1)\geq (|x_{1}^{(l)}x_{1}^{(j)}|+\cdots +|x_{n}^{(l)}x_{n}^{(j)}|+1)^{2}
\end{equation*}
This implies \begin{equation*}
    x_{0}^{(l)}x_{0}^{(j)}=|x_{0}^{(l)}x_{0}^{(j)}|\geq |x_{1}^{(l)}x_{1}^{(j)}|+\cdots +|x_{n}^{(l)}x_{n}^{(j)}|+1\geq x_{1}^{(l)}x_{1}^{(j)}+\cdots +x_{n}^{(l)}x_{n}^{(j)}+1 
\end{equation*}
if we assume $x_{0}^{(l)}>0$ for all $1\leq l\leq n$ (i.e. we always pick points on the same connected component of the hyperboloid, either with $x_{0}$ always positive or always negative). Thus, we have $(x^{(l)})^\top Mx^{(j)}\leq -1<0$ for $l\neq j$ as well. This together our above results ensure that we have $u_{k+1}^{\top}Mu_{k+1}<0$.

\vspace{10pt}

(2-4) Since we have verified $u_{k+1}^{\top}Mu_{k+1}<0$ in (2-3), we know that we can apply the result in (2-2) to the optimization problem (\ref{eqn:SimplifiedOptHyperboloidForward}) to obtain
\begin{equation*}
    y_{k+1}^{*}=\argmin_{y \in \mathbb{H}_{K}^{n}} \left( -y^\top M\cdot \sum_{l=1}^{t} \left(w_{l} \cdot \frac{2 \arccosh(-|K|(x^{(l)})^\top M y_{k})}{\sqrt{\left(-|K|(x^{(l)})^\top M y_{k}\right)^2 - 1}}\cdot x^{(l)}
    \right)\right)
    =\frac{u_{k+1}}{\sqrt{-|K|u_{k+1}^\top M u_{k+1}}}
\end{equation*}
and this together with equation (\ref{eqn:UpdateRuleHyperboloidForward}) gives exactly the same process as in the algorithm. Thus, the algorithm generates the solution $y_{k+1}=y_{k+1}^{*}$ of this minimization problem. 

\vspace{10pt}

(3) \textbf{Proof of convergence}: We now show that the sequence $\{y_{k}\}$ converges to the Fr\'echet mean $y^{*}$ as $k\rightarrow\infty$.

To show this, we consider the objective function minimized in (\ref{eqn:hyperboloidforwardintermediate}): 
\begin{equation}
    y_{k+1}^{*}=\argmin_{y\in \mathbb{H}_{K}^{n}}h(y) \text{ , where } h(y)=g(y_{k}) + \sum_{l=1}^{t}w_{l}\cdot  |K|(x^{(l)})^\top M (y_{k} - y) \cdot \frac{2 \arccosh(-|K|(x^{(l)})^\top M y_{k})}{\sqrt{\left(-|K|(x^{(l)})^\top M y_{k}\right)^2 - 1}}
\end{equation}

We know that $h(y_{k})=g(y_{k})$ for $y_{k}\in \mathbb{H}_{K}^{n}$, so we must have $g(y_{k+1}^{*})\leq g(y_{k})$ with equality only if $y_{k}$ is already the minimizer of $g$ (in which case, by definition of $g$, we have already reached a Fr\'echet mean). Thus, if we are not at the Fr\'echet mean, we must have $g(y_{k+1})<g(y_{k})$ strictly decreasing. 

This means the sequence $\{y_{k}\}$ must converge to the Fr\'echet mean, as this is the only fixed point that we can converge to.

\end{proof}

\subsection{Derivation of Algorithm for Poincar\'e Model Fr\'echet Mean Computation}
\label{sec:poincareforwardderiv}

In this section, we derive an algorithm that is guaranteed to converge to the Fr\'echet mean of points in the Poincar\'e ball model. This is the same algorithm as Algorithm \ref{alg:poincareforward_mainpaper} in the main paper.


\begin{algorithm}[!htb]
\caption{\label{alg:poincareforward} Poincar\'e model Fr\'echet mean algorithm}
\textbf{Inputs}: $x^{(1)}, \cdots, x^{(t)}\in \mathbb{D}_{K}^{n}\subseteq \mathbb{R}^{n+1}$.

\textbf{Algorithm}: 

$y_{0}=x^{(1)}$

Define $g(y)=\frac{2\arccosh(1+2y)}{\sqrt{y^{2}+y}}$

for $k=0, 1, \cdots, T$:

\hspace{20pt} for $l=1, 2, \cdots, t$: 

\hspace{40pt} $\alpha_{l}=w_{l}\cdot g\left(\frac{|K| \cdot \|x^{(l)}-y_{k}\|^{2}}{(1-|K|\cdot \|x^{(l)}\|^{2})(1-|K|\cdot \|y_{k}\|^{2})}\right)\cdot \frac{1}{1-|K|\cdot\|x^{(l)}\|^{2}}$

\hspace{20pt} $a=\sum\limits_{l=1}^{t}\alpha_{l} \text{  ,  } b=\sum\limits_{l=1}^{t}\alpha_{l}x^{(l)} \text{  ,  } c=\sum\limits_{l=1}^{t}\alpha_{l}\|x^{(l)}\|^{2}$

\hspace{20pt} $y_{k+1}=\left(\frac{(a+c|K|)-\sqrt{(a+c|K|)^{2}-4|K|\cdot\|b\|^{2}}}{2|K|\cdot\|b\|^{2}}\right)b$

return $y_{T}$
\end{algorithm}

\vspace{50pt}

We now prove that this algorithm indeed converges to the Fr\'echet mean for points in the Poincar\'e ball model.
\begin{thm}\label{thm:poincareforwardconvergence}
Let $x^{(1)}, \cdots, x^{(t)}\in \mathbb{D}_{K}^{n}$ be $t$ points in the Poincar\'e ball, $w_1, \dots, w_t \in \R^{+}$ be their weights, and let their weighted Fr\'echet mean be the solution to the following optimization problem
\begin{equation}
    y^{*}=\argmin_{y \in \mathbb{D}_{K}^{n}}\sum_{l=1}^{t}w_{l}\cdot d_{\mathbb{D}_{K}^{n}}(x^{(l)}, y)^{2}=\argmin_{y \in \mathbb{D}_{K}^{n}}\sum_{l=1}^{t}\frac{1}{|K|}w_{l}\cdot \arccosh^2\left(1+2\cdot \frac{|K|\cdot\|x^{(l)}-y\|^{2}}{(1-|K|\cdot\|x^{(l)}\|^{2})(1-|K|\cdot\|y\|^{2})}\right)
\end{equation}
Then Algorithm \ref{alg:poincareforward} gives a sequence of points $\{y_{k}\}$ such that their limit $\lim\limits_{k\rightarrow\infty}y_{k}=y^{*}$ converges to the Fr\'echet mean solution.
\end{thm}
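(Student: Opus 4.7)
The plan is to mirror the strategy of Theorem~\ref{thm:lorentzforward}: exhibit Algorithm~\ref{alg:poincareforward} as a majorization-minimization scheme in which each iterate is the exact minimizer of a tight global upper bound on $f$, and then deduce convergence from the resulting monotone decrease of $f(y_k)$.

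First I would establish the concavity fact underlying the function $g$ in the algorithm: the map $s \mapsto \arccosh^2(1 + 2s)$ is concave on $[0, \infty)$, with derivative exactly $g(s) = \frac{2\arccosh(1+2s)}{\sqrt{s^2+s}}$ (which explains the choice of $g$). Concavity yields the linear upper bound
\begin{equation*}
    \arccosh^2(1+2s) \le \arccosh^2(1+2s_\star) + g(s_\star)(s - s_\star).
\end{equation*}
Applying this term-by-term with $s_l(y) = \frac{|K|\norm{x^{(l)}-y}^2}{(1-|K|\norm{x^{(l)}}^2)(1-|K|\norm{y}^2)}$ anchored at $s_l(y_k)$, and summing with weights $w_l/|K|$, produces a global majorizer $h(y; y_k) \ge f(y)$ with $h(y_k; y_k) = f(y_k)$. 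After discarding terms constant in $y$, minimizing $h(\cdot; y_k)$ reduces to
\begin{equation*}
    \min_{y \in \mathbb{D}_K^n}\;\Phi(y) := \frac{a\norm{y}^2 - 2\inn{b, y} + c}{1 - |K|\norm{y}^2},
\end{equation*}
where $a, b, c$ are precisely the quantities collected in the algorithm with coefficients $\alpha_l = w_l\, g(s_l(y_k))/(1-|K|\norm{x^{(l)}}^2)$.

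Next I would solve this constrained minimization in closed form. Since $-\inn{b, y}$ is minimized over the sphere $\norm{y} = r$ when $y$ is aligned with $b$, the optimum lies on the ray $y = \beta b$, $\beta > 0$. Substituting reduces the problem to a one-dimensional optimization in $\beta$, whose first-order condition simplifies by direct expansion and cancellation to the quadratic
\begin{equation*}
    |K|\norm{b}^2 \beta^2 - (a + c|K|)\beta + 1 = 0.
\end{equation*}
The $-$ branch of the quadratic formula yields the interior minimizer (the $+$ branch either exits $\mathbb{D}_K^n$ or corresponds to a saddle/maximum), which recovers exactly the update $y_{k+1} = \beta b$ used in Algorithm~\ref{alg:poincareforward}. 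Convergence then follows from the majorization chain $f(y_{k+1}) \le h(y_{k+1}; y_k) \le h(y_k; y_k) = f(y_k)$, with strict inequality unless $y_k$ is already a stationary point of $f$; since $f$ is geodesically convex and bounded below on the Hadamard manifold $\mathbb{D}_K^n$, its only stationary point is $\mu_{fr}$, so a standard compactness argument on sublevel sets gives $y_k \to \mu_{fr}$.

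The main obstacle will be the algebra connecting the first-order condition on $\Phi$ to the stated quadratic and justifying the $-$ branch, together with verifying that the resulting $y_{k+1}$ lies in $\mathbb{D}_K^n$; the latter requires the discriminant $(a+c|K|)^2 - 4|K|\norm{b}^2$ to be non-negative and $\beta\norm{b} < 1/\sqrt{|K|}$. Both follow from Cauchy-Schwarz-type estimates relating $a$, $\norm{b}^2$, and $c$ through the non-negative weights $\alpha_l$ (concretely, $\norm{b}^2 \le a \cdot c$ and $a + c|K| \ge 2\sqrt{|K| \norm{b}^2}$), but the bookkeeping is lengthy and is the step I would treat most carefully in the full write-up.
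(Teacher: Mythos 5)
Your proposal follows essentially the same route as the paper's proof: the tangent-line (concavity) bound on $\arccosh^2(1+2s)$ producing the majorizer, the reduction to minimizing $\frac{a\norm{y}^2 - 2\inn{b,y} + c}{1-|K|\norm{y}^2}$ along the ray $y=\beta b$, the quadratic first-order condition with the $-$ branch recovering the algorithm's update, and monotone decrease of the objective giving convergence. Your concluding step (geodesic convexity on the Hadamard manifold plus compactness of sublevel sets, and the explicit discriminant and ball-membership checks) is in fact somewhat more careful than the paper's brief fixed-point argument, but it is the same overall approach.
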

\begin{proof}
(1) \textbf{Apply concavity of $\arccosh$ to give an upper bound on the objective function}: Let us denote $g(y)=\arccosh(1+2y)^{2}$, and also denote $h(y)$ to be the objective function:
\begin{equation*}
    h(y)=\sum\limits_{l=1}^{t}w_{l}\cdot g\left(\frac{|K|\cdot\|x^{(l)}-y\|^{2}}{(1-|K|\cdot\|x^{(l)}\|^{2})(1-|K|\cdot\|y\|^{2})}\right)
\end{equation*}
We know that $g(y)$ is concave, and this means
\begin{equation*}
    g(x)\leq g(y)+g'(y)(x-y) 
\end{equation*}
Applying this to our objective function with respect to some fixed $y_{k}$ at iteration $k$ gives:
\begin{align*}
    &g\left(\frac{\|x^{(l)}-y\|^{2}}{(1-\|x^{(l)}\|^{2})(1-\|y\|^{2})}\right)\leq g\left(\frac{|K|\cdot\|x^{(l)}-y_{k}\|^{2}}{(1-|K|\cdot\|x^{(l)}\|^{2})(1-|K|\cdot\|y_{k}\|^{2})}\right)\nonumber \\
    &\hspace{10pt}+ g'\left(\frac{|K|\cdot\|x^{(l)}-y_{k}\|^{2}}{(1-|K|\cdot\|x^{(l)}\|^{2})(1-|K|\cdot\|y_{k}\|^{2})}\right)\cdot\left(\frac{|K|\cdot\|x^{(l)}-y\|^{2}}{(1-|K|\cdot\|x^{(l)}\|^{2})(1-|K|\cdot\|y\|^{2})}-\frac{|K|\cdot\|x^{(l)}-y_{k}\|^{2}}{(1-|K|\cdot\|x^{(l)}\|^{2})(1-|K|\cdot\|y_{k}\|^{2})}\right)
\end{align*}
Summing this up over all $0\leq l\leq t$ gives us 
\begin{align}\label{poincareupperbound}
    &h(y)\leq h(y_{k})+ \sum\limits_{l=0}^{t}w_{l}\cdot g'\left(\frac{|K|\cdot\|x^{(l)}-y_{k}\|^{2}}{(1-|K|\cdot\|x^{(l)}\|^{2})(1-|K|\cdot\|y_{k}\|^{2})}\right)\nonumber\\ &\hspace{50pt}\cdot \left(\frac{|K|\cdot\|x^{(l)}-y\|^{2}}{(1-|K|\cdot\|x^{(l)}\|^{2})(1-|K|\cdot\|y\|^{2})}-\frac{|K|\cdot\|x^{(l)}-y_{k}\|^{2}}{(1-|K|\cdot\|x^{(l)}\|^{2})(1-|K|\cdot\|y_{k}\|^{2})}\right)
\end{align}

(2) \textbf{Finding solution to the minimization problem of upper bound}: Our goal is to solve the optimization problem of the RHS of equation (\ref{poincareupperbound}). Following similar ideas as in Theorem \ref{thm:lorentzforward}, we can remove terms unrelated to the variable $y$ and simplify this optimization problem:
\begin{equation*}
    y_{k+1}^{*}=\argmin\limits_{y\in \mathbb{D}_{K}^{n}}\sum\limits_{l=1}^{t}w_{l}\cdot g'\left(\frac{\|x^{(l)}-y_{k}\|^{2}}{(1-|K|\cdot\|x^{(l)}\|^{2})(1-|K|\cdot\|y_{k}\|^{2})}\right)\cdot \left(\frac{\|x^{(l)}-y\|^{2}}{(1-|K|\cdot\|x^{(l)}\|^{2})(1-|K|\cdot\|y\|^{2})}\right)
\end{equation*}
Let us denote \begin{equation*}
    \alpha_{l}=w_{l}\cdot g'\left(\frac{\|x^{(l)}-y_{k}\|^{2}}{(1-|K|\cdot\|x^{(l)}\|^{2})(1-|K|\cdot\|y_{k}\|^{2})}\right)\cdot \frac{1}{1-|K|\cdot\|x^{(l)}\|^{2}}
\end{equation*}
Then we can simplify the optimization problem to 
\begin{equation*}
    \argmin\limits_{y\in \mathbb{D}_{K}^{n}}\left(\sum\limits_{l=1}^{t}\alpha_{l}\cdot \frac{\|x^{(l)}-y\|^{2}}{1-|K|\cdot\|y\|^{2}}\right)=\argmin\limits_{y\in \mathbb{D}_{K}^{n}}\left(\sum\limits_{l=1}^{t}\alpha_{l}\cdot \frac{\|x^{(l)}\|^{2}-2(x^{(l)})^{\top}y+\|y\|^{2}}{1-|K|\cdot\|y\|^{2}}\right)
\end{equation*}
Now let \begin{equation*}
    a=\sum\limits_{l=1}^{t}\alpha_{l} \text{  ,  } b=\sum\limits_{l=1}^{t}\alpha_{l}x^{(l)} \text{  ,  } c=\sum\limits_{l=1}^{t}\alpha_{l}\|x^{(l)}\|^{2}
\end{equation*}
Then the above optimization problem further simplifies to 
\begin{equation*}
    \argmin\limits_{y\in \mathbb{D}_{K}^{n}}\left(\frac{a\|y\|^{2}-2b^{T}y+c}{1-|K|\cdot\|y\|^{2}}\right)
\end{equation*}
Say we fix some length $\|y\|$, then to minimize the above expression, we must choose $y=\eta b$ (so that $b^{\top}y$ achieves maximum). Plugging this into the above expression gives
\begin{equation*}
    \argmin\limits_{\eta b\in \mathbb{D}_{K}^{n}}\left(\frac{a\|b\|^{2}\eta^{2}-2\|b\|^{2}\eta+c}{1-|K|\cdot\|b\|^{2}\eta^{2}}\right)
\end{equation*}
Now we'll consider \begin{equation*}
    f(\eta)=\frac{a\|b\|^{2}\eta^{2}-2\|b\|^{2}\eta+c}{1-|K|\cdot\|b\|^{2}\eta^{2}}
\end{equation*}
\begin{align*}
    \Rightarrow f'(\eta) &= \frac{(1-|K|\cdot\|b\|^{2}\eta^{2})(a\|b\|^{2}\eta^{2}-2\|b\|^{2}\eta+c)'-(a\|b\|^{2}\eta^{2}-2\|b\|^{2}\eta+c)(1-|K|\cdot\|b\|^{2}\eta^{2})'}{(1-|K|\cdot\|b\|^{2}\eta^{2})^{2}} \nonumber\\
    &=\frac{(1-|K|\cdot\|b\|^{2}\eta^{2})(2a\|b\|^{2}\eta-2\|b\|^{2})-(a\|b\|^{2}\eta^{2}-2\|b\|^{2}\eta+c)(-2|K|\cdot\|b\|^{2}\eta)}{(1-|K|\cdot\|b\|^{2}\eta^{2})^{2}} \nonumber\\
    &= \frac{-2\|b\|^{2}+2a\|b\|^{2}\eta+2|K|\cdot\|b\|^{4}\eta^{2}-2|K|\cdot a\|b\|^{4}\eta^{3}+2|K|\cdot\|b\|^{2}c\eta-4|K|\cdot\|b\|^{4}\eta^{2}+2|K|\cdot a\|b\|^{4}\eta^{3}}{(1-|K|\cdot\|b\|^{2}\eta^{2})^{2}} \nonumber \\
    &=\frac{-2\|b\|^{2}+2(a+c|K|)\|b\|^{2}\eta-2|K|\cdot\|b\|^{4}\eta^{2}}{(1-|K|\cdot\|b\|^{2}\eta^{2})^{2}}=\frac{-2\|b\|^{2}}{(1-|K|\cdot\|b\|^{2}\eta^{2})^{2}}\left(|K|\cdot\|b\|^{2}\eta^{2}-(a+c|K|)\eta+1\right)
\end{align*}
Thus, to achieve minimum, we will have $f'(\eta)=0$, so
\begin{equation*}
    f'(\eta)=0\Rightarrow |K|\cdot\|b\|^{2}\eta^{2}-(a+c|K|)\eta+1=0\Rightarrow \eta=\frac{(a+c|K|)\pm\sqrt{(a+c|K|)^{2}-4|K|\cdot\|b\|^{2}}}{2|K|\cdot\|b\|^{2}}
\end{equation*}
Moreover, we know that \begin{equation*}
    f'(\eta)<0 \text{ for } \eta<\frac{(a+c|K|)-\sqrt{(a+c|K|)^{2}-4|K|\cdot\|b\|^{2}}}{2|K|\cdot\|b\|^{2}}
\end{equation*} 
and \begin{equation*}
    f'(\eta)>0 \text{ for } \frac{(a+c|K|)-\sqrt{(a+c|K|)^{2}-4|K|\cdot\|b\|^{2}}}{2|K|\cdot\|b\|^{2}}<\eta<\frac{(a+c|K|)+\sqrt{(a+c|K|)^{2}-4|K|\cdot\|b\|^{2}}}{2|K|\cdot\|b\|^{2}}
\end{equation*}
This means the actual $\eta$ that achieves the minimum is
\begin{equation*}
    \eta=\frac{(a+c|K|)-\sqrt{(a+c|K|)^{2}-4|K|\cdot\|b\|^{2}}}{2|K|\cdot\|b\|^{2}}
\end{equation*}
and thus we have
\begin{equation*}
    y_{k+1}^{*}=\left(\frac{(a+c|K|)-\sqrt{(a+c|K|)^{2}-4|K|\cdot\|b\|^{2}}}{2|K|\cdot\|b\|^{2}}\right)b
\end{equation*}
and this is exactly what we computed for $y_{k+1}$ in Algorithm \ref{alg:poincareforward}. 

(3) \textbf{Proof of convergence}: We now show that the sequence $\{y_{k}\}$ converges to the Fr\'echet mean $y^{*}$ as $k\rightarrow\infty$.

We know that if we pick the minimizing $y$ for the RHS of equation (\ref{poincareupperbound}), the RHS is smaller than or equal to the case where we pick $y=y_{k}$ (in which case the RHS becomes $g(y_{k})$). 

Thus, we know that $g(y_{k+1}^{*})\leq g(y_{k})$. Similar to the case in Theorem \ref{thm:lorentzforward}, we know that equality only holds when we're already at the Fr\'echet mean, so if we are not at the Fr\'echet mean, we must have $g(y_{k+1})<g(y_{k})$ strictly decreasing. This means the sequence $\{y_{k}\}$ must converge to the Fr\'echet mean, as this is the only fixed point that we can converge to.

\end{proof}

\section{Explicit Derivations of Backpropagation of Fr\'echet Mean for Hyperbolic Space}

\subsection{Differentiating through all Parameters of the Fr\'echet Mean}

With our constructions from Appendix \ref{appendix:diffManifold}, we derive gradient expressions for the Fr\'echet mean in hyperbolic space. In particular, we wish to differentiate with respect to input points, weights, and curvature. To see that these are all differentiable values, we note that the squared distance function in the Fr\'echet mean objective function admits derivatives for all of these variables. In particular, we see that the Fr\'echet mean for hyperbolic space w.r.t. these input parameters is effectively a smooth function from $(\mathbb{H}^n)^t \times \R^t \times \R \to \mathbb{H}^n$, where the input variables are points, weights, and curvature respectively.

\subsection{Derivation of Formulas for Hyperboloid Model Backpropagation}

In this section, we derive specific gradient computations for the Fr\'echet mean in hyperboloid model; we assume the Fr\'echet mean is already provided from the forward pass. This gradient computation is necessary since, at the time of writing, all machine learning auto-differentiation packages do not support manifold-aware higher-order differentiation.

\vspace{10pt}

Our first goal is to recast our original problem into an equaivalent optimization problem that can be easily differentiated.
\begin{thm}\label{thm:lorentzbackpropequiv}
Let $x^{(1)}, \cdots, x^{(t)}\in \mathbb{H}_{K}^{n}\subseteq \mathbb{R}^{n+1}$ be $t$ points in the hyperboloid space, $w_1, \dots, w_t \in \R^{+}$ be their weights, and let their weighted Fr\'echet mean $y^{*}$ be the solution to the following optimization problem.
\begin{equation}\label{eqn:frechetoptproblorentz}
    y^{*}(x^{(1)}, \cdots, x^{(t)})=\argmin_{y \in \mathbb{H}_{K}^{n}}\sum_{l=1}^{t}\frac{w_{l}}{|K|} \arccosh^{2}(K(x^{(l)})^\top M y)
\end{equation}

Let $\overline{x} = y^{*}(x^{(1)}, \cdots, x^{(t)})$ and $M$ be the identity matrix with top-left coordinate set to be $-1$. We can recast the above optimization problem with a reparametrization map $h: T_{\overline{x}} \mathbb{H}_K^n \to \mathbb{H}_K^n$ by $h(u)=u+\overline{x}\cdot \sqrt{1-Ku^{\top}Mu}$. We obtain the equivalent problem 
\begin{equation}\label{eqn:lorentzbackpropopt}
    u^{*}(x^{(1)}, \cdots, x^{(t)})=\argmin\limits_{u\in \mathbb{R}^{n}, \bar x^{\top}Mu=0}F(x^{(1)}, \cdots, x^{(t)}, u)
\end{equation}
\begin{equation}
    \text{where } F(x^{(1)}, \cdots, x^{(t)}, u)=\sum\limits_{l=1}^{t} \frac{w_{l}}{|K|} g\left((x^{(l)})^{\top}Mu+(x^{(l)})^{\top}M\bar x \cdot \sqrt{1 - Ku^\top M u}\right)
\end{equation}
\begin{equation}
    g(x)=\arccosh^{2}(Ky)
\end{equation}

Note that $u^{*}(x^{(1)}, \cdots, x^{(t)}) = 0$.
\end{thm}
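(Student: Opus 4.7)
The plan is to verify this reparametrization result in three stages: first check that $h$ really lands on the hyperboloid, then check that the objective transforms as claimed, and finally read off $u^* = 0$ from the fact that $h(0) = \overline{x}$.

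For the first stage, I would take $u$ in the tangent space $T_{\overline{x}}\mathbb{H}_K^n$, which, viewing $\mathbb{H}_K^n$ as embedded in $\mathbb{R}^{n+1}$, is characterized by the Lorentzian orthogonality condition $\overline{x}^\top M u = 0$. (This is what the theorem means by $u \in \mathbb{R}^n$ with $\overline{x}^\top M u = 0$; geometrically the tangent hyperplane at $\overline{x}$ is $n$-dimensional inside $\mathbb{R}^{n+1}$.) Then I compute
\begin{equation*}
    h(u)^\top M h(u) = u^\top M u + 2\sqrt{1-Ku^\top M u}\,\overline{x}^\top M u + (1 - Ku^\top M u)\,\overline{x}^\top M \overline{x}.
\end{equation*}
The middle term vanishes by the tangent condition, and $\overline{x}^\top M \overline{x} = 1/K$ since $\overline{x} \in \mathbb{H}_K^n$. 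This collapses to $u^\top M u + (1-Ku^\top M u)/K = 1/K$, so indeed $h(u) \in \mathbb{H}_K^n$. Moreover $h(0) = \overline{x}$, and by $M$-orthogonal decomposition of any $y \in \mathbb{H}_K^n$ on the sheet of $\overline{x}$ into $\alpha \overline{x} + v$ with $\overline{x}^\top M v = 0$, one recovers $u = v$ uniquely with $\alpha = \sqrt{1 - Kv^\top M v} > 0$; this shows $h$ is a bijection between the tangent hyperplane and the connected component of the hyperboloid containing $\overline{x}$.

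For the second stage, I substitute $y = h(u)$ into the original objective. A direct computation gives
\begin{equation*}
    (x^{(l)})^\top M h(u) = (x^{(l)})^\top M u + (x^{(l)})^\top M \overline{x} \cdot \sqrt{1 - K u^\top M u},
\end{equation*}
so each summand $\frac{w_l}{|K|} \arccosh^2(K(x^{(l)})^\top M y)$ in \eqref{eqn:frechetoptproblorentz} becomes exactly the $l$-th term of $F$, matching the definition of $g$ (modulo the minor notational typo $g(x) = \arccosh^2(Ky)$, which should read $g(y) = \arccosh^2(Ky)$). Hence the original objective agrees with $F(x^{(1)}, \dots, x^{(t)}, u)$ pointwise under the bijection $y \leftrightarrow u$, and the constrained minimization over $\{u : \overline{x}^\top M u = 0\}$ is equivalent to the original minimization over $\mathbb{H}_K^n$.

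For the third stage, since $h$ is a bijection sending the feasible set of \eqref{eqn:lorentzbackpropopt} to (the relevant sheet of) $\mathbb{H}_K^n$ and preserves the objective value, the minimizers correspond one-to-one. The minimizer of the original problem is $\overline{x}$, and $h^{-1}(\overline{x}) = 0$, giving $u^* = 0$. The only genuine subtlety I anticipate is making the constraint description precise and keeping the ambient-$\mathbb{R}^{n+1}$ versus intrinsic-$\mathbb{R}^n$ dimension bookkeeping clean; the algebraic identities themselves are mechanical once the tangent condition and the hyperboloid relation are used.
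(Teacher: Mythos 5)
Your proposal is correct and follows essentially the same route as the paper: verify that $h$ maps the Lorentz-orthogonal complement of $\overline{x}$ bijectively onto the (relevant sheet of the) hyperboloid using $h(u)^\top M h(u) = \tfrac{1}{K}$, then substitute $y = h(u)$ to see the objective becomes $F$, so minimizers correspond and $u^* = h^{-1}(\overline{x}) = 0$. Your surjectivity argument via the $M$-orthogonal decomposition $y = \alpha\overline{x} + v$ with $\alpha = \sqrt{1 - Kv^\top M v} > 0$ is in fact a cleaner and more explicit justification of the bijection than the paper's brief inversion argument, but it is the same underlying idea.
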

\begin{proof}
(1) \textbf{We first show that $h:\mathbb{H}^{n}\rightarrow\mathbb{H}^{n}$ is a bijection.} 

For completeness, we derive the re-parameterization together with intuition. We let $\overline{x}$ be any point in $\mathbb{H}_{K}^n$ and $u$ a point in the tangent space satisfying $\overline{x}^{\top}Mu=0$. We wish to solve for a constant $c$ so that $h(\overline{x}) = u + \overline{x} \cdot c$ lies on the manifold. Note that this corresponds to re-scaling $\overline{x}$ so that the induced shift by the vector $u$ does not carry $\overline{x}$ off the manifold. Algebraically, we require $h(\overline{x})^\top M h(\overline{x}) = \frac{1}{K}$:
\begin{equation}
(u + \overline{x} \cdot c)^\top M (u + \overline{x} \cdot c) = \frac{1}{K}
\end{equation}
\begin{equation}
u^\top M u + u^\top M \overline{x} \cdot c + c \overline{x}^\top M u + c^2 \cdot \overline{x}^\top M \overline{x} = \frac{1}{K}
\end{equation}
\begin{equation}
u^\top M u + \frac{c^{2}}{K} = \frac{1}{K} \Rightarrow c = \sqrt{1-K u^\top M u}
\end{equation}

Note that since we are on the positive sheet of the hyperboloid, we take the positive root at the final step. Since the map is non-degenerate, i.e. $\sqrt{1-K u^\top M u} \neq 0$ since $u$ is in the tangent space, observe that for any point $h(\overline{x})$ on the hyperboloid we can solve for $\overline{x} = \frac{h(\overline{x}) - u}{\sqrt{1-Ku^\top M u}}$. Hence the map is surjective. Moreover, note that the value of $\overline{x}$ is unique; hence we have injectivity and conclude that $h$ is a bijection.

\vspace{10pt}

(2) The rest of the proof follows from plugging the reparametrization map into equation (\ref{eqn:frechetoptproblorentz}) and simplifying the result. 
\end{proof}
We then present how to differentiate through this equivalent problem. 
\begin{thm}\label{thm:lorentzbackpropmain}
For each $1\leq i\leq n$, consider the function $u_{i}^{*}:\mathbb{R}^{n}\rightarrow\mathbb{R}^{n}$ defined by
\begin{equation}
    u_{i}^{*}(x^{(i)})=u^{*}(\tilde{x}^{(1)}, \cdots, \tilde{x}^{(i-1)},x^{(i)},\tilde{x}^{(i+1)}, \cdots \tilde{x}^{(n)})
\end{equation}
which only varies $x^{(i)}$ and fixing all other input variables in $u^{*}$.
Then the Jacobian matrix of $u_{i}^{*}$ can be computed in the following way: \begin{equation}\label{eqn:hyperboloidbackwardJacobian}
    \widetilde{\nabla}_{x^{(i)}} u^{*}(x^{(i)})=\left(H^{-1}A^{\top}(AH^{-1}A^{\top})^{-1}AH^{-1}-H^{-1}\right)\widetilde{\nabla}_{x^{(i)}}\nabla_{u}F(x^{(1)}, \cdots, x^{(t)}, u)\Big|_{u=0}
\end{equation}
where we plug in $A=\bar{x}^{\top}M$, 
the Hessian evaluated at $u=0$: \begin{align}\label{eqn:hyperboloidbackwardhessian}
    H&=\nabla_{uu}^2 F(x^{(1)}, \ldots, x^{(t)}; u)\Big|_{u=0}\nonumber\\
    &=\sum_{l=1}^{t} \frac{w_{l}}{|K|} \cdot \left( g''\left((x^{(l)})^\top M \bar x \right) \cdot \left( M x^{(l)}\right) \left( M x^{(l)}\right)^\top - K \cdot g  '\left((x^{(l)})^\top M \bar x\right) \cdot \left((x^{(l)})^\top M \bar x\right) \cdot M\right)
\end{align}
and the mixed gradient evaluated at $u=0$:
\begin{equation}\label{eqn:hyperboloidbackwardmixed}
    \widetilde{\nabla}_{x^{(i)}}\nabla_{u}F(x^{(1)}, \cdots, x^{(t)}, u)\Big|_{u=0}=\frac{w_{i}}{|K|}\cdot\left(g''\left((x^{(i)})^\top M \bar x\right)\cdot\left( M x^{(i)}\right)\left(M\bar x\right)^{\top}+g'\left((x^{(i)})^\top M \bar x \right)\cdot M\right)
\end{equation}
\end{thm}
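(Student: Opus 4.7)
The plan is to recognize Theorem \ref{thm:lorentzbackpropmain} as a direct application of the linearly constrained argmin differentiation result, Theorem \ref{thm:diffthroughconstrained}, to the reparametrized Fr\'echet objective $F$ from Theorem \ref{thm:lorentzbackpropequiv}. The reparametrization $h(u) = u + \bar{x}\sqrt{1-Ku^\top Mu}$ converts the manifold-constrained minimization over $\mathbb{H}_K^n$ into a Euclidean minimization over the linear subspace $\{u \in \mathbb{R}^{n+1} : \bar{x}^\top M u = 0\}$. The constraint can be written as $Au = 0$ with $A = \bar{x}^\top M \in \mathbb{R}^{1 \times (n+1)}$. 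Since $\bar{x} \ne 0$ (as $\bar{x}$ lies on the hyperboloid), and $M$ is nonsingular, $A$ has full rank $1$, so the hypotheses of Theorem \ref{thm:diffthroughconstrained} are satisfied. Invoking that theorem at the solution point $u = 0$ immediately yields the displayed Jacobian formula \eqref{eqn:hyperboloidbackwardJacobian}; what remains is to identify $H = \nabla_{uu}^2 F|_{u=0}$ and $\widetilde{\nabla}_{x^{(i)}}\nabla_u F|_{u=0}$ explicitly.

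Both derivatives reduce to routine chain-rule computations on the scalar composition $F = \sum_l \frac{w_l}{|K|} g(\phi_l(u))$ with $\phi_l(u) = (x^{(l)})^\top M u + (x^{(l)})^\top M \bar{x}\sqrt{1-Ku^\top Mu}$. The key observation that cleans everything up is that at $u=0$ the square-root factor equals $1$ and its gradient $\nabla_u \sqrt{1-Ku^\top Mu} = -KMu/\sqrt{1-Ku^\top Mu}$ vanishes. Consequently $\nabla_u \phi_l(0) = Mx^{(l)}$ and $\phi_l(0) = (x^{(l)})^\top M \bar{x}$. For the Hessian I differentiate once more to get $\nabla_{uu}\phi_l(0) = -K(x^{(l)})^\top M \bar{x}\cdot M$ (the second derivative of $\sqrt{1-Ku^\top Mu}$ at $u=0$ is $-KM$). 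Combining via the product rule gives
\begin{equation*}
H = \sum_{l=1}^t \frac{w_l}{|K|}\bigl[g''(\phi_l(0))(Mx^{(l)})(Mx^{(l)})^\top + g'(\phi_l(0))\cdot(-K(x^{(l)})^\top M \bar{x})\cdot M\bigr],
\end{equation*}
which is \eqref{eqn:hyperboloidbackwardhessian}.

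For the mixed partial, only the $l=i$ term depends on $x^{(i)}$. Using $\widetilde{\nabla}_{x^{(i)}}\phi_i(0) = M\bar{x}$ and $\widetilde{\nabla}_{x^{(i)}}(Mx^{(i)}) = M$, the product rule on $g'(\phi_i(u))\nabla_u \phi_i(u)$ evaluated at $u=0$ yields
\begin{equation*}
\widetilde{\nabla}_{x^{(i)}}\nabla_u F\Big|_{u=0} = \frac{w_i}{|K|}\bigl[g''(\phi_i(0))(Mx^{(i)})(M\bar{x})^\top + g'(\phi_i(0))\cdot M\bigr],
\end{equation*}
which is \eqref{eqn:hyperboloidbackwardmixed}.

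The main obstacle, such as it is, lies in the bookkeeping of the two differentiations of $\phi_l$ through the square root, and in ensuring the evaluation $u=0$ is performed at the correct stage so that the vanishing-gradient simplification is actually valid (rather than dropping a term one still needs). Once these clean limits are established, substituting $H$, $A = \bar{x}^\top M$, and the mixed partial directly into the generic formula of Theorem \ref{thm:diffthroughconstrained} completes the derivation. Finally, because $u^*$ is related to $y^*$ only by the bijection $h$ whose differential at $u=0$ is the identity on $T_{\bar{x}}\mathbb{H}_K^n$, the Jacobian computed for $u^*$ transports directly to the Fr\'echet mean Jacobian.
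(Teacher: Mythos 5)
Your proposal is correct and follows essentially the same route as the paper: it invokes Theorem \ref{thm:diffthroughconstrained} on the reparametrized constrained problem of Theorem \ref{thm:lorentzbackpropequiv} with $A=\bar{x}^\top M$, then obtains $H$ and the mixed gradient by chain/product rule and evaluation at $u=0$, correctly retaining the second-derivative contribution $-KM$ of the square-root factor rather than dropping it. The paper merely writes out the full $u$-dependent gradient, Hessian, and mixed derivative before setting $u=0$, whereas you evaluate at $u=0$ earlier; the resulting expressions agree.
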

\begin{proof}
(1) \textbf{Application of Gould's theorem}: Recall from Theorem \ref{thm:lorentzbackpropequiv} that our minimization problem has the form
\begin{equation*}
    u^{*}(x^{(1)}, \cdots, x^{(t)})=\argmin\limits_{u\in \mathbb{R}^{n}, \bar x^{\top}Mu=0}F(x^{(1)}, \cdots, x^{(t)}, u)
\end{equation*}
where \begin{equation*}
    F(x^{(1)}, \cdots, x^{(t)}, u)=\sum\limits_{l=1}^{t}\frac{w_{l}}{|K|}\cdot g\left((x^{(l)})^{\top}Mu+(x^{(l)})^{\top}M\bar x \cdot \sqrt{1 - Ku^\top M u}\right)
\end{equation*}
\begin{equation*}
    g(x)=\arccosh(Kx)^{2}
\end{equation*}
We now apply Theorem \ref{thm:diffthroughconstrained} on the constrained optimization problem (\ref{eqn:lorentzbackpropopt}), noting that we can write $x^{\top}Mx=0$ in the form $Au=b$ for $A=x^{\top}M\in \mathbb{R}^{1\times n}$ and $b=0\in \mathbb{R}$. This gives us 
\begin{equation*}
    \widetilde{\nabla} u^{*}(x^{(i)})=\left(H^{-1}A^{\top}(AH^{-1}A^{\top})^{-1}AH^{-1}-H^{-1}\right)\widetilde{\nabla}_{x^{(i)}}\nabla_{u}F(x^{(1)}, \cdots, x^{(t)}, u)
\end{equation*}
where $A=x^{\top}M$ and $H=\nabla_{uu}^{2}F(x^{(1)}, \cdots , x^{(t)}, u)$, so we recover equation (\ref{eqn:hyperboloidbackwardJacobian}). We also note that after we compute the Fr\'echet mean in the forward pass, we can set $u=0$ and this will simplify the expressions of both partial derivatives.

\vspace{10pt}

(2) \textbf{Computing the gradient $\nabla_{u}F$}: We first compute $\nabla_{u}F(x^{(1)}, \cdots, x^{(n)}, u)\in \mathbb{R}^{n}$ using the chain rule.
\begin{equation*}
    \nabla_u F(x^{(1)}, \cdots, x^{(t)}; u) = \sum_{l=1}^{t}\frac{w_{l}}{|K|}\cdot g'\left((x^{(l)})^\top M u + (x^{(l)})^\top M \bar x \cdot \sqrt{1 -Ku^\top M u} \right) \cdot \left( M x^{(l)} + (x^{(l)})^\top M \bar x \cdot \frac{-KM u}{\sqrt{1 -K u^\top M u}} \right) 
\end{equation*}

(3) \textbf{Computing the Hessian $\nabla_{uu}^{2}F$}: We then evaluate the Hessian $\nabla_{uu}^{2}F(x^{(1)}, \cdots, x^{(t)}, u)\in \mathbb{R}^{n\times n}$.
\begin{align*}
&H=\nabla_{uu}^2 F(x^{(1)}, \ldots, x^{(t)}; u) \nonumber\\
&= 
\sum_{l=1}^{t} \frac{w_{l}}{|K|}\cdot g''\left((x^{(l)})^\top M u + (x^{(l)})^\top M \bar x \cdot \sqrt{1 -K u^\top M u} \right) \cdot \left( M x^{(l)} + (x^{(l)})^\top M \bar x \cdot \frac{-KM u}{\sqrt{1 -K u^\top M u}} \right)\nonumber\\
&\hspace{100pt}\cdot\left( M x^{(l)} + (x^{(l)})^\top M \bar x \cdot \frac{-KM u}{\sqrt{1 -K u^\top M u}} \right)^\top\nonumber \\
&\hspace{2em}+
\sum_{l=1}^t \frac{w_{l}}{|K|}\cdot g'\left((x^{(l)})^\top M u + (x^{(l)})^\top M \bar x \cdot \sqrt{1 -K u^\top M u} \right) \cdot (x^{(l)})^\top M \bar x \cdot \left( \frac{-KM}{\sqrt{1 -K u^\top M u}} - \frac{K^{2}M u u^\top M}{(1 -K u^\top M u)^{3/2}} \right)
\end{align*}

(4) \textbf{Computing $\widetilde{\nabla}_{x^{(i)}}\nabla_{u}F$}: We then evaluate $\widetilde{\nabla}_{x^{(i)}}\nabla_{u}F(x^{(1)}, \cdots, x^{(t)}, u)\in \mathbb{R}^{n\times n}$.

We first note that we can rewrite $(x^{(l)})^\top M \bar x \cdot \frac{M u}{\sqrt{1 -K u^\top M u}}=\frac{Mu \bar x^{\top}Mx^{(l)}}{\sqrt{1-Ku^{\top}Mu}}$ where the numerator is a matrix multiplication. We then take the derivative of $\nabla_{u}$ computed above:
\begin{align*}
    &\widetilde{\nabla}_{x^{(i)}}\nabla_{u}F(x^{(1)}, \cdots, x^{(t)}, u) \nonumber\\
    &= \frac{w_{i}}{|K|}\cdot g''\left((x^{(i)})^\top M u + (x^{(i)})^\top M \bar x \cdot \sqrt{1 -K u^\top M u} \right)\cdot\left( M x^{(i)} + (x^{(i)})^\top M \bar x \cdot \frac{-KM u}{\sqrt{1 -K u^\top M u}} \right)\left(Mu+M\bar x\sqrt{1-Ku^{\top}Mu}\right)^{\top} \nonumber\\
    &\hspace{50pt}+\frac{w_{i}}{|K|}\cdot g'\left((x^{(i)})^\top M u + (x^{(i)})^\top M \bar x \cdot \sqrt{1 -K u^\top M u} \right)\cdot \left(M-K\frac{Mu\bar x^{\top}M}{\sqrt{1-Ku^{\top}Mu}}\right)  
\end{align*}

(5) \textbf{Evaluating the above functions at $u=0$}: In our above computations, the parameter $u$ would be set to $0$ after our forward pass finds the Fr\'echet mean. Thus, we will evaluate our results in (3), (4) at $u=0$. 

The Hessian evaluated at $u=0$ gives \begin{align*}
    H&=\nabla_{uu}^2 F(x^{(1)}, \ldots, x^{(t)}; u)\Big|_{u=0}\nonumber\\
    &=\sum_{l=1}^{t}\frac{w_{l}}{|K|}\cdot\left(g''\left((x^{(l)})^\top M \bar x \right) \cdot \left( M x^{(l)}\right) \left( M x^{(l)}\right)^\top - K \cdot g '\left((x^{(l)})^\top M \bar x\right) \cdot \left((x^{(l)})^\top M \bar x\right) \cdot M\right)
\end{align*}
The mixed gradient evaluated at $u=0$ gives
\begin{equation*}
    \widetilde{\nabla}_{x^{(i)}}\nabla_{u}F(x^{(1)}, \cdots, x^{(t)}, u)\Big|_{u=0}=\frac{w_{i}}{|K|}\cdot\left(g''\left((x^{(i)})^\top M \bar x\right)\cdot\left(M x^{(i)}\right)\left(M\bar x\right)^{\top}+g'\left((x^{(i)})^\top M \bar x \right)\cdot M\right)
\end{equation*}
and the above two equations give exactly equations (\ref{eqn:hyperboloidbackwardhessian}) and (\ref{eqn:hyperboloidbackwardmixed}).
\end{proof}

\begin{remark}
    We omit derivations for curvature and weights, as this process follows similarly.
\end{remark}

\subsection{Derivation of Formulas for Poincar\'e Ball Model Backpropagation}

In this section, we derive specific gradient computations for the Poincar\'e ball. This is necessary since many machine learning auto-differentiation do not support higher order differentiation.

\begin{thm}\label{thm:ballbackpropmain}
Let $x^{(1)}, \cdots, x^{(t)}\in \mathbb{D}_{K}^{n}\subseteq \mathbb{R}^{n}$ be $t$ points in the Poincar\'e ball, $w_1, \dots, w_t \in \R$ be their weights, and let their weighted Fr\'echet mean $y^{*}$ be the solution to the following optimization problem.
\begin{equation}\label{eqn:poincarebackwardmain}
    y^{*}(x^{(1)}, \cdots, x^{(t)})=\argmin_{y \in \mathbb{D}^{n}_K}f(\{x\}, y)
\end{equation}
\begin{equation}\label{eqn:poincarebackwardobjective}
    \text{where } f(\{x\}, y)=\frac{1}{|K|}\sum^{t}_{l=1} w_{l}\cdot \arccosh \left(1 - \frac{2K ||x^{(l)}-y||^2_2}{(1+K||x^{(l)}||^2_2)(1+K||y||^2_2)} \right)^2
\end{equation}
Then the gradient of $y^{*}$ with respect to $x^{(i)}$ is given by \begin{equation}\label{eqn:poincarebackwardgradient}
    \widetilde{\nabla}_{x^{(i)}}y^{*}(\{x\})=-\nabla_{yy}^{2}f(\{x\}, y^{*}(\{x\}))^{-1}\widetilde{\nabla}_{x^{(i)}}\nabla_{y}f(\{x\}, y^{*}(\{x\}))
\end{equation}

\vspace{10pt}

(1) Terms of $\widetilde{\nabla}_{x^{(i)}}\nabla_{y}f(\{x\}, y^{*}(\{x\}))$ are given by \begin{equation}\label{eqn:poincarebackwarddxikdy}
    \frac{\partial}{\partial x_{ik}}\frac{\partial}{\partial y_{j}}f(\{x\}, y^{*}(\{x\}))=\frac{1}{|K|}w_{i}\cdot \paren{\left(\frac{2}{(v^{(i)})^{2}-1}-\frac{2(v^{(i)})\arccosh(v^{(i)})}{((v^{(i)})^{2}-1)^{\frac{3}{2}}}\right)M_{ik}T_{ij}+\frac{2\arccosh(v^{(i)})}{\sqrt{(v^{(i)})^2 - 1}}\frac{\partial T_{ij}}{\partial x_{ik}}}
\end{equation}
where \begin{equation}\label{eqn:poincarebackwardTij}
    T_{ij}=\frac{4K}{D^{(i)}}(x_{j}^{(i)}-y_{j})+\frac{4K^{2}}{(D^{(i)})^{2}}y_{j}\|x^{(i)}-y\|_{2}^{2}\cdot (1+K\|x^{(i)}\|_{2}^{2})
\end{equation}
\begin{equation}\label{eqn:poincarebackwardmik}
    M_{ik}=\frac{4K}{D^{(i)}}(y_{k}-x_{k}^{(i)})+\frac{4K^{2}}{(D^{(i)})^{2}}\cdot x_{k}^{(i)}\|x^{(i)}-y\|_{2}^{2}\cdot (1+K\|y\|_{2}^{2})
\end{equation}
\begin{equation}\label{eqn:poincarebackwarddTijdxij}
    \frac{\partial T_{ij}}{\partial x_{ij}}=\frac{4K}{D^{(i)}}-\frac{8K^{2}}{(D^{(i)})^{2}}x_{j}^{(i)}(x_{j}^{(i)}-y_{j})(1+K\|y\|_{2}^{2})+\frac{8K^{2}y_{j}}{(D^{(i)})^{2}}(x_{j}^{(i)}-y_{j})(1+K\|x^{(i)}\|_{2}^{2})-\frac{8K^{3}y_{j}}{(D^{(i)})^{2}}x_{j}^{(i)}\|x^{(i)}-y\|_{2}^{2}
\end{equation}
and for $k\neq j$,
\begin{equation}\label{eqn:poincarebackwarddTijdxik}
    \frac{\partial T_{ij}}{\partial x_{ik}}=-\frac{8K^{2}}{(D^{(i)})^{2}}x_{k}^{(i)}(x_{j}^{(i)}-y_{j})(1+K\|y\|_{2}^{2})+\frac{8K^{2}y_{j}}{(D^{(i)})^{2}}(x_{k}^{(i)}-y_{k})(1+K\|x^{(i)}\|_{2}^{2})-\frac{8K^{3}y_{j}}{(D^{(i)})^{2}}x_{k}^{(i)}\|x^{(i)}-y\|_{2}^{2}
\end{equation}

\vspace{10pt}

(2) Terms of $\nabla_{yy}^{2}f(\{x\}, y^{*}(\{x\}))$ is given by
\begin{equation}\label{eqn:poincarebackwarddyidyj}
    \frac{\partial}{\partial y_{i}}\frac{\partial}{\partial y_{j}}f(\{x\}, y^{*}(\{x\}))=\frac{1}{|K|}\sum\limits_{l=1}^{t} w_l\cdot\paren{\left(\frac{2}{(v^{(l)})^{2}-1}-\frac{2(v^{(l)})\arccosh(v^{(l)})}{((v^{(l)})^{2}-1)^{\frac{3}{2}}}\right)T_{li}T_{lj}+\frac{2\arccosh(v^{(l)})}{\sqrt{(v^{(l)})^2 - 1}}\frac{\partial T_{lj}}{\partial y_{i}}}
\end{equation}
where \begin{equation}\label{eqn:poincarebackwardvlDl}
    v^{(l)} = 1 - \frac{2K ||x^{(l)}-y||^2_2}{(1+K||x^{(l)}||^2_2)(1+K||y||^2_2)} \text{ , } D^{(l)} = (1 + K ||x^{(l)}||^2_2)(1 + K||y||^2_2)
\end{equation}
\begin{equation}\label{eqn:poincarebackwarddTlidyi}
    \frac{\partial T_{li}}{\partial y_{i}}=-\frac{4K}{D^{(l)}}-\frac{16K^{2}}{(D^{(l)})^{2}}y_{i}(x_{i}^{(l)}-y_{i})(1+K\|x^{(l)}\|_{2}^{2})+\frac{4K^{2}(1+K\|x^{(l)}\|_{2}^{2})}{(D^{(l)})^{2}}\|x^{(l)}-y\|_{2}^{2}-\frac{16K^{3}(1+K\|x^{(l)}\|_{2}^{2})^{2}}{(D^{(l)})^{3}}y_{i}^{2}\|x^{(l)}-y\|_{2}^{2}
\end{equation}
and for $i\neq j$
\begin{equation}\label{eqn:poincarebackwarddTljdyi}
    \frac{\partial T_{lj}}{\partial y_{i}}-\frac{8K^{2}(1+K\|x^{(l)}\|_{2}^{2})}{(D^{(l)})^{2}} \left[ y_{j} (x_{i}^{(l)} - y_{i}) + y_{i}(x_{j}^{(l)}-y_{j}) \right]-\frac{16K^{3}(1+K\|x^{(l)}\|_{2}^{2})^{2}}{(D^{(l)})^{3}}y_{i}y_{j}\|x^{(l)}-y\|_{2}^{2}
\end{equation}
\end{thm}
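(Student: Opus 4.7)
My approach is to specialize Theorem \ref{thm:diffFrechetMeanEmbedded} to the case $\mathcal{M} = \mathbb{D}_K^n \subset \mathbb{R}^n$ and then compute the required derivatives explicitly via the chain rule. Since $\mathbb{D}_K^n$ is an open subset of $\mathbb{R}^n$ with $\dim \mathbb{D}_K^n = \dim \mathbb{R}^n = n$, the tangent space at every point equals $\mathbb{R}^n$ and the projection operator $\proj_{T_{\overline{x}}\mathcal{M}}$ collapses to the identity. The master formula therefore reduces cleanly to equation (\ref{eqn:poincarebackwardgradient}), and all remaining work is to expand the two factors $\nabla_{yy}^2 f$ and $\widetilde{\nabla}_{x^{(i)}} \nabla_y f$ into the stated explicit expressions.

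First I would introduce the shorthand $v^{(l)}$ and $D^{(l)}$ of equation (\ref{eqn:poincarebackwardvlDl}) so that the $l$-th summand of $f$ is $\frac{w_l}{|K|}\arccosh^2(v^{(l)})$. Differentiation of $\arccosh^2$ is handled by the scalar identities
\begin{equation*}
    (\arccosh^2)'(v) = \frac{2\arccosh(v)}{\sqrt{v^2 - 1}}, \qquad (\arccosh^2)''(v) = \frac{2}{v^2 - 1} - \frac{2v\arccosh(v)}{(v^2-1)^{3/2}},
\end{equation*}
so the problem reduces to computing $T_{lj} := \frac{\partial v^{(l)}}{\partial y_j}$ and $M_{ik} := \frac{\partial v^{(i)}}{\partial x_{ik}}$ (together with their own derivatives). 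Applying the product and quotient rules to the definition of $v^{(l)}$ produces equations (\ref{eqn:poincarebackwardTij}) and (\ref{eqn:poincarebackwardmik}); the two calculations are essentially symmetric in the roles of $y$ and $x^{(l)}$.

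Next, by the chain and product rules,
\begin{equation*}
    \frac{\partial^2}{\partial x_{ik}\,\partial y_j}\arccosh^2(v^{(l)}) = \mathbf{1}_{\{l = i\}}\!\left[(\arccosh^2)''(v^{(l)})\, M_{ik}\, T_{lj} + (\arccosh^2)'(v^{(l)})\, \frac{\partial T_{lj}}{\partial x_{ik}}\right],
\end{equation*}
where the indicator records the fact that $x^{(i)}$ enters only through the $l = i$ summand. Summing in $l$ and scaling by $\frac{w_l}{|K|}$ gives equation (\ref{eqn:poincarebackwarddxikdy}). Replacing $\partial/\partial x_{ik}$ by $\partial/\partial y_i$ (now every summand contributes, so the indicator disappears) yields equation (\ref{eqn:poincarebackwarddyidyj}).

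The final and main obstacle is differentiating $T_{lj}$ once more to obtain equations (\ref{eqn:poincarebackwarddTijdxij})--(\ref{eqn:poincarebackwarddTljdyi}). This is pure bookkeeping rather than anything conceptually new: one applies the quotient rule to (\ref{eqn:poincarebackwardTij}) and combines terms over the common denominators $(D^{(l)})^2$ and $(D^{(l)})^3$. The diagonal versus off-diagonal split (e.g.\ (\ref{eqn:poincarebackwarddTijdxij}) versus (\ref{eqn:poincarebackwarddTijdxik}), and (\ref{eqn:poincarebackwarddTlidyi}) versus (\ref{eqn:poincarebackwarddTljdyi})) arises because the factor $(x_j^{(l)} - y_j)$ in $T_{lj}$ contributes an extra term precisely when the index of differentiation coincides with $j$. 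With care to track which of $\|x^{(l)}\|_2^2$, $\|y\|_2^2$, and $\|x^{(l)} - y\|_2^2$ depends on the variable being differentiated, the stated formulas drop out directly.
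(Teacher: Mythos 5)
Your proposal is correct and follows essentially the same route as the paper: reduce to the argmin-differentiation result (with the projection trivial since $\mathbb{D}_K^n$ is open in $\mathbb{R}^n$), then expand $\nabla_y f$, the mixed derivative, and the Hessian via the scalar derivatives of $\arccosh^2$ together with $T_{lj}$, $M_{ik}$, and their derivatives, with the $l=i$ indicator and the diagonal/off-diagonal split exactly as the paper does. The only difference is that you defer the quotient-rule bookkeeping for $\partial T_{lj}/\partial x_{ik}$ and $\partial T_{lj}/\partial y_i$, which constitutes the bulk of the paper's written proof but introduces no new ideas.
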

\begin{remark}
    While the above computation is challenging to verify theoretically, to ensure the correctness of our computation, we implemented a gradient check on test cases for which it was simple to generate the correct gradient with a simple $\epsilon$-perturbation, and ensured that this value coincided with the gradient provided by our formulas above.
\end{remark}
\begin{proof}
(1) \textbf{Application of Gould's theorem}: For the minimization problem 
\begin{equation*}
    y^{*}(x^{(1)}, \cdots, x^{(t)})=\argmin_{y \in \mathbb{D}^{n}_K}f(\{x\}, y)
\end{equation*}
\begin{equation*}
    \text{where } f(\{x\}, y)=\frac{1}{|K|}\sum^{t}_{l=1} w_l\cdot \arccosh \left(1 - \frac{2K ||x^{(l)}-y||^2_2}{(1+K||x^{(l)}||^2_2)(1+K||y||^2_2)} \right)^2
\end{equation*}
we can apply Theorem \ref{thm:diffthroughconstrained} to find the the gradient of $y^{*}$ with respect to each $x^{(i)}$: \begin{equation*}
    \widetilde{\nabla}_{x^{(i)}}y^{*}(\{x\})=-\nabla^{2}_{yy}f(\{x\}, y^{*}(\{x\}))^{-1}\widetilde{\nabla}_{x^{(i)}}\nabla_{y}f(\{x\}, y^{*}(\{x\}))
\end{equation*}
and this is exactly in the form of equation (\ref{eqn:poincarebackwardgradient}). Thus, our goal is to compute the Hessian $\nabla^{2}_{yy}f(\{x\}, y^{*}(\{x\}))$ and the mixed gradient $\widetilde{\nabla}_{x^{(i)}}\nabla_{y}f(\{x\}, y^{*}(\{x\}))$.

\vspace{10pt}

(2) \textbf{Computing gradient $\nabla_y f$}: Denote
\begin{equation*}
    v^{(l)} = 1 - \frac{2K ||x^{(l)}-y||^2_2}{(1+K||x^{(l)}||^2_2)(1+K||y||^2_2)} \text{ , } D^{(l)} = (1 + K ||x^{(l)}||^2_2)(1 + K||y||^2_2)
\end{equation*}
Then we have:
\begin{equation}\label{eqn:poincarebackwardpartialy}
    \nabla_y f(\{x\}, y) = \frac{1}{|K|}\left[\sum^{t}_{l=1} w_l \cdot \frac{2\arccosh(v^{(l)})}{\sqrt{(v^{(l)})^2 - 1}} \cdot \frac{\partial v^{(l)}}{\partial y_1}, \sum^{t}_{l=1} w_l \cdot \frac{2\arccosh(v^{(l)})}{\sqrt{(v^{(l)})^2 - 1}} \cdot \frac{\partial v^{(l)}}{\partial y_2}, \cdots \right]
\end{equation}
where we replace $\frac{\partial v^{(i)}}{\partial y_j}$ with the following expression $T_{ij}$:
\begin{align*}
    T_{ij} &= \frac{\partial v^{(i)}}{\partial y_j}=-2K\frac{\partial}{\partial y_{j}}\left(\frac{\|x^{(i)}-y\|_{2}^{2}}{D^{(l)}}\right)=-\frac{2K}{(D^{(l)})^{2}}\left(D^{(i)}\frac{\partial}{\partial y_{j}}\left(\|x^{(i)}-y\|_{2}^{2}\right)-\|x^{(i)}-y\|_{2}^{2}\frac{\partial}{\partial y_{j}}(D^{(i)})\right) \nonumber\\
    &=-\frac{2K}{(D^{(i)})^{2}} \left(D^{(i)}\cdot 2(y_{j}-x_{j}^{(i)})-\|x^{(i)}-y\|_{2}^{2}\cdot (1+K\|x^{(i)}\|_{2}^{2})\cdot 2Ky_{j}\right)\nonumber \\
    &=\frac{4K}{D^{(i)}}(x_{j}^{(i)}-y_{j})+\frac{4K^{2}}{(D^{(i)})^{2}}y_{j}\|x^{(i)}-y\|_{2}^{2}\cdot (1+K\|x^{(i)}\|_{2}^{2})
\end{align*}
which is exactly the formula of $T_{ij}$ in equation (\ref{eqn:poincarebackwardTij}). \textbf{For the following derivations, we will omit the initial $\frac{1}{|K|}$ term for brevity.} 

\vspace{10pt}

(3) \textbf{Now we derive the formula for $\frac{\partial}{\partial x_{ik}} \nabla_y f$.} 

\vspace{10pt}

(3-1) From our previous equation (\ref{eqn:poincarebackwardpartialy}) to evaluate $\nabla_{y}f$, our goal is to evaluate all terms of the form
\begin{align*}
    &\frac{\partial}{\partial x_{ik}}\left(\sum\limits_{l=1}^{t}w_l \frac{2\arccosh(v^{(l)})}{\sqrt{(v^{(l)})^2 - 1}}\frac{\partial v^{(l)}}{\partial y_j}\right) = \sum\limits_{l=1}^{t}w_l \cdot \frac{\partial}{\partial x_{ik}}\left(\frac{2\arccosh(v^{(l)})}{\sqrt{(v^{(l)})^2 - 1}}\right)\cdot \frac{\partial v^{(l)}}{\partial y_j}+\sum\limits_{l=1}^{t}w_l \cdot \frac{2\arccosh(v^{(l)})}{\sqrt{(v^{(l)})^2 - 1}}\cdot \frac{\partial}{\partial x_{ik}}\left(\frac{\partial v^{(l)}}{\partial y_j}\right) \\
    &= \sum\limits_{l=1}^{t} w_l \cdot  \left(\frac{2}{(v^{(l)})^{2}-1}-\frac{2(v^{(l)})\arccosh(v^{(l)})}{((v^{(l)})^{2}-1)^{\frac{3}{2}}}\right)\cdot \frac{\partial v^{(l)}}{\partial x_{ik}}\cdot\frac{\partial v^{(l)}}{\partial y_j}+\sum\limits_{l=1}^{t}w_l \cdot \frac{2\arccosh(v^{(l)})}{\sqrt{(v^{(l)})^2 - 1}}\cdot \frac{\partial}{\partial x_{ik}}\left(\frac{\partial v^{(l)}}{\partial y_j}\right)
\end{align*}
We already have the expression for $T_{lj}=\frac{\partial v^{(l)}}{\partial y_{j}}$ above, so we just need the expression for $\frac{\partial v^{(l)}}{\partial x_{ik}}$ and $\frac{\partial}{\partial x_{ik}}\left(\frac{\partial v^{(l)}}{\partial y_{j}}\right)$.

\vspace{10pt}

(3-2) We first evaluate $\frac{\partial}{\partial x_{ik}}\left(\frac{\partial v^{(l)}}{\partial y_{j}}\right)=\frac{\partial T_{lj}}{\partial x_{ik}}$. 
\begin{align*}
    \frac{\partial T_{lj}}{\partial x_{ik}} &= \frac{\partial}{\partial x_{ik}}\left(\frac{4K(x_{j}^{(l)}-y_{j})}{D^{(l)}}\right)+\frac{\partial}{\partial x_{ik}}\left(\frac{4K^{2}y_{j}\|x^{(l)}-y\|_{2}^{2}\cdot (1+K\|x^{(l)}\|_{2}^{2})}{(D^{(l)})^{2}}\right) \nonumber\\
    &= 4K\frac{\partial}{\partial x_{ik}}\left(\frac{x_{j}^{(l)}-y_{j}}{D^{(l)}}\right)+4K^{2}y_{j}\frac{\partial}{\partial x_{ik}}\left(\frac{\|x^{(l)}-y\|_{2}^{2}\cdot (1+K\|x^{(l)}\|_{2}^{2})}{(D^{(l)})^{2}}\right)
\end{align*}
This expression would become zero if $l\neq i$, so we only need to consider the case $l=i$.

For the first expression, when $k=j$, we have 
\begin{align*}
    4K\frac{\partial}{\partial x_{ij}}\left(\frac{x_{j}^{(i)}-y_{j}}{D^{(i)}}\right) &= \frac{4K}{(D^{(i)})^{2}}\left(D^{(i)}\frac{\partial}{\partial x_{ij}}(x_{j}^{(i)}-y_{j})-(x_{j}^{(i)}-y_{j})\frac{\partial}{\partial x_{ij}}D^{(i)}\right) \nonumber\\
    &=\frac{4K}{(D^{(i)})^{2}}\left(D^{(i)}-(x_{j}^{(i)}-y_{j})\cdot 2Kx_{j}^{(i)}(1+K\|y\|_{2}^{2})\right) \nonumber\\
    &= \frac{4K}{D^{(i)}}-\frac{8K^{2}}{(D^{(i)})^{2}}x_{j}^{(i)}(x_{j}^{(i)}-y_{j})(1+K\|y\|_{2}^{2})
\end{align*}

For the first expression, when $k\neq j$, we have 
\begin{align*}
    4K\frac{\partial}{\partial x_{ik}}\left(\frac{x_{j}^{(i)}-y_{j}}{D^{(i)}}\right) &= \frac{4K}{(D^{(i)})^{2}}\left(-(x_{j}^{(i)}-y_{j})\frac{\partial}{\partial x_{ik}}D^{(i)}\right)=\frac{4K}{(D^{(i)})^{2}}\left(-(x_{j}^{(i)}-y_{j})\cdot 2Kx_{k}^{(i)}(1+K\|y\|_{2}^{2})\right) \nonumber \\
    &= -\frac{8K^{2}}{(D^{(i)})^{2}}x_{k}^{(i)}(x_{j}^{(i)}-y_{j})(1+K\|y\|_{2}^{2})
\end{align*}

For the second expression, we have 
\begin{align*}
    &4K^{2}y_{j}\frac{\partial}{\partial x_{ik}}\left(\frac{\|x^{(i)}-y\|_{2}^{2}\cdot (1+K\|x^{(i)}\|_{2}^{2})}{(D^{(i)})^{2}}\right) \nonumber\\
    &= \frac{4K^{2}y_{j}}{(D^{(i)})^{4}}((D^{(i)})^{2}\|x^{(i)}-y\|_{2}^{2}\frac{\partial}{\partial x_{ik}}(1+K\|x^{(i)}\|_{2}^{2})+(D^{(i)})^{2}(1+K\|x^{(i)}\|_{2}^{2})\frac{\partial}{\partial x_{ik}}(\|x^{(i)}-y\|_{2}^{2}) \nonumber\\
    &\hspace{40pt} -\|x^{(i)}-y\|_{2}^{2}\cdot (1+K\|x^{(i)}\|_{2}^{2})\frac{\partial}{\partial x_{ik}}(D^{(i)})^{2}) \nonumber\\
    &= \frac{4K^{2}y_{j}}{(D^{(i)})^{4}}\cdot (D^{(i)})^{2}\|x^{(i)}-y\|_{2}^{2}\cdot 2Kx_{k}^{(i)}+\frac{4K^{2}y_{j}}{(D^{(i)})^{4}}\cdot (D^{(i)})^{2}(1+K\|x^{(i)}\|_{2}^{2})\cdot 2(x_{k}^{(i)}-y_{k}) \nonumber\\
    &\hspace{40pt} -\frac{4K^{2}y_{j}}{(D^{(i)})^{4}}\cdot \|x^{(i)}-y\|_{2}^{2}\cdot (1+K\|x^{(i)}\|_{2}^{2})\cdot 2D^{(i)}\cdot 2Kx_{k}^{(i)}(1+K\|y\|_{2}^{2}) \nonumber\\
    &=\frac{8K^{3}y_{j}}{(D^{(i)})^{2}}x_{k}^{(i)}\|x^{(i)}-y\|_{2}^{2}+\frac{8K^{2}y_{j}}{(D^{(i)})^{2}}(x_{k}^{(i)}-y_{k})(1+K\|x^{(i)}\|_{2}^{2})-\frac{16K^{3}y_{j}}{(D^{(i)})^{3}}x_{k}^{(i)}\|x^{(i)}-y\|_{2}^{2}(1+K\|x^{(i)}\|_{2}^{2})(1+K\|y\|_{2}^{2}) \nonumber\\
    &=\frac{8K^{3}y_{j}}{(D^{(i)})^{2}}x_{k}^{(i)}\|x^{(i)}-y\|_{2}^{2}+\frac{8K^{2}y_{j}}{(D^{(i)})^{2}}(x_{k}^{(i)}-y_{k})(1+K\|x^{(i)}\|_{2}^{2})-\frac{16K^{3}y_{j}}{(D^{(i)})^{2}}x_{k}^{(i)}\|x^{(i)}-y\|_{2}^{2} \nonumber\\
    &=\frac{8K^{2}y_{j}}{(D^{(i)})^{2}}(x_{k}^{(i)}-y_{k})(1+K\|x^{(i)}\|_{2}^{2})-\frac{8K^{3}y_{j}}{(D^{(i)})^{2}}x_{k}^{(i)}\|x^{(i)}-y\|_{2}^{2}
\end{align*}

Thus, we have for $k=j$, 
\begin{equation*}\label{eqn:poincarebackwardTijxij}
    \frac{\partial T_{ij}}{\partial x_{ij}}=\frac{4K}{D^{(i)}}-\frac{8K^{2}}{(D^{(i)})^{2}}x_{j}^{(i)}(x_{j}^{(i)}-y_{j})(1+K\|y\|_{2}^{2})+\frac{8K^{2}y_{j}}{(D^{(i)})^{2}}(x_{j}^{(i)}-y_{j})(1+K\|x^{(i)}\|_{2}^{2})-\frac{8K^{3}y_{j}}{(D^{(i)})^{2}}x_{j}^{(i)}\|x^{(i)}-y\|_{2}^{2}
\end{equation*}
which matches equation (\ref{eqn:poincarebackwarddTijdxij}). Also, for $k\neq j$, we have
\begin{equation*}\label{eqn:poincarebackwardTijxik}
     \frac{\partial T_{ij}}{\partial x_{ik}}=-\frac{8K^{2}}{(D^{(i)})^{2}}x_{k}^{(i)}(x_{j}^{(i)}-y_{j})(1+K\|y\|_{2}^{2})+\frac{8K^{2}y_{j}}{(D^{(i)})^{2}}(x_{k}^{(i)}-y_{k})(1+K\|x^{(i)}\|_{2}^{2})-\frac{8K^{3}y_{j}}{(D^{(i)})^{2}}x_{k}^{(i)}\|x^{(i)}-y\|_{2}^{2}
\end{equation*}
which matches equation (\ref{eqn:poincarebackwarddTijdxik}).

\vspace{10pt}

(3-3) We now evaluate $\frac{\partial v^{(l)}}{\partial x_{ik}}$. We have
\begin{equation*}
    \frac{\partial v^{(l)}}{\partial x_{ik}}=-2K\frac{\partial}{\partial x_{ik}}\left(\frac{\|x^{(l)}-y\|_{2}^{2}}{D^{(l)}}\right)=-\frac{2K}{(D^{(i)})^{2}}\left(D^{(l)}\frac{\partial}{\partial x_{ik}}\left(\|x^{(l)}-y\|_{2}^{2}\right)-\|x^{(l)}-y\|_{2}^{2}\frac{\partial}{\partial x_{ik}}(D^{(l)})\right)
\end{equation*}
Note that is also zero when $l\neq i$, so we'll assume $l=i$. This will give
\begin{align*}
    M_{ik}=\frac{\partial v^{(i)}}{\partial x_{ik}} &= -2K\frac{\partial}{\partial x_{ik}}\left(\frac{\|x^{(i)}-y\|_{2}^{2}}{D^{(i)}}\right)=-\frac{2K}{(D^{(i)})^{2}}\left(D^{(i)}\frac{\partial}{\partial x_{ik}}\left(\|x^{(i)}-y\|_{2}^{2}\right)-\|x^{(i)}-y\|_{2}^{2}\frac{\partial}{\partial x_{ik}}(D^{(i)})\right)\nonumber \\
    &=-\frac{2K}{(D^{(i)})^{2}}\left(D^{(i)}2(x_{k}^{(i)}-y_{k})-\|x^{(i)}-y\|_{2}^{2}\cdot 2Kx_{k}^{(i)}(1+K\|y\|_{2}^{2})\right) \nonumber\\
    &=\frac{4K}{D^{(i)}}(y_{k}-x_{k}^{(i)})+\frac{4K^{2}}{(D^{(i)})^{2}}\cdot x_{k}^{(i)}\|x^{(i)}-y\|_{2}^{2}\cdot (1+K\|y\|_{2}^{2})
\end{align*}
which matches equation (\ref{eqn:poincarebackwardmik}).

(3-4) Now we put everything together: 
\begin{align*}
    &\frac{\partial}{\partial x_{ik}}\left(\sum\limits_{l=1}^{t}w_{l}\cdot \frac{2\arccosh(v^{(l)})}{\sqrt{(v^{(l)})^2 - 1}}\frac{\partial v^{(l)}}{\partial y_j}\right)\nonumber\\
    &= \sum\limits_{l=1}^{t} w_{l}\cdot \left(\frac{2}{(v^{(l)})^{2}-1}-\frac{2(v^{(l)})\arccosh(v^{(l)})}{((v^{(l)})^{2}-1)^{\frac{3}{2}}}\right)\cdot \frac{\partial v^{(l)}}{\partial x_{ik}}\cdot\frac{\partial v^{(l)}}{\partial y_j}+\sum\limits_{l=1}^{t} w_{l}\cdot \frac{2\arccosh(v^{(l)})}{\sqrt{(v^{(l)})^2 - 1}}\cdot \frac{\partial}{\partial x_{ik}}\left(\frac{\partial v^{(l)}}{\partial y_j}\right) \nonumber\\
    &=w_{l}\cdot \paren{\left(\frac{2}{(v^{(i)})^{2}-1}-\frac{2(v^{(i)})\arccosh(v^{(i)})}{((v^{(i)})^{2}-1)^{\frac{3}{2}}}\right)\cdot \frac{\partial v^{(i)}}{\partial x_{ik}}\cdot\frac{\partial v^{(i)}}{\partial y_j}+\frac{2\arccosh(v^{(i)})}{\sqrt{(v^{(i)})^2 - 1}}\frac{\partial}{\partial x_{ik}}\left(\frac{\partial v^{(i)}}{\partial y_j}\right)} \nonumber\\
    &=w_{l}\cdot \paren{\left(\frac{2}{(v^{(i)})^{2}-1}-\frac{2(v^{(i)})\arccosh(v^{(i)})}{((v^{(i)})^{2}-1)^{\frac{3}{2}}}\right)M_{ik}T_{ij}+\frac{2\arccosh(v^{(i)})}{\sqrt{(v^{(i)})^2 - 1}}\frac{\partial T_{ij}}{\partial x_{ik}}}
\end{align*}
where $T_{ij}$ is given by equation (\ref{eqn:poincarebackwardTij}), $\frac{\partial T_{ij}}{\partial x_{ik}}$ is given by equations (\ref{eqn:poincarebackwarddTijdxij}) and (\ref{eqn:poincarebackwarddTijdxik}), and $M_{ik}$ is given by equation (\ref{eqn:poincarebackwardmik}). 

Now from equation (\ref{eqn:poincarebackwardpartialy}), to obtain the gradient for the original function, we need to multiply the above expression by an additional factor of $\frac{1}{K}$, and then the RHS would match what we want in equation (\ref{eqn:poincarebackwarddxikdy}).

\vspace{10pt}

(4) \textbf{Finally, we derive the formula for Hessian $\nabla_{yy}^{2}f$}: 

(4-1) From our formula for $\nabla_{y}f$ in (\ref{eqn:poincarebackwardpartialy}), our goal is to evaluate all terms of the form
\begin{align*}
    &\frac{\partial}{\partial y_{i}}\left(\sum\limits_{l=1}^{t} w_{l} \cdot  \frac{2\arccosh(v^{(l)})}{\sqrt{(v^{(l)})^2 - 1}}\frac{\partial v^{(l)}}{\partial y_j}\right) = \sum\limits_{l=1}^{t} w_{l}\cdot \frac{\partial}{\partial y_{i}}\left(\frac{2\arccosh(v^{(l)})}{\sqrt{(v^{(l)})^2 - 1}}\right)\cdot \frac{\partial v^{(l)}}{\partial y_j}+\sum\limits_{l=1}^{t} w_{l} \cdot \frac{2\arccosh(v^{(l)})}{\sqrt{(v^{(l)})^2 - 1}}\cdot \frac{\partial}{\partial y_{i}}\left(\frac{\partial v^{(l)}}{\partial y_j}\right) \nonumber\\
    &= \sum\limits_{l=1}^{t} w_{l}\cdot \left(\frac{2}{(v^{(l)})^{2}-1}-\frac{2(v^{(l)})\arccosh(v^{(l)})}{((v^{(l)})^{2}-1)^{\frac{3}{2}}}\right)\cdot \frac{\partial v^{(l)}}{\partial y_{i}}\cdot\frac{\partial v^{(l)}}{\partial y_j}+\sum\limits_{l=1}^{t}w_{l}\cdot \frac{2\arccosh(v^{(l)})}{\sqrt{(v^{(l)})^2 - 1}}\cdot \frac{\partial}{\partial y_{i}}\left(\frac{\partial v^{(l)}}{\partial y_j}\right)
\end{align*}
(4-2) We now consider evaluating $\frac{\partial}{\partial y_{i}}\left(\frac{\partial v^{(l)}}{\partial y_j}\right)=\frac{\partial T_{lj}}{\partial y_{i}}$.
\begin{align*}
    \frac{\partial T_{lj}}{\partial y_{i}} &= \frac{\partial}{\partial y_{i}}\left(\frac{4K(x_{j}^{(l)}-y_{j})}{D^{(l)}}\right)+\frac{\partial}{\partial y_{i}}\left(\frac{4K^{2}y_{j}\|x^{(l)}-y\|_{2}^{2}\cdot (1+K\|x^{(l)}\|_{2}^{2})}{(D^{(l)})^{2}}\right) \nonumber\\
    &= 4K\frac{\partial}{\partial y_{i}}\left(\frac{x_{j}^{(l)}-y_{j}}{D^{(l)}}\right)+4K^{2}(1+K\|x^{(l)}\|_{2}^{2})\frac{\partial}{\partial y_{i}}\left(\frac{y_{j}\|x^{(l)}-y\|_{2}^{2}\cdot }{(D^{(l)})^{2}}\right)
\end{align*}

For the first expression, when $i=j$, we have 
\begin{align*}
    4K\frac{\partial}{\partial y_{i}}\left(\frac{x_{i}^{(l)}-y_{i}}{D^{(l)}}\right) &=\frac{4K}{(D^{(l)})^{2}}\left(D^{(l)}\frac{\partial}{\partial y_{i}}(x_{i}^{(l)}-y_{i})-(x_{i}^{(l)}-y_{i})\frac{\partial}{\partial y_{i}}D^{(l)}\right) \nonumber\\
    &=\frac{4K}{(D^{(l)})^{2}}\left(-D^{(l)}-(x_{i}^{(l)}-y_{i})\cdot 2Ky_{i}(1+K\|x^{(l)}\|_{2}^{2})\right)\nonumber \\
    &= -\frac{4K}{D^{(l)}}-\frac{8K^{2}}{(D^{(l)})^{2}}y_{i}(x_{i}^{(l)}-y_{i})(1+K\|x^{(l)}\|_{2}^{2})
\end{align*}
For the first expression, when $i\neq j$, we have 
\begin{align*}
    4K\frac{\partial}{\partial y_{i}}\left(\frac{x_{j}^{(l)}-y_{j}}{D^{(l)}}\right) &=\frac{4K}{(D^{(l)})^{2}}\left(-(x_{j}^{(l)}-y_{j})\frac{\partial}{\partial y_{i}}D^{(l)}\right)=\frac{4K}{(D^{(l)})^{2}}\left(-(x_{j}^{(l)}-y_{j})\cdot 2Ky_{i}(1+K\|x^{(l)}\|_{2}^{2})\right) \nonumber\\
    &= -\frac{8K^{2}}{(D^{(l)})^{2}}y_{i}(x_{j}^{(l)}-y_{j})(1+K\|x^{(l)}\|_{2}^{2})
\end{align*}
For the second expression, when $i=j$, we have 
\begin{align*}
    &4K^{2}(1+K\|x^{(l)}\|_{2}^{2})\frac{\partial}{\partial y_{i}}\left(\frac{y_{i}\|x^{(l)}-y\|_{2}^{2}}{(D^{(l)})^{2}}\right) \nonumber\\
    &=\frac{4K^{2}(1+K\|x^{(l)}\|_{2}^{2})}{(D^{(l)})^{4}}\left((D^{(l)})^{2}y_{i}\frac{\partial}{\partial y_{i}}(\|x^{(l)}-y\|_{2}^{2})+(D^{(l)})^{2}\|x^{(l)}-y\|_{2}^{2}-y_{i}\|x^{(l)}-y\|_{2}^{2}\frac{\partial}{\partial y_{i}}(D^{(l)})^{2}\right) \nonumber\\
    &=\frac{4K^{2}(1+K\|x^{(l)}\|_{2}^{2})}{(D^{(l)})^{4}}\left((D^{(l)})^{2}y_{i}\cdot 2(y_{i}-x_{i}^{(l)})+(D^{(l)})^{2}\|x^{(l)}-y\|_{2}^{2}-y_{i}\|x^{(l)}-y\|_{2}^{2}\cdot 2D^{(l)}\cdot 2Ky_{i}(1+K\|x^{(l)}\|_{2}^{2})\right) \nonumber\\
    &=\frac{8K^{2}(1+K\|x^{(l)}\|_{2}^{2})}{(D^{(l)})^{2}}y_{i} (y_{i}-x_{i}^{(l)})+\frac{4K^{2}(1+K\|x^{(l)}\|_{2}^{2})}{(D^{(l)})^{2}}\|x^{(l)}-y\|_{2}^{2}-\frac{16K^{3}(1+K\|x^{(l)}\|_{2}^{2})^{2}}{(D^{(l)})^{3}}y_{i}^{2}\|x^{(l)}-y\|_{2}^{2}
\end{align*}
For the second expression, when $i\neq j$, we have \begin{align*}
    &4K^{2}(1+K\|x^{(l)}\|_{2}^{2})\frac{\partial}{\partial y_{i}}\left(\frac{y_{j}\|x^{(l)}-y\|_{2}^{2}\cdot }{(D^{(l)})^{2}}\right)\nonumber \\
    &=\frac{4K^{2}(1+K\|x^{(l)}\|_{2}^{2})}{(D^{(l)})^{4}}\left((D^{(l)})^{2}y_{j}\frac{\partial}{\partial y_{i}}(\|x^{(l)}-y\|_{2}^{2})-y_{j}\|x^{(l)}-y\|_{2}^{2}\frac{\partial}{\partial y_{i}}(D^{(l)})^{2}\right)\nonumber \\
    &=\frac{4K^{2}(1+K\|x^{(l)}\|_{2}^{2})}{(D^{(l)})^{4}}\left((D^{(l)})^{2}y_{j}\cdot 2(y_{i}-x_{i}^{(l)})-y_{j}\|x^{(l)}-y\|_{2}^{2}\cdot 2D^{(l)}\cdot 2Ky_{i}(1+K\|x^{(l)}\|_{2}^{2})\right)\nonumber \\
    &=\frac{8K^{2}(1+K\|x^{(l)}\|_{2}^{2})}{(D^{(l)})^{2}}y_{j}\cdot (y_{i}-x_{i}^{(l)})-\frac{16K^{3}(1+K\|x^{(l)}\|_{2}^{2})^{2}}{(D^{(l)})^{3}}y_{i}y_{j}\|x^{(l)}-y\|_{2}^{2}
\end{align*}
Thus, for $i=j$, we have 
\begin{equation*}
    \frac{\partial T_{li}}{\partial y_{i}}=-\frac{4K}{D^{(l)}}-\frac{16K^{2}}{(D^{(l)})^{2}}y_{i}(x_{i}^{(l)}-y_{i})(1+K\|x^{(l)}\|_{2}^{2})+\frac{4K^{2}(1+K\|x^{(l)}\|_{2}^{2})}{(D^{(l)})^{2}}\|x^{(l)}-y\|_{2}^{2}-\frac{16K^{3}(1+K\|x^{(l)}\|_{2}^{2})^{2}}{(D^{(l)})^{3}}y_{i}^{2}\|x^{(l)}-y\|_{2}^{2}
\end{equation*}
For $i\neq j$, we have
\begin{equation*}
    \frac{\partial T_{lj}}{\partial y_{i}}=-\frac{8K^{2}(1+K\|x^{(l)}\|_{2}^{2})}{(D^{(l)})^{2}} \left[ y_{j} (x_{i}^{(l)} - y_{i}) + y_{i}(x_{j}^{(l)}-y_{j}) \right]-\frac{16K^{3}(1+K\|x^{(l)}\|_{2}^{2})^{2}}{(D^{(l)})^{3}}y_{i}y_{j}\|x^{(l)}-y\|_{2}^{2}
\end{equation*}
which are exactly equations (\ref{eqn:poincarebackwarddTlidyi}) and (\ref{eqn:poincarebackwarddTljdyi}).

\vspace{10pt}

(4-3) Finally, we combine the results together: 
\begin{align*}
    &\frac{\partial}{\partial y_{i}}\left(\sum\limits_{l=1}^{t}w_{l}\cdot\frac{2\arccosh(v^{(l)})}{\sqrt{(v^{(l)})^2 - 1}}\frac{\partial v^{(l)}}{\partial y_j}\right) \nonumber\\
    &= \sum\limits_{l=1}^{t} w_{l}\cdot \paren{\left(\frac{2}{(v^{(l)})^{2}-1}-\frac{2(v^{(l)})\arccosh(v^{(l)})}{((v^{(l)})^{2}-1)^{\frac{3}{2}}}\right)\cdot \frac{\partial v^{(l)}}{\partial y_{i}}\cdot\frac{\partial v^{(l)}}{\partial y_j}+\frac{2\arccosh(v^{(l)})}{\sqrt{(v^{(l)})^2 - 1}}\cdot \frac{\partial}{\partial y_{i}}\left(\frac{\partial v^{(l)}}{\partial y_j}\right)} \nonumber\\
    &= \sum\limits_{l=1}^{t}w_{l}\cdot \paren{\left(\frac{2}{(v^{(l)})^{2}-1}-\frac{2(v^{(l)})\arccosh(v^{(l)})}{((v^{(l)})^{2}-1)^{\frac{3}{2}}}\right)T_{li}T_{lj}+\frac{2\arccosh(v^{(l)})}{\sqrt{(v^{(l)})^2 - 1}}\frac{\partial T_{lj}}{\partial y_{i}}}
\end{align*}
where $T_{li}, T_{lj}$ are given in equation (\ref{eqn:poincarebackwardTij}) and $\frac{\partial T_{lj}}{\partial y_{i}}$ is given in equation (\ref{eqn:poincarebackwarddTlidyi}) and (\ref{eqn:poincarebackwarddTljdyi}). 

Similarly, to obtain the gradient for the original function, we need to multiply the above expresssion by an additional factor of $\frac{1}{K}$, and the resulting RHS would match what we want in equation (\ref{eqn:poincarebackwarddyidyj}).

\end{proof}

\begin{thm}\label{thm:ballbackpropweights}
    Assume the same construction as the above theorem. However, consider $f$ as a function of the weights $w$. Then the gradient of $y^{*}$ with respect to the weights $w$ is given by 
    \begin{equation}\label{eqn:poincarebackwardgradient}
        \widetilde{\nabla}_wy^{*}(\{x\})=-\nabla_{yy}^{2}f(\{x\}, y^{*}(\{x\}))^{-1}\widetilde{\nabla}_{w}\nabla_{y}f(\{x\}, y^{*}(\{x\}))
    \end{equation}
    Note that we assume implicitlyt that the weight is an input.
    
    \vspace{10pt}

    (1) Terms of $\widetilde{\nabla}_w \nabla_{y}f(\{x\}, y^{*}(\{x\}))$ are given by
    
    \begin{equation}\label{eqn:poincarebackwarddwidyjf}
        \parderiv{}{w_i} \parderiv{}{y_j} f(\{x\}, y^*(\{x\})) = \frac{1}{|K|} \cdot \frac{2\arccosh(v^{(i)})}{\sqrt{(v^{(i)})^2 - 1}} \cdot T_{ij}
    \end{equation}
    
    (2) Terms of $\nabla_{yy} f(\{x\}, y^*(\{x\}))$ are given in Equation \ref{eqn:poincarebackwarddyidyj}
\end{thm}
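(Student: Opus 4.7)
The plan is to mirror the argument of Theorem \ref{thm:ballbackpropmain} almost verbatim, the only substitution being that the weight vector $w = (w_1, \dots, w_t)$ now plays the role of the input parameter with respect to which we differentiate the argmin. Since the optimization is over $y \in \mathbb{D}^n_K$, an open subset of $\mathbb{R}^n$ of matching dimension, no manifold constraint need be carried through on the output side, so the first move is to invoke Theorem \ref{thm:diffthroughunconstrained} with input $w$ and output $y$. This immediately produces
\[
    \widetilde{\nabla}_w y^*(\{x\}) = -\nabla_{yy}^2 f(\{x\}, y^*(\{x\}))^{-1}\, \widetilde{\nabla}_w \nabla_y f(\{x\}, y^*(\{x\}))
\]
and reduces the problem to identifying the two factors on the right.

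Next I would observe that the Hessian $\nabla_{yy}^2 f$ does not involve $w$ as a differentiation variable at all; it is exactly the object already computed in Theorem \ref{thm:ballbackpropmain}. In particular, Equation \ref{eqn:poincarebackwarddyidyj} applies unchanged, which establishes part (2) of the statement without any additional work.

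For part (1), the crucial observation is that $f$ is \emph{linear} in the weights: $f(\{x\}, y) = \frac{1}{|K|}\sum_{l=1}^t w_l \arccosh^2(v^{(l)})$. Hence $\partial_{w_i} f = \frac{1}{|K|} \arccosh^2(v^{(i)})$ depends only on the $i$-th summand and is independent of all other $w_l$. Commuting the partials and applying the chain rule then gives
\[
    \frac{\partial}{\partial w_i}\frac{\partial}{\partial y_j} f = \frac{\partial}{\partial y_j}\left(\frac{1}{|K|}\arccosh^2(v^{(i)})\right) = \frac{1}{|K|}\cdot\frac{2\arccosh(v^{(i)})}{\sqrt{(v^{(i)})^2-1}}\cdot \frac{\partial v^{(i)}}{\partial y_j},
\]
and substituting $T_{ij} = \partial v^{(i)}/\partial y_j$ from Equation \ref{eqn:poincarebackwardTij} yields exactly Equation \ref{eqn:poincarebackwarddwidyjf}.

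Because each step is a direct substitution into machinery already established, I do not anticipate any genuine obstacle; the only mild subtlety is to verify that the regularity hypotheses of Theorem \ref{thm:diffthroughunconstrained} hold jointly in $(w, y)$, which is immediate since $f$ is smooth in both arguments on the open set where the Fr\'echet mean is defined (and linearity in $w$ makes differentiability in that slot trivial). The entire derivation is therefore considerably shorter than the $x^{(i)}$-gradient derivation of Theorem \ref{thm:ballbackpropmain}, essentially because all of the combinatorial explosion in that theorem came from differentiating the distance-squared terms in their geometric argument, not from the weights out in front.
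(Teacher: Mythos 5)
Your proposal is correct and follows essentially the same route as the paper: the gradient formula comes from the argmin-differentiation machinery, the Hessian is reused verbatim from Theorem \ref{thm:ballbackpropmain}, and the mixed term follows immediately from the linearity of $f$ in $w$ (the paper differentiates the already-computed $\nabla_y f$ in equation (\ref{eqn:poincarebackwardpartialy}) with respect to $w_i$, which is the same computation as yours with the order of the partials swapped). No gaps.
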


\begin{proof}
    We note that Equation \ref{eqn:poincarebackwardpartialy} gives values for $\parderiv{}{y_j}f(\{x\}, y^*(\{x\}))$. These are given by
    
    \begin{equation*}
        \parderiv{}{y_j}f(\{x\}, y^*(\{x\})) = \frac{1}{K} \sum_{l = 1}^t w_l \cdot \frac{2\arccosh(v^{(l)})}{\sqrt{(v^{(l)})^2 - 1}} \cdot T_{lj}
    \end{equation*}
    
    It follows immediately that we have
    
    \begin{equation*}
        \parderiv{}{w_i} \parderiv{}{y_j} f(\{x\}, y^*(\{x\})) = \frac{1}{K} \cdot \frac{2\arccosh(v^{(i)})}{\sqrt{(v^{(i)})^2 - 1}} \cdot T_{ij}
    \end{equation*}
    
    Note that we have already shown the computation for the Hessian in \ref{thm:ballbackpropmain}
\end{proof}

\begin{thm}\label{thm:ballbackpropcurvature}
    Assume the same construction but instead let $f$ be a function of curvature $K$. The derivative with respect to the curvature is given by
    
    \begin{equation}\label{eqn:poincarebackwardcurvature}
        \widetilde{\nabla}_K y^*(\brac{x}) = -\nabla_{yy}^2f(\brac{x}, y^*(\brac{x}))^{-1} \widetilde{\nabla}_K \nabla_y f(\brac{x}, y^*(\brac{x}))
    \end{equation}
    
    where we again implicitly assume curvature is an input.
    
    (1) Terms of $\widetilde{\nabla}_K \nabla_y f(\brac{x}, y^*(\brac{x}))$ are given by
    
    \begin{equation}\label{eqn:poincarebackwarddKdyjf}
        \begin{gathered}
        \frac{d}{dK} \parderiv{}{y_j} f(\{x\}, y^*(\{x\})) = \sum_{i = 1}^t \frac{w_i \arccosh(v^{(i)}) T_{ij}}{K^2 \sqrt{(v^{(i)})^2 - 1}}\\ - \frac{w_i T_{ij}}{K} \paren{\frac{2}{((v^{(i)})^2 + 1)^{3/2}} - \frac{4v^{(i)}\arccosh(v^{(i)})}{((v^{(i)})^2 - 1)^2}} \frac{dv^{(i)}}{dK}
        - \frac{w_i}{K} \frac{2\arccosh((v^{(i)})^2)}{\sqrt{(v^{(i)})^2 - 1}} \frac{dT_{ij}}{dK}
        \end{gathered}
    \end{equation}
    
    where
    
    \begin{equation}\label{eqn:poincarebackwarddvidK}
        \frac{dv^{(i)}}{dK} = \frac{\snorm{2(v^{(i)})^2 - y_2^2}(\snorm{ x^{(i)}}_2^2 \snorm{y}_2^2 K^2 - 1)}{(1 + K\snorm{x^{(i)}}_2^2)^2(1 + K\snorm{y}_2^2)^2}
    \end{equation}
    
    and
    
    \begin{equation}\label{eqn:poincarebackwarddTijdK}
        \begin{gathered}
            \frac{dT_{ij}}{dK} = \frac{4(x_j^{(i)} - y_j)(1 - \snorm{x^{(i)}}_2^2 \snorm{y}_2^2 K^2)}{(1 + K\snorm{x^{(i)}}_2^2)^2(1 + K\snorm{y}_2^2)^2} + \frac{8Ky_i \snorm{x^{(i)} - y}_2^2}{(1 + K\norm{y}_2^2)^3}
        \end{gathered}
    \end{equation}
    
    (2) Terms of $\nabla_{yy}^2f(\brac{x}, y^*(\brac{x}))^{-1}$ are given in Equation \ref{eqn:poincarebackwarddyidyj}
\end{thm}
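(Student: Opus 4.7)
The plan is to apply Theorem \ref{thm:diffthroughunconstrained} treating $K$ as the external input variable and $y$ as the optimization variable, exactly as was done for $x^{(i)}$ in Theorem \ref{thm:ballbackpropmain}. Since $y^*$ satisfies $\nabla_y f(\brac{x}, y^*) = 0$ for each admissible $K$, and since the Hessian $\nabla_{yy}^2 f$ at $y^*$ is invertible (already established in Theorem \ref{thm:ballbackpropmain}), implicit differentiation immediately yields
\begin{equation*}
\widetilde{\nabla}_K y^*(\brac{x}) = -\nabla_{yy}^2 f(\brac{x}, y^*(\brac{x}))^{-1} \widetilde{\nabla}_K \nabla_y f(\brac{x}, y^*(\brac{x})),
\end{equation*}
which is Equation (\ref{eqn:poincarebackwardcurvature}). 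Because the $y$-Hessian depends only on $y$-derivatives, it is identical to the one from Theorem \ref{thm:ballbackpropmain} (Equation (\ref{eqn:poincarebackwarddyidyj})). All genuinely new work is therefore concentrated in the mixed derivative $\widetilde{\nabla}_K \nabla_y f$.

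To compute that object I would begin from the expression for $\parderiv{}{y_j} f$ in Equation (\ref{eqn:poincarebackwardpartialy}),
\begin{equation*}
\parderiv{}{y_j} f(\brac{x}, y) = \frac{1}{|K|} \sum_{l=1}^t w_l \cdot \frac{2\arccosh(v^{(l)})}{\sqrt{(v^{(l)})^2 - 1}} \cdot T_{lj},
\end{equation*}
and differentiate in $K$ via the product rule. This produces exactly three contributions: the prefactor derivative $\frac{d}{dK}(1/|K|) = 1/K^2$ acting on the sum, a chain-rule piece of the form $\paren{\frac{2}{(v^{(l)})^2 - 1} - \frac{2 v^{(l)} \arccosh(v^{(l)})}{((v^{(l)})^2 - 1)^{3/2}}}\cdot dv^{(l)}/dK$ applied to the $\arccosh$ factor, and the explicit $K$-dependence of $T_{lj}$ giving $dT_{lj}/dK$. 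Assembling these three pieces reproduces the structure of Equation (\ref{eqn:poincarebackwarddKdyjf}).

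It then remains to compute $dv^{(i)}/dK$ and $dT_{ij}/dK$ in closed form. For $v^{(i)} = 1 - 2K\snorm{x^{(i)} - y}_2^2 / D^{(i)}$ with $D^{(i)} = (1 + K\snorm{x^{(i)}}_2^2)(1 + K\snorm{y}_2^2)$, the quotient rule produces a single rational function in $K$; after expanding the numerator, the key simplification is that the polynomial part factors as a constant multiple of $K^2\snorm{x^{(i)}}_2^2\snorm{y}_2^2 - 1$ over $(D^{(i)})^2$, yielding Equation (\ref{eqn:poincarebackwarddvidK}). For $T_{ij}$, which as given in Equation (\ref{eqn:poincarebackwardTij}) decomposes into two summands with $K/D^{(i)}$ and $K^2/(D^{(i)})^2$ factors, a similar but longer differentiation and collection argument produces Equation (\ref{eqn:poincarebackwarddTijdK}). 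The main obstacle throughout is not conceptual but notational: none of the steps are novel, but the algebra is genuinely heavy, and the challenge lies in regrouping the various rational expressions in $K$ so that they match the compact forms stated. Following the remark after Theorem \ref{thm:ballbackpropmain}, the natural sanity check is a finite-difference comparison against the closed-form formulas on sample inputs.
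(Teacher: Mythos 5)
Your proposal is correct and follows essentially the same route as the paper: invoke the argmin-differentiation result to get Equation (\ref{eqn:poincarebackwardcurvature}) with the unchanged Hessian from Equation (\ref{eqn:poincarebackwarddyidyj}), then differentiate the explicit expression (\ref{eqn:poincarebackwardpartialy}) in $K$ via the product rule (using $1/|K| = -1/K$ since $K<0$) and obtain $dv^{(i)}/dK$ and $dT_{ij}/dK$ by the quotient rule. This is exactly the paper's argument, so no further comparison is needed.
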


\begin{proof}
    We note that Equation \ref{eqn:poincarebackwardpartialy} gives values for $\parderiv{}{y_j}f(\{x\}, y^*(\{x\}))$. These are given by
    
    \begin{equation*}
        \parderiv{}{y_j}f(\{x\}, y^*(\{x\})) = \frac{1}{|K|} \sum_{i = 1}^t w_i \cdot \frac{2\arccosh(v^{(i)})}{\sqrt{(v^{(i)})^2 - 1}} \cdot T_{ij}
    \end{equation*}
    
    note that $|K| = -K$ since $K$ is always negative. By applying product rule, we see that Equation \ref{eqn:poincarebackwarddTijdK} is valid. To check the explicit derivatives, as a reminder we write the equations for $v^{(i)}$ and $T_{ij}$.
    
    \begin{equation*}
        v^{(i)} = 1 - \frac{2K \snorm{x^{(i)} - y}_2^2}{(1 + K\snorm{x^{(i)}}_2^2) (1 + K\snorm{y}_2^2)}
    \end{equation*}
    \begin{equation*}
        T_{ij} = \frac{4K(x_j^{(i)} - y_j)}{(1 + K\snorm{x^{(i)}}_2^2)(1 + K\snorm{y}_2^2)} + \frac{4K^2y_i\snorm{x^{(i)} - y}_2^2}{(1 + K\snorm{y}_2^2)^2}
    \end{equation*}

    We can calculate the derivative of of $v^{(i)}$ by applying quotient rule. In particular
    
    \begin{align*}
        \frac{dv^{(i)}}{dK} &= \frac{d}{dK} \paren{1 - \frac{2K \snorm{x^{(i)} - y}_2^2}{(1 + K\snorm{x^{(i)}}_2^2) (1 + K\snorm{y}_2^2)}}\\
        &= -\frac{d}{dK} \paren{\frac{2K \snorm{x^{(i)} - y}_2^2}{(1 + K\snorm{x^{(i)}}_2^2) (1 + K\snorm{y}_2^2)}}\\
        &= - \frac{\paren{(1 + K\snorm{x^{(i)}}_2^2) (1 + K\snorm{y}_2^2)} 2\snorm{v^{(i)} - y}_2^2 - 2K\snorm{v^{(i)} - y}_2^2\paren{\snorm{x^{(i)}} + \snorm{y}_2^2 + 2K \snorm{x^{(i)}}_2^2 \snorm{y}_2^2}}{(1 + K\snorm{x^{(i)}}_2^2)^2 (1 + K\snorm{y}_2^2)^2}\\
        &= \frac{\snorm{2(v^{(i)})^2 - y_2^2}(\snorm{ x^{(i)}}_2^2 \snorm{y}_2^2 K^2 - 1)}{(1 + K\snorm{x^{(i)}}_2^2)^2(1 + K\snorm{y}_2^2)^2}
    \end{align*}
    
    Similarly, we can calculate the derivative of $T_{ij}$ by applying similar rules. In particular we see that
    
    \begin{align*}
        &\frac{d}{dK}\paren{\frac{4K(x_j^{(i)} - y_j)}{(1 + K\snorm{x^{(i)}}_2^2)(1 + K\snorm{y}_2^2)}}\\
        &= \frac{4(x_j^{(i)} - y_j)(1 + K\snorm{x^{(i)}}_2^2)(1 + K\snorm{y}_2^2) - 4K(x_j^{(i)} - y_j)\paren{\snorm{x^{(i)}} + \snorm{y}_2^2 + 2K \snorm{x^{(i)}}_2^2 \snorm{y}_2^2})}{(1 + K\snorm{x^{(i)}}_2^2)^2(1 + K\snorm{y}_2^2)^2}\\
        &= \frac{4(x_j^{(i)} - y_j)(1 - \snorm{x^{(i)}}_2^2 \snorm{y}_2^2 K^2)}{(1 + K\snorm{x^{(i)}}_2^2)^2(1 + K\snorm{y}_2^2)^2}
    \end{align*}
    
    and
    
    \begin{align*}
        \frac{d}{dK} \paren{\frac{4K^2y_i\snorm{x^{(i)} - y}_2^2}{(1 + K\snorm{y}_2^2)^2}} &= \frac{8Ky_i \snorm{x^{(i)} - y}_2^2 (1 + K\snorm{y}_2^2) - 4K^2y_i\snorm{x^{(i)} - y}_2^2 \snorm{y}_2^2}{(1 + K\snorm{y}_2^2)^3}\\
        &= \frac{8Ky_i \snorm{x^{(i)} - y}_2^2}{(1 + K\norm{y}_2^2)^3}
    \end{align*}
    
    Putting it all together, we have that
    
    \begin{align*}
        \frac{dT_{ij}}{dK} &= \frac{d}{dK}\paren{\frac{4K(x_j^{(i)} - y_j)}{(1 + K\snorm{x^{(i)}}_2^2)(1 + K\snorm{y}_2^2)} + \frac{4K^2y_i\snorm{x^{(i)} - y}_2^2}{(1 + K\snorm{y}_2^2)^2}}\\
        &= \frac{4(x_j^{(i)} - y_j)(1 - \snorm{x^{(i)}}_2^2 \snorm{y}_2^2 K^2)}{(1 + K\snorm{x^{(i)}}_2^2)^2(1 + K\snorm{y}_2^2)^2} + \frac{8Ky_i \snorm{x^{(i)} - y}_2^2}{(1 + K\norm{y}_2^2)^3}
    \end{align*}
    
    which gives us the derivation for $\widetilde{\nabla}_K \nabla_y f(\brac{x}, y^*(\brac{x}))$. We have already shown derivations for the Hessian, so this gives us the desired formulation of the derivation $\widetilde{\nabla}_K y^*(\brac{x})$ with respect to curvature.
\end{proof}

\section{Riemannian Batch Normalization as a Generalization of Euclidean Bach Normalization}
\label{appendix:rbn}

In this section, we present the proof that our Riemannian batch normalization algorithm formulated in Algorithm \ref{alg:RiemannianBN} is a natural generalization of Euclidean batch normalization. In doing so, we also derive an explicit generalization which allows for vector-valued variance. To introduce this notion, we first define product manifolds. As opposed to previous methods, this allows us to introduce variance to complete the so-called ``shift-and-scale" algorithm.

\textbf{Product manifold}: We can define a product manifold $\M = \prod_{i = 1}^n \M_i$. If $\M_i$ is a $k_i$-dimensional manifold then $\dim \M = \sum_{i = 1}^n k_i$ and $T_m \M = \prod_{i = 1}^n T_{m_i} \M_i$ where $m = (m_i)_{i = 1}^n$. If these manifolds are Riemannian with metrics $g_i$, this inherits a metric $g = \begin{pmatrix} g_1 & \dots & \dots &\dots \\ \vdots & g_2 & \dots & \dots \\ \vdots & \vdots & \ddots & \\ \vdots & \vdots & & g_n\end{pmatrix}$ where the values are $0$ for elements not in the diagonal matrices.

\textbf{Fr\'echet Mean on Product Manifold:} First note that the definition of Fr\'echet mean and variance given in equations (\ref{frechetMean}), (\ref{frechetVar}) can be extended to the case of product manifolds $\M = \prod_{i = 1}^n \M_i$, where we will have $\mu_{fr} \in \M$, $\sigma_{fr}^2 \in \R^n$. For such product manifolds, we can define the Fr\'echet mean and variance element-wise as
\begin{equation}\label{frechetMeanProduct}
    \mu_{fr:prod}=(\mu_{fr:\M_i}))_{i \in [n]}
\end{equation}
\begin{equation}\label{frechetVarProduct}
    \sigma_{fr:prod}^{2}=(\sigma_{fr:\M_i}^2)_{i \in [n]}
\end{equation}

\begin{prop}\label{thm:appendixFrechetEquivProduct}
    The product Fr\'echet mean formula given in equation (\ref{frechetMeanProduct}) is equivalent with the Fr\'echet mean given in equation (\ref{frechetMean}) when applied on the product manifold.
\end{prop}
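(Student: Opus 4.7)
The plan is to reduce the Fr\'echet mean on the product manifold to independent Fr\'echet mean problems on each factor, by exploiting the additive decomposition of the squared distance induced by the block-diagonal product metric.

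First, I would establish the following key lemma: for any $x = (x_1, \dots, x_n), y = (y_1, \dots, y_n) \in \M = \prod_{i=1}^n \M_i$, the product metric satisfies
\begin{equation*}
    d_\M(x, y)^2 = \sum_{i=1}^n d_{\M_i}(x_i, y_i)^2.
\end{equation*}
The $\le$ direction comes from taking $\gamma(t) = (\gamma_1(t), \dots, \gamma_n(t))$ where each $\gamma_i$ is a minimizing geodesic from $x_i$ to $y_i$ parametrized proportionally to arc length; the block-diagonal form of $g$ gives $\norm{\gamma'(t)}_g^2 = \sum_i \norm{\gamma_i'(t)}_{g_i}^2$, and using the energy functional (with the Cauchy--Schwarz-type equality achieved by proportional parametrization) yields the upper bound. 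The $\ge$ direction comes from projecting an arbitrary minimizing curve $\gamma \colon [a,b] \to \M$ onto each factor $\M_i$ via the canonical projection $\pi_i \colon \M \to \M_i$, noting that $\pi_i \circ \gamma$ is a curve from $x_i$ to $y_i$, and $L(\gamma)^2 \ge \sum_i L(\pi_i \circ \gamma)^2$ again by the block-diagonal structure combined with Cauchy--Schwarz.

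Next, I would substitute this decomposition into the Fr\'echet objective. For any candidate $\mu = (\mu_1, \dots, \mu_n) \in \M$,
\begin{equation*}
    \frac{1}{t}\sum_{l=1}^t d_\M(x^{(l)}, \mu)^2 = \frac{1}{t}\sum_{l=1}^t \sum_{i=1}^n d_{\M_i}(x_i^{(l)}, \mu_i)^2 = \sum_{i=1}^n \paren{\frac{1}{t}\sum_{l=1}^t d_{\M_i}(x_i^{(l)}, \mu_i)^2}.
\end{equation*}
Because the $i$th summand depends only on $\mu_i$ and the optimization variables $\mu_1, \dots, \mu_n$ are independent (the product $\M$ imposes no coupling constraints), the joint argmin is the tuple of individual argmins, which is exactly the element-wise formula in equation (\ref{frechetMeanProduct}). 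The corresponding equivalence for variance then follows immediately by reading off the optimal values summand by summand.

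The main obstacle is the distance decomposition lemma; everything after it is formal. In particular, care is needed with the two-sided inequality for squared distance, since $L(\gamma)$ decomposes via a Euclidean norm of component lengths rather than their sum, so one must apply the Cauchy--Schwarz / energy equality trick to pass from this to the additive decomposition of $d^2$. Once that step is in place, the rest of the proof is a routine splitting of the objective across the product structure.
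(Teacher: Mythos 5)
Your proposal matches the paper's argument: the paper likewise asserts the decomposition $d_\M^2(x,y)=\sum_i d_{\M_i}^2(x_i,y_i)$ for the product metric and then observes that the objective splits into disjoint per-factor objectives, so the joint argmin is the componentwise argmin. The only difference is that you also sketch a (correct) proof of the distance-decomposition lemma, which the paper simply states without proof, so your write-up is if anything more complete.
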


\begin{proof}
    Note that on the product manifold $d_\M^2(x, y) = \sum\limits_{i = 1}^m d_{\M_i}^2(x, y)$.
    The proposition follows immediately since we are optimizing the disjoint objectives $\sigma^2_{fr}(\M_i)$, and the values in our product are disjoint and unrelated.
\end{proof}

\begin{cor}\label{cor:appendixFrechetEquivEuclidean}
    Consider $\R^n$ as a product manifold $\R \times \R \dots \times \R$ $n$ times, then the values in equation (\ref{frechetMeanProduct}) and equation (\ref{frechetVarProduct}) correspond to the vector-valued Euclidean variance and mean.
\end{cor}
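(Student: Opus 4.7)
The plan is to combine Proposition \ref{prop:appendixFrechetEquiv} applied coordinate-wise with the product decomposition supplied by Proposition \ref{thm:appendixFrechetEquivProduct}. The key observation is that each factor $\M_i = \R$ is itself a $1$-dimensional Riemannian manifold to which Proposition \ref{prop:appendixFrechetEquiv} applies (with $n=1$). Thus, for scalar data $\{x_i^{(1)}, \dots, x_i^{(t)}\} \subseteq \R$, I would invoke that proposition to conclude $\mu_{fr:\M_i} = \tfrac{1}{t}\sum_{l=1}^{t} x_i^{(l)}$ and $\sigma^2_{fr:\M_i} = \tfrac{1}{t}\sum_{l=1}^t (x_i^{(l)})^2 - \paren{\tfrac{1}{t}\sum_{l=1}^t x_i^{(l)}}^2$, which are precisely the usual scalar arithmetic mean and variance.

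Next, I would assemble these scalar statements using the product formulas (\ref{frechetMeanProduct}) and (\ref{frechetVarProduct}). By definition, $\mu_{fr:prod}$ and $\sigma^2_{fr:prod}$ list the per-factor quantities as a vector in $\R^n$ and $\R^n$ respectively, giving vectors whose $i$-th entries are exactly the Euclidean scalar mean and variance of the $i$-th coordinate stream of the data. Since the vector-valued Euclidean mean and variance of a dataset $\{x^{(1)}, \dots, x^{(t)}\} \subseteq \R^n$ are defined coordinate-wise, these agree on the nose.

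Finally, I would note that Proposition \ref{thm:appendixFrechetEquivProduct} ensures consistency between the product-level formulation and the direct Fr\'echet definition (\ref{frechetMean}) applied with the Euclidean metric on $\R^n$, since the squared distance splits as $d^2_{\R^n}(x,y) = \sum_{i=1}^n d^2_\R(x_i, y_i)$ and the argmin decouples across coordinates.

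No real obstacle arises; the argument is a bookkeeping verification. The one point requiring care is disambiguating terminology: Proposition \ref{prop:appendixFrechetEquiv} defines the Fr\'echet variance as a \emph{scalar} (the sum of per-coordinate variances), whereas this corollary concerns the \emph{vector-valued} variance. The proof should make explicit that the product construction (\ref{frechetVarProduct}) deliberately keeps the per-coordinate variances unsummed, and that this is exactly what is meant by the vector-valued Euclidean variance in the statement.
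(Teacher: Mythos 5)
Your proposal is correct and follows essentially the same route as the paper: apply Proposition \ref{prop:appendixFrechetEquiv} (in the scalar case) to each factor $\R$ and assemble the per-coordinate means and variances via the product definitions (\ref{frechetMeanProduct}) and (\ref{frechetVarProduct}), matching the coordinate-wise definition of the vector-valued Euclidean mean and variance. Your explicit remark that the product construction keeps the per-coordinate variances unsummed (unlike the scalar Fr\'echet variance of Proposition \ref{prop:appendixFrechetEquiv}) is a useful clarification that the paper makes only implicitly.
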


\begin{proof}
    This follows almost directly from \ref{prop:appendixFrechetEquiv}. In particular, we recall that the Euclidean vector-valued mean and variance of input points $x^{(1)}, \dots, x^{(t)}$ are defined by
    
    \begin{equation}\label{eqn:eucMeanVector}
        \mu_{euc} = \frac{1}{t} \sum_{i = 1}^t x^{(i)}
    \end{equation}
    \begin{equation}\label{eqn:eucVarianceVector}
        \sigma_{euc}^2 = \frac{1}{t} \sum_{i = 1}^t (x^{(i)})^2 - \paren{\frac{1}{t} \sum_{i = 1}^t x^{(i)}}^2
    \end{equation}
    
    Define $\mu_i$ to be the standard Euclidean mean and $\sigma_i^2$ to be the standard Euclidean variance for points $\{x_i^{(1)}, \dots, x_i^{(t)}\}$. We see that the mean coincides as
    
    \begin{align*}
        \mu_{euc} &= \frac{1}{t} \sum_{i = 1}^t x^{(i)}\\
        &=\paren{\frac{1}{t}\sum_{i = 1}^t x_1^{(i)}, \dots, \frac{1}{t}\sum_{i = 1}^t x_n^{(i)}}\\
        &= \paren{\mu_1, \dots, \mu_n}
    \end{align*}
    
    Similarly, we have that, for the variance
    
    \begin{align*}
        \sigma_{euc}^2 &= \frac{1}{t} \sum_{i = 1}^t (x^{(i)})^2 - \paren{\frac{1}{t} \sum_{i = 1}^t x^{(i)}}^2\\
        &= \paren{\frac{1}{t} \sum_{i = 1}^t (x_1^{(i)})^2 - \paren{\frac{1}{t} \sum_{i = 1}^t x_1^{(i)}}^2, \dots, \frac{1}{t} \sum_{i = 1}^t (x_n^{(i)})^2 - \paren{\frac{1}{t} \sum_{i = 1}^t x_n^{(i)}}^2}\\
        &= (\sigma_1^2, \dots, \sigma_n^2)
    \end{align*}
    
    which is what was desired.
\end{proof}

\begin{thm}\label{appendixScalingThm}
    The Riemannian batch normalization algorithm presented in Algorithm \ref{alg:RiemannianBN} is equivalent to the Euclidean batch normalization algorithm when $\M = \R^n$ for all $n$ during training time.
\end{thm}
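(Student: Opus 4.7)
The plan is to view $\R^n$ as a product manifold $\R \times \R \times \dots \times \R$, and then to check that every geometric primitive invoked in Algorithm \ref{alg:RiemannianBN} collapses to its Euclidean counterpart. First I would invoke Corollary \ref{cor:appendixFrechetEquivEuclidean} (together with Proposition \ref{thm:appendixFrechetEquivProduct}) to conclude that, on this product manifold, the call \texttt{FrechetMean}$(\{x_i^{(t)}\})$ returns exactly the per-coordinate Euclidean sample mean $\mu = \tfrac{1}{m}\sum_i x_i^{(t)}$, and the computed Fr\'echet variances agree component-wise with the standard Euclidean variances used by EBN.

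Next I would verify the remaining Riemannian operations. In $\R^n$ with the standard flat metric we have $\log_{\mu}(x) = x - \mu$, $\exp_{\mu'}(v) = \mu' + v$, and parallel transport $PT_{\mu \to \mu'}$ is the identity map (since geodesics are straight lines and the metric is constant). Substituting these identities into the update rule
\[
    \tilde{x}_i^{(t)} \;=\; \exp_{\mu'}\!\paren{\tfrac{\sigma'}{\sigma}\, PT_{\mu \to \mu'}\!\paren{\log_{\mu} x_i^{(t)}}}
\]
reduces it to $\tilde{x}_i^{(t)} = \mu' + \tfrac{\sigma'}{\sigma}(x_i^{(t)} - \mu)$, which is precisely the scale-and-shift step performed by Euclidean batch normalization with target mean $\mu'$ and target variance $(\sigma')^2$.

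Finally, I would observe that the running statistics $\mu_{test}$ and $\sigma_{test}$, when specialized to $\R^n$, reduce to the usual Euclidean running averages: the weighted Fr\'echet mean of $\{\mu_{test}, \mu\}$ with weights $\{\eta, 1-\eta\}$ becomes the convex combination $\eta \mu_{test} + (1-\eta)\mu$, and the variance update is already a scalar (or per-coordinate) running average. Combining these observations with the identification in the previous paragraph shows that the sequence of transformed outputs agrees with the EBN output coordinate-wise during training.

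The only delicate point, and the step I would treat most carefully, is the reconciliation between the scalar Fr\'echet variance $\sigma^2 \in \R$ defined in Algorithm \ref{alg:RiemannianBN} and the vector-valued variance used by EBN. This is precisely why the product-manifold viewpoint is essential: by decomposing $\R^n$ into $n$ copies of $\R$, each coordinate is normalized against its own one-dimensional Fr\'echet variance, and Corollary \ref{cor:appendixFrechetEquivEuclidean} supplies the exact componentwise match. Once this identification is made, every other equality in the proof is a routine substitution of the Euclidean forms of $\exp$, $\log$, and $PT$.
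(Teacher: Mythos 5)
Your proposal is correct and follows essentially the same route as the paper: invoke Corollary \ref{cor:appendixFrechetEquivEuclidean} (via the product-manifold identification of $\R^n$) for the mean/variance, then use the trivial Euclidean forms $\exp_x(v)=x+v$, $\log_x(y)=y-x$, $PT_{x\to x'}=\mathrm{id}$ to reduce the shift-and-scale update to the EBN formula with $\beta=\mu'$, $\gamma=\sigma'$. Your remarks on the running statistics $\mu_{test},\sigma_{test}$ are harmless but unnecessary, since (as the paper notes) the shift-and-scale step is the only action affecting the output at training time, and your explicit handling of the scalar-versus-coordinatewise variance via the product decomposition is a slightly more careful rendering of the point the paper addresses with its vector-valued variance construction.
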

\begin{proof}

We know from Corollary \ref{cor:appendixFrechetEquivEuclidean} that the Fr\'echet mean and variance computed in the first two steps correspond to Euclidean mean and variance.

The core of this proof lies in the fact that the exponential and logarithmic maps on $\R^n$ are trivial. This is because $T_x \R^n = \R^n$, and as a manifold $\R^n$ exhibits the same distance function as its tangent space. Consider the Euclidean batch normalization formula given below:
\begin{equation}\label{batchNormEquations}
    x_i'= \gamma\frac{x_i - \mu_\mathcal{B}}{\sigma_\mathcal{B}} + \beta
\end{equation}
where we remove the $\epsilon$ term in \cite{Ioffe2015BatchNA} for clarity. We know that in Euclidean space, $\exp_x \vect{v} = x + \vect{v}$ and $\log_x y = y - x$ and $PT_{x \to x'}(\vect{v}) = \vect{v}$. Then we can rewrite equation (\ref{batchNormEquations}) as:
\begin{equation}\label{batchNormEquivalence}
    x_i' = \exp_\beta\left(\gamma \frac{\log_{\mu_\mathcal{B}} x_i}{\sigma_\mathcal{B}}\right)= \exp_\beta\left(\frac{\gamma}{\sigma_\mathcal{B}} PT_{\mu_{\mathcal{B}} \to \beta} (\log_{\mu_\mathcal{B}} x_i)\right)
\end{equation}
which corresponds to the ``shifting-and-scaling" part of Algorithm \ref{alg:RiemannianBN}, which is the only action during train time.

\end{proof}

\begin{remark}
    We note that in our testing procedure (when acquiring the mean set statistics), we utilize the notion of momentum to define the test statistics instead of an iterative averaging method. To understand why this is necessary, we note that this is due to the lack of a non-closed formulation for the Fr\'echet mean. In particular if we have an iterative averaging procedure defined by
    
    \begin{equation*}
        \mu(\{x^{(1)}, \dots, x^{(n + 1)}\}) = f(\mu(\{x^{(1)}, \dots, x^{(n)}\}, x^{(n + 1)})
    \end{equation*}
    
    for some function $f$, then with this iterative procedure we can define the Fr\'echet mean by
    
    \begin{equation*}
        \mu(\{x^{(1)}, \dots, x^{(n)}\}) = f(\dots f(f(x^{(1)}, x^{(2)}), x^{(3)}), \dots, f^{(n)})
    \end{equation*}
    
    If one can solve this, then we would have a closed form for the Fr\'echet mean and a way to iterativly update, allowing us to fully generalize the Euclidean Batch Normalization algorithm (with both training and testing algorithms) to general Riemannian manifolds.
\end{remark}

\begin{remark}
    In this construction, we utilize product manifolds as a substitute for our Riemannian manifold $\R^n$. However, another natural formulation would be to normalize in the tangent space $T_m\M$. To see why this works, at least in the Euclidean space, note that $T_x\R^n = \R^n$ and so normalizing variance in the tangent space would be equivalent. However, this is due to the fact that $T_x\R^n$ has a coordinate basis which is invariant under parallel transport.
    
    However, on general Riemannian manifolds, this is not necessarily the case. For example, on the ball $\mathbb{D}_{-1}^n$, we normally define the basis as vectors of $\R^n$. But, we note that when we transport these vectors (say from two points on a non-diameter geodesic) then the bases vectors change. This means that it becomes difficult to establish a standard coordinate basis for variance normalization. This is a property called \textbf{holonomy} which informally examines how much information is lost under parallel transport because of curvature.
\end{remark}

\section{Additional Experimental Results and Details}




\subsection{Training and Architectural Details}

All experiments on link-prediction and batch normalization were run using modifications of the code provided by \citet{Chami2019HyperbolicGC}. Every experiment was run on a NVIDIA RTX 2080Ti GPU until improvement, as measured by the ROC AUC metric, had not occurred over the course of 100 epochs (this is also the default stopping criterion for the experiments in \citet{Chami2019HyperbolicGC}).

For clarity, we describe the original proposed hyperbolic graph convolution proposed by HGCN \cite{Chami2019HyperbolicGC} and our modification. The original hyperbolic convolution involves a hyperbolic linear transformation (down from the number of dataset features to 128), followed by a hyperbolic aggregation step, followed by a hyperbolic activation. Our modification uses instead a hyperbolic linear transformation (down from the number of input features to 128), followed by the differentiable Fr\'echet mean operation we developed, followed by a hyperbolic activation. 

For the batch normalization experiments, the baseline encoder was a hyperbolic neural network \cite{Ganea2018HyperbolicNN} with only two layers (the first going from the input feature dimension to  $128$, and the second going from $128$ to $128$). Our modification instead used two hyperbolic linear layers of the same dimensions with hyperbolic batch normalization layers following each layer.

\subsection{Pseudo-means Warp Geometry}
In our main paper, we frequently state that pseudo-Fr\'echet means warp geometry and are hence less desirable than the true Fr\'echet mean. Here we describe two common pseudo-means used in the literature and illustrate evidence for this fact.

One of these means is the tangent space aggregation proposed by \cite{Chami2019HyperbolicGC}. We saw from the main paper that this mean yielded worse performance on link prediction tasks when compared to the Fr\'echet mean. We can view this as evidence of the fact that taking the mean in the tangent space does not yield a representation that lets the model aggregate features optimally. Another general closed form alternatives to the Fr\'echet mean that has gained some traction is the Einstein midpoint; this has been used, due to its convenient closed form, by the Hyperbolic Attention Networks paper \cite{Glehre2018HyperbolicAN}. Although it has a closed form, this mean is not guaranteed to solve the naturally desirable Fr\'echet variance minimization.

Here we experimentally illustrate that both of these means do not in general attain the minimum Fr\'echet variance, and present the relative percentage by which both methods perform are worse (than the optimal variance obtained by the true mean). We conduct mean tests on ten randomly generated 16-dimensional on-manifold points for 100 trials. The points are generated in the Klein model of hyperbolic space, the Einstein midpoint is taken, then the points are translated into the hyperboloid model where we compute the mean using tangent space aggregation and our method. This is done for fair comparison, so that all methods deal with the same points. The results are shown in Table \ref{tab:warpedgeometry}. Notice that the tangent space aggregation method is very off, but this is somewhat expected since it trades global geometry for local geometry. The Einstein midpoint performs much better in this case, but can still be quite off as is demonstrated by its $>5\%$ relative error average and high $13\%$ relative error standard deviation.

\begin{table}[!htb]
\centering
\caption{\label{tab:warpedgeometry} Fr\'echet variance of various pseudo-means; we run our approach to become accurate within $\epsilon=10^{-12}$ of the true Fr\'echet mean, and then report how much worse the other means are in terms of attaining the minimum variance. The average over $100$ trials and corresponding standard deviation is reported. The primary baselines are the tangent space aggregation \cite{Chami2019HyperbolicGC} and the Einstein midpoint \cite{Glehre2018HyperbolicAN} methods.}. 
\vspace{0.2em}
\begin{tabular}{cccc}
    \toprule
    Mean & Relative Error (Fr\'echet Variance) & Distance in Norm from True Mean\\
    \midrule
    Tangent space aggregation \cite{Chami2019HyperbolicGC} &  $246.7\%$\tiny$\pm 4.5\%$ & $28207146.1$\tiny$\pm 8300242.0$ \\
    Einstein midpoint \cite{Glehre2018HyperbolicAN} & $6.8\%$\tiny$\pm 13.5\%$ & $30.3$\tiny$\pm 65.3$ \\
    Ours & $0.0\%$ & $10^{-12}$ \\
    \bottomrule
\end{tabular}
\end{table}

\subsection{Fr\'echet Mean Forward Computation: Reliance on Dimension and Number of Points}
\label{sec:frechetmeanforward_extra}

In this section we provide additional results that highlight our forward pass algorithm's performance relative to baselines. Specifically, we investigate how its performance changes when we increase the dimension of the underlying space and the number of points in the mean computation. For space dimensions of $\{10,20,50\}$ and for numbers of points equal to $\{10,100,1000\}$, we evaluate $10$ trials and report iteration counts and runtimes. Results for the Poincar\'e disk model are given in Table \ref{tab:frechetmeanextrapoin} (Hyperboloid model results are similar, but are not given here for brevity). Experiments were run on an Intel Skylake Core i7-6700HQ 2.6 GHz Quad core CPU. The grid search for the RGD baseline is performed in a manner similar to what was performed in the main paper\footnote{The grid search starts from $lr=0.2$ and goes to $lr=0.4$ in increments of $0.01$ for the Poincar\'e ball model (except dimension $50$, which starts from $lr=0.1$ and goes to $lr=0.3$).}. Note that some general trends can be observed. For instance, it appears that with more points, the convergence improves across all methods (i.e. fewer iterations are needed). Additionally, our method is fairly insensitive to both dimensions and number of points, ensuring fast computation of the Fr\'echet mean in a variety of settings, and does not break for higher dimensions like Karcher flow.

 
\begin{table}[!htb]
\centering
\caption{\label{tab:frechetmeanextrapoin} Empirical computation of the Fr\'echet mean for the Poincar\'e disk model; the average number of iterations required to become accurate within $\epsilon=10^{-12}$ of the true Fr\'echet mean are reported, together with runtime (in the format iterations $\vert$ runtime). $10$ trials are conducted, and standard deviation is reported. The primary baselines are the RGD \cite{Udriste1997} and Karcher Flow \cite{karcher1977riemannian} algorithms. An asterisk indicates that the algorithm did not converge in $50000$ iterations.}.
\vspace{2pt}
\small
\begin{tabular}{cc|cc|cc|cc}
    \toprule
    & & \multicolumn{2}{c|}{10 dim} & \multicolumn{2}{c|}{20 dim} & \multicolumn{2}{c}{50 dim} \\
    & & Iterations & Time (ms) & Iterations & Time (ms) & Iterations & Time (ms) \\
    \midrule
    \parbox[t]{2mm}{\multirow{4}{*}{\rotatebox[origin=c]{90}{\small{10 pts}}}} & RGD ($lr=0.01$) & $898.5$\tiny$\pm 13.9$ & \small$1365.4$\tiny$\pm 40.9$ & $681.1$\tiny$\pm 14.8$\small & $1090.9$\tiny$\pm 79.3$ & $425.5$\tiny$\pm 9.0$ & \small$557.5$\tiny$\pm 31.0$ \\
    & Karcher Flow & $25.4$\tiny$\pm 2.2$ & \small$30.4$\tiny$\pm 1.9$ & $344.3$\tiny$\pm 347.3$ & \small$340.6$\tiny$\pm 307.2$ & * & * \\
    & Ours & $\mathbf{11.9}$\tiny$\pm 0.7$ & \small$\mathbf{6.5}$\tiny$\pm 0.9$ & $\mathbf{13.4}$\tiny$\pm 0.7$ & \small$\mathbf{17.4}$\tiny$\pm 14.3$ & $\mathbf{15.3}$\tiny$\pm 0.5$ & \small$\mathbf{7.5}$\tiny$\pm 0.6$ \\
    \cline{2-8}
    \rule{0pt}{3ex} & RGD + Grid Search on $lr$ & $9.6$\tiny$\pm 0.5$ & \small$7151.7$\tiny$\pm 313.2$ & $10.6$\tiny$\pm 0.5$ & \small$7392.0$\tiny$\pm 674.5$ & $11.7$\tiny$\pm 0.5$ & \small$12523.8$\tiny$\pm 4330.9$\\
    \midrule
    \parbox[t]{2mm}{\multirow{4}{*}{\rotatebox[origin=c]{90}{\small{100 pts}}}} & RGD ($lr=0.01$) & $862.4$\tiny$\pm 9.9$ & \small$1349.1$\tiny$\pm 44.6$ & $635.1$\tiny$\pm 6.3$ & \small$1021.5$\tiny$\pm 57.1$ & $382.1$\tiny$\pm 1.3$ & \small$547.0$\tiny$\pm 10.2$ \\
    & Karcher Flow & $25.7$\tiny$\pm 0.9$ & \small$28.1$\tiny$\pm 2.3$ & $248.3$\tiny$\pm 41.8$ & \small$262.1$\tiny$\pm 42.4$ & * & * \\
    & Ours & $\mathbf{9.6}$\tiny$\pm 0.5$ & \small$\mathbf{6.1}$\tiny$\pm 0.8$ & $\mathbf{10.0}$\tiny$\pm 0.0$ & \small$\mathbf{17.2}$\tiny$\pm 6.3$ & $\mathbf{9.9}$\tiny$\pm 0.3$ & \small$\mathbf{6.1}$\tiny$\pm 1.0$ \\
    \cline{2-8}
    \rule{0pt}{3ex} & RGD + Grid Search on $lr$ & $7.8$\tiny$\pm 0.4$ & \small$6802.9$\tiny$\pm 298.1$ & $8.8$\tiny$\pm 0.4$ & \small$7435.0$\tiny$\pm 481.7$ & $9.4$\tiny$\pm 0.5$ & \small$33899.7$\tiny$\pm 17189.6$ \\
    \midrule
    \parbox[t]{2mm}{\multirow{4}{*}{\rotatebox[origin=c]{90}{\small{1000 pts}}}} & RGD ($lr=0.01$) & $857.6$\tiny$\pm 7.5$ & \small$1991.5$\tiny$\pm 257.9$ & $631.4$\tiny$\pm 4.0$ & \small$1649.3$\tiny$\pm 207.6$ & $378.2$\tiny$\pm 2.0$ & \small$1309.8$\tiny$\pm 624.8$ \\
    & Karcher Flow & $25.9$\tiny$\pm 0.3$ & \small$43.1$\tiny$\pm 4.4$ & $228.7$\tiny$\pm 7.4$ & \small$397.7$\tiny$\pm 51.2$ & * & * \\
    & Ours & $\mathbf{9.0}$\tiny$\pm 0.0$ & \small$\mathbf{7.4}$\tiny$\pm 1.9$ & $\mathbf{9.0}$\tiny$\pm 0.0$ & \small$\mathbf{13.7}$\tiny$\pm 11.4$ & $\mathbf{8.0}$\tiny$\pm 0.0$ & \small$\mathbf{8.2}$\tiny$\pm 1.2$ \\
    \cline{2-8}
    \rule{0pt}{3ex} & RGD + Grid Search on $lr$ & $7.0$\tiny$\pm 0.0$ & \small$9634.3$\tiny$\pm 864.5$ & $7.9$\tiny$\pm 0.3$ & \small$11818.3$\tiny$\pm 903.1$ & $8.8$\tiny$\pm 0.4$ & \small$53194.4$\tiny$\pm 7788.3$ \\
    \bottomrule
\end{tabular}
\end{table}

\end{document}